\documentclass[11pt,a4paper]{article}

 \usepackage[utf8]{inputenc} %
 \usepackage[T1]{fontenc}    %
\usepackage[margin=1in]{geometry}
\usepackage{hyperref}       %
 \usepackage{booktabs}       %
 \usepackage{amsfonts}       %
 \usepackage{nicefrac}       %
 \usepackage{microtype}      %
 \usepackage{xcolor}         %
\usepackage{enumitem}

\usepackage{subcaption}
\usepackage{stfloats}

\usepackage{amsthm, amssymb}
\usepackage{amsmath}
\usepackage{graphicx}
\usepackage{xspace}
\usepackage{enumitem}
\usepackage{tabularx}
\usepackage{xcolor}
\usepackage{graphicx} 
\usepackage{tikz}
\usepackage{bm}
\usepackage{wrapfig}
\usepackage{thm-restate}

\usepackage{booktabs}
\usepackage[table]{xcolor}

\theoremstyle{plain}
\newtheorem{theorem}{Theorem}[section]
\newtheorem{lemma}[theorem]{Lemma}

\newtheorem{observation}[theorem]{Observation}

\newtheorem{corollary}[theorem]{Corollary}
\newtheorem{claim}[theorem]{Claim}
\theoremstyle{definition}
\newtheorem{definition}[theorem]{Definition}

\newcommand{\R}{\mathbb{R}}

\newcommand{\calP}{{\mathcal P}}

\newcommand{\eps}{\varepsilon}

\newcommand{\LP}{\mathsf{LP}}
\newcommand{\DP}{\mathsf{DP}}
\newcommand{\CP}{\mathsf{CP}}

\newcommand{\opt}{\mathsf{opt}}
\newcommand{\alg}{\mathsf{alg}}

\newcommand{\hy}{\hat y}
\newcommand{\hx}{\hat x}
\newcommand{\hp}{\hat p}

\newcommand{\bone}{\textsf{1}}

\newcommand{\pf}{\mathrm{PF}}

\newcommand{\PM}{\mathsf{PM}} 
\usepackage[
 backend=biber, style=alphabetic, citetracker, hyperref=true,
 maxcitenames=2, %
 sortcites,      %
 sorting=nyt,
 maxbibnames=12, %
 giveninits,     %
 date=year,      %
 isbn=false,     %
 url=false,      %
 doi=false,      %
 eprint=false,   %
]{biblatex}

\newbibmacro{string+doiurlisbn}[1]{
  \iffieldundef{doi}{
    \iffieldundef{url}{
      #1
    }{
      \href{\thefield{url}}{#1}
    }
  }{
    \href{http://dx.doi.org/\thefield{doi}}{#1}
  }
}

\DeclareFieldFormat
[article,inbook,incollection,inproceedings,patent,thesis,unpublished]
  {title}{\usebibmacro{string+doiurlisbn}{#1}}

\usepackage{hyperref}
\hypersetup{colorlinks,citecolor=green!60!black,linkcolor=blue!70!black,urlcolor=blue!70!black,breaklinks=true}
\usepackage[capitalize, noabbrev]{cleveref}
\usepackage{xurl}

\usepackage[colorinlistoftodos,prependcaption,textsize=scriptsize]{todonotes}

\title{Learning-Augmented Online Scheduling\\ with Parsimonious Preemption}

\author{Mugen Blue\thanks{University of California, Santa Cruz, US. \texttt{(msblue@ucsc.edu, sim9@ucsc.edu)}. Supported in part by NSF awards CCF-2535599, 2537126, and 2423106, and by ONR grant N00014-22-1-2701.} \and Sungjin Im\footnotemark[1] \and Alexander Lindermayr\thanks{Institut für Mathematik, Technische Universität Berlin, Germany. \texttt{alexander.lindermayr@tu-berlin.de}.}}
\date{}

\begin{document}

\maketitle

\begin{abstract}
Learning-augmented algorithms have emerged as a powerful 
paradigm to surpass traditional worst-case lower bounds 
by integrating potentially noisy predictions. While this 
framework has seen success in online scheduling, existing work 
primarily optimizes job latency while relying on 
frequent, ``blind'' preemptions. This ignores the fundamental 
trade-off between algorithmic performance and preemption complexity.
We provide the first systematic study of learning-augmented 
scheduling that curbs preemption while optimizing latency. We 
establish that the gap between theoretical latency bounds and 
preemption overhead can be bridged with solid analytical foundations.
Our results include $O(1)$-competitive algorithms for single and 
unrelated parallel machines with only $O(1)$ preemptions per job under 
accurate predictions, with overhead scaling logarithmically with 
the prediction error. By providing the first bounded-preemption guarantees for unrelated and malleable machines, we extend the theoretical reach of the learning-augmented framework to more constrained and realistic settings. Finally, our algorithms are validated through experiments. 
\end{abstract}

\section{Introduction}

The paradigm of \emph{learning-augmented algorithms} has recently emerged as a powerful framework for overcoming the limitations of worst-case analysis \cite{mitzenmacher2022algorithms}. By incorporating potentially noisy machine-learning predictions into algorithm design, this framework enables algorithms that are both \emph{consistent}---near-optimal when predictions are accurate---and \emph{robust}---with worst-case guarantees even under adversarial predictions. This approach is especially effective for online algorithms, where predictions can mitigate future uncertainty that is unavoidable in traditional worst-case models.

Online scheduling is a prominent application of this paradigm. Traditionally, algorithms often suffer from limited information about key parameters; a central example is the \emph{non-clairvoyance}, where job sizes are unknown until completion \cite{MPT94, edmonds1999scheduling}. A growing literature shows that online algorithms can substantially benefit from job-size predictions when minimizing latency \cite{purohit2018improving, mitzenmacher2019scheduling, ImKQP23, azar2021flow, ZLZ22}.

However, in practical systems such as large-scale data centers, \emph{preemption} and \emph{migration} can incur substantial overheads, including context-switching costs and cache invalidation \cite{mogul1991effect, tsafrir2007context, verma2015large}. Although preemption-aware scheduling is well studied in systems research, from mechanisms such as preemption thresholds \cite{saksena2000scalable} and deferred-preemption heuristics \cite{davis2012optimal} to production-scale data-center resource management \cite{verma2015large, chen2017preemptive}, these approaches are largely qualitative and offer few provable guarantees on job latency, particularly under prediction error.

We argue that the absence of an analytical framework for this trade-off remains a significant gap in the learning-augmented literature. We address this gap by using predictions to design the first algorithms that minimize total completion time while using preemption parsimoniously. To our knowledge, this is the first systematic study to analytically ground the trade-off between job latency and preemption complexity under job-size predictions. Our results move beyond the ``blind'' preemption required by classical non-clairvoyant models and provide the first bounded-preemption guarantees for unrelated and malleable machine settings.

\subsection{Summary of our results}

\noindent
\textbf{Problem setup:} We assume all $n$ jobs are available at time $t=0$, because %
i) simultaneous arrival serves as a fundamental baseline in scheduling theory; and  ii) while extensions to arbitrary arrival times are possible, they can obscure the core analytical ideas. We therefore provide a sketch of the extensions in the appendix to maintain a transparent presentation of our main results. %

For each job $j$, we know its predicted requirement $\hat{p}_j$. In practice, these predictions $\hat{p}_j$ are typically derived by profiling the resource consumption of recurring jobs or by applying statistical models to historical execution traces from the cluster \cite{faisal2024will,park20183sigma}. A job~$j$ completes once it has been processed by its requirement $p_j$. 
However, the exact processing requirements remain unknown until completion; this setting is referred to as \emph{non-clairvoyant scheduling} (when predictions are unavailable), in contrast to \emph{clairvoyant scheduling}, where job sizes are known (upon  arrival). 

We focus on minimizing the total completion time objective, $\sum_j C_j$, where $C_j$ is the completion time of job $j$. Since we evaluate the online algorithm relative to an optimum multiplicatively, our goal is equivalently stated as minimizing the average completion time. An algorithm is %
$c$-\emph{competitive} if its total completion time is at most $c$ times that of the offline optimum for all %
inputs.

 We first present our findings under the assumption that all predictions are underestimates (i.e., $\hat{p}_j \leq p_j$ for all jobs $j$). Subsequently, we introduce a robust framework to handle overestimated job requirements. We define $D(p, \hat{p}) := \sum_j (\log_2 (p_j / \hat{p}_j) + 1)$ as the ideal benchmark for the number of preemptions; we denote this simply as $D$ when the context is clear. This benchmark is motivated by the fact that any non-clairvoyant algorithm performing $o(\log p_j)$ preemptions, even for a single job $j$, necessarily has a competitive ratio of $\Omega(n)$ (see Theorem~\ref{thm:lower-bound-nonclairvoyant} in the appendix). Consequently, if $\hat{p}_j = 1$ for all $j$, any algorithm must perform $\Omega(D)$ preemptions to remain competitive.

Our results are summarized below. This is the \textit{first} work to demonstrate how predictions can be utilized to minimize preemption. We provide constant-competitive algorithms while bounding the number of preemptions per job to within a constant factor of the benchmark $D$. 

\begin{itemize}
    \item \textbf{Single machine:}  
    In the single-machine setting, which serves as a warm-up for our study, only one job can be processed at a time. For any $\varepsilon > 0$, we provide an algorithm that is $(2 + \varepsilon)$-competitive while incurring at most $O(\frac{1}{\varepsilon} D)$ preemptions (Theorem~\ref{thm:single}). This guarantee extends to identical parallel machines (Theorem~\ref{thm:identical-parallel}).

    \item \textbf{Unrelated machines:} This is our main result. In this  heterogeneous parallel machine environment, each job~$j$ is processed at a rate of  $\lambda_{ij}$ on machine $i \in M$. A machine can process at most one job at a time. For this problem, we give a $14.48(1+\varepsilon)$-competitive algorithm that performs at most $O(\frac{1}{\varepsilon^3} D)$ preemptions (Theorems~\ref{thm:u-preemption-bound} and \ref{thm:u-cr}).
    
    \item \textbf{Malleable job scheduling:} In this setting, jobs can be assigned to multiple machines (processors) simultaneously to expedite processing subject to eligibility constraints. Each job $j$ is associated with a concave speed-up function that determines the processing rate based on the number of machines assigned. 
    For any $\varepsilon > 0$, we provide an algorithm that is $27 \frac{e}{e-1}(1+\varepsilon)$-competitive, making at most $O(\frac{1}{\varepsilon^3} D)$ preemptions (Theorem~\ref{thm:m-u}); see Theorem~\ref{thm:m-i} for a slightly better bound on identical parallel machines. 
\end{itemize}

We remark that our results extend readily to weighted jobs by considering the weighted objective $\sum_j w_j C_j$ and the weighted preemption benchmark $D(p, \hat{p}) := \sum_j w_j (\log_2 (p_j / \hat{p}_j) + 1)$. For clarity of presentation, we focus on the unweighted case throughout the paper.

We also validate our algorithms through experiments, including heuristic adaptations of the theoretical versions developed in this paper. The results demonstrate that our algorithms effectively optimize total completion time while utilizing preemptions parsimoniously

\paragraph{Handling overestimation.} Interestingly, overestimates are not symmetric to underestimates. To minimize total completion time, one must intuitively complete  many jobs as quickly as possible~\cite{smith1956various}. %
Without knowledge about which jobs are short, we must hedge against short jobs being delayed by long ones via exponentially increasing our guesses; if a job does not complete after $(1+\delta)^k$ units of processing, we switch to other jobs, assuming its requirement is at least $(1+\delta)^{k+1}$, for some constant $\delta > 0$. Underestimations allow us to skip many initial guesses, but overestimations are less helpful: %
if all jobs are overestimated to the same requirement, the predictions provide no useful prioritization.

In fact, if we treat overestimated jobs identically to underestimated ones, the competitive ratio can degrade to $O(\max_j \hat{p}_j / p_j)$. A more robust approach is to \emph{intentionally underestimate} job requirements by scaling down the initially provided estimates. This ensures that only a small number of jobs, $g$, remain overpredicted. We execute the algorithm for underestimated jobs until the number of unfinished jobs drops to $g/\varepsilon$. At this transition point, we reset the predictions for all remaining jobs to ensure they are henceforth treated as underestimates. %
This strategy maintains the competitive ratio while incurring only $O(\frac{g}{\varepsilon^2} \log_2 p_{\max})$ additional preemptions; %
see Theorem~\ref{thm:single-weakly-enforce} for details. 

As long as $g$ is sublinear in $n$ and job requirements are polynomial in $n$, a common and realistic scenario, the impact of overestimated jobs on the total number of preemptions is asymptotically negligible.
Furthermore, a single global parameter $g$, which represents an upper bound on the number of overestimations, is arguably easier to predict accurately than jobs' individual sizes. This strategy successfully extends to the other settings considered in this work (Section~\ref{sec:u-over}).

\subsection{Our approaches and contributions}\label{sec:intro-techniques}

To understand the challenge of minimizing preemptions, note that we cannot use the popular Round-Robin (RR) algorithm. 
RR cycles through active jobs and processes each for one unit, resulting in prohibitively many preemptions. 
Despite this drawback, RR has been widely used in the learning-augmented %
literature because it is memoryless: 
its execution depends only on the set of currently active jobs~\cite{purohit2018improving,LindermayrM25,ImKQP23}. Thus, if each instantaneous schedule 
approximates RR, we obtain a comparable performance guarantee; this is the basis of our approach.

To avoid excessive preemptions, we consider a less preemptive algorithm called Multi-Level Feedback (MLF) \cite{MPT94}. 
MLF maintains multiple FIFO queues to process jobs; for some constant $\delta > 0$, the $k$-th queue contains 
jobs that have been processed for at least $(1+\delta)^{k}$ and less than $(1+\delta)^{k+1}$ units of time. 
A non-clairvoyant algorithm, given no predictions, initially 
places each job in the queue corresponding to the smallest $k \geq 0$.

In our variant, which we call \textbf{Predicted Multi-Level Feedback (PMLF)}, a new job starts in the queue 
corresponding to its predicted requirement $\hat{p}_j$, rather than in the first queue. If all predictions are 
underestimates ($\hat{p}_j \leq p_j$), we show that PMLF achieves an $O(1)$-competitive ratio 
while incurring $O(D) = O( \sum_j (1 +  \log_{1+\delta} (p_j / \hat{p}_j) ) )$ preemptions. Hence, when predictions are accurate, we perform only $O(1)$ preemptions per job on average.

Our main result is the \textbf{Simulated Non-preemptive Adaptive Prediction (SNAP)} algorithm for unrelated 
machines. SNAP is designed to handle the complexities of unrelated machines, the most general 
parallel-machine model in classical scheduling theory \cite{lenstra1990approximation,skutella2001convex,HSSW97}. Here, we cannot simply use a processing queue based 
on elapsed work, because the processing speeds $\lambda_{ij}$ are unique to each job-machine pair: a 
given machine may be highly efficient for one job and extremely slow for another. 

A prominent non-clairvoyant algorithm for unrelated machines is Proportional Fairness (PF), which identifies job processing rates that maximize their product, a task reducible to a convex program. This is not a computational bottleneck; rather, it enables efficient schedule computation, since the objective can be approximated to arbitrary precision in near-linear time \cite{criado2022fast}.
However, a PF schedule is highly fractional (it alternates between multiple feasible schedules\footnote{A feasible schedule at each time can be thought of as a matching between jobs and machines.}) and is 
therefore highly preemptive. Furthermore, PF must recompute the %
rates whenever a job arrives or completes. 

To address this, %
SNAP operates through a sequence of disjoint epochs. 
At the start of each epoch, the algorithm runs PF to compute the desired job processing rates. It then 
uses predicted job requirements to define the next checkpoint for each job: the 
smallest power of $1+\delta$ that is greater than both $\hat{p}_j$ and the current amount of 
processing the job has received. This strategy limits the total number of preemptions 
to a logarithmic factor of the multiplicative estimation error. We then choose an epoch duration so that, under PF, a $\beta$-fraction of jobs reach their next checkpoints. The preemptions 
made in the epoch are charged to the jobs that reached their checkpoints, and our algorithm remains $O(1)$-competitive 
because we follow PF for a $1-\beta$ fraction of jobs. Finally, we convert this virtual %
preemptive 
schedule into a non-preemptive schedule using an offline rounding procedure and run it. %

For malleable jobs, however, we use a different approach. The reason is that the approximate 
super-additivity property is not known to hold for this problem, unlike in the unrelated-machines setting 
(see Lemma~\ref{lem:decomposition} for details). Therefore, we first analyze a more robust version 
of the %
PF algorithm. It is known that PF is $O(1)$-competitive for a general class of scheduling problems, called Polytope Scheduling (see Section~\ref{sec:framework-psp} for the definition) \cite{ImKM18,JLM25}, which encompasses the malleable-jobs setting. 
To make PF less preemptive, we consider a version 
that does not change its rates until a constant fraction of jobs reach their next checkpoint. 
Here, unlike in SNAP, where we use the known performance guarantee of PF as a 
black box, we directly modify the analysis of PF from \cite{ImKM18,JLM25}. This approach incurs at most 
$O(\frac{1}{\varepsilon^3}D)$ processing rate changes in total, and by simulating this algorithm as in SNAP with 
predictions, we obtain the desired result.

\subsection{Further related work}

We focus on the most closely related work here and defer a broader discussion to Appendix~\ref{app:related}. 

\textbf{Single and identical machines.} In the clairvoyant setting, Shortest Job First (SJF), which processes jobs in non-decreasing order of processing time, is optimal for total completion time~\cite{smith1956various}. In the non-clairvoyant setting, RR is best-possible $2$-competitive \cite{MPT94}, but incurs excessively many preemptions. Multilevel (Adaptive) Feedback (MLF) reduces preemptions while remaining $O(1)$-competitive, and can be parameterized to be $(2+\varepsilon)$-competitive for any $\varepsilon > 0$ \cite{MPT94}. These guarantees for SJF and RR %
extend to parallel identical machines \cite{MPT94, BBEM12}.

\textbf{Unrelated machines.}  In the clairvoyant online setting, utilizing immediate-dispatch and non-migratory algorithms while assuming each machine runs SPT, %
assigning an arriving job to the machine with the least marginal increase in the objective is $O(1)$-competitive \cite{AnandGK12,GuptaMUX20,LindermayrM25}. For non-clairvoyant algorithms, SelfishMigrate and Proportional Fairness (PF) were analyzed with competitive ratios of $32$ and $3.62$, respectively \cite{ImKMP14, JLM25}. Clairvoyant algorithms were also explored in \cite{HSSW97, ChakrabartiPSSSW96, LindermayrMR23}.

\textbf{Online scheduling with predictions.} 
Various objectives have been considered in the learning-augmented setting, see \cite{alps} for an overview. Examples include total completion time \cite{purohit2018improving, ImKQP23, LindermayrM25, DinitzILMV22,LassotaLMS23}, total flow time \cite{azar2021flow, mitzenmacher2019scheduling}, makespan minimization \cite{lattanzi2020online, li2021online, BalkanskiOSW25}, and energy consumption \cite{bamas2020learning, AntoniadisGS22}.

\section{Single machine}

We start with our first result, which is for scheduling
on a single machine. 
We first present our algorithm Predicted Multi-Level Feedback (PMLF) %
for a hyperparameter $\delta > 0$: 
\begin{itemize}[nosep]
	\item Maintain a set of FIFO queues $\{Q_i\}_{i \geq 0}$. Initially, place a job $j$ into $Q_k$ where $\lfloor \log_{1+\delta} \hat p_j \rfloor$.
	\item At any time $t$, work on the front job of the non-empty queue of the lowest index.
	\item If $j \in Q_i$ was processed by in total by $(1+\delta)^{i+1}$, move $j$ from $Q_i$ to the end of $Q_{i+1}$.
\end{itemize}
PMLF becomes MLF when $\hat p_j = 1$ for each job $j$ (under the standard assumption that $p_j \geq 1$).
For now, we assume that $\hp_j \leq p_j$ for each job~$j$; we will soon discuss how to relax this assumption.

\begin{restatable}{theorem}{thmSingle}
    \label{thm:single}
    For any constant $\delta > 0$,
PMLF is $(2+2\delta)$-competitive for minimizing the total completion time on a single machine and %
performs at most $O(\frac{1}{\delta}\log_2(p_j / \hat p_j))$ preemptions for each job $j$ if there are only underpredictions.
\end{restatable}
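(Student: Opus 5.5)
The proof has two parts: a preemption count, which follows directly from the queue structure, and a competitive ratio, obtained by a pairwise delay-charging argument in the style of the classical MLF analysis of \cite{MPT94}.

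\textbf{Preemption bound.} Job $j$ is preempted only when it leaves a queue, or when it sits at the front of $Q_i$ and a new job enters a lower queue. With all jobs released at time $0$, no job ever enters a queue of index below the current minimum non-empty queue. Hence once $j$ starts being processed in $Q_i$, it runs without interruption until it completes or reaches $(1+\delta)^{i+1}$ total processing and moves to $Q_{i+1}$.

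So $j$ is preempted at most once per queue it visits. It starts in queue $\lfloor\log_{1+\delta}\hat p_j\rfloor$ and completes by queue $\lfloor\log_{1+\delta}p_j\rfloor$. The number of queues visited is therefore at most
\[
\log_{1+\delta}(p_j/\hat p_j)+2 = O\bigl(\tfrac1\delta\log_2(p_j/\hat p_j)+1\bigr),
\]
which is the claimed per-job bound (the additive $1$ is the "$+1$" in $D$).

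\textbf{Competitive ratio.} Write $\sum_j C_j=\sum_j p_j+\sum_{j\neq k}d(j,k)$, where $d(j,k)$ is the processing $k$ receives before $C_j$. I compare each unordered pair with SJF, whose pairwise term is $\min(p_j,p_k)$; since SJF is optimal, it suffices to show $d(j,k)+d(k,j)\le (2+2\delta)\min(p_j,p_k)$ for every pair.

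Fix a pair with $p_j\le p_k$ and let $i_j=\lfloor\log_{1+\delta}p_j\rfloor$ be the queue in which $j$ finishes. Before $j$ finishes, $k$ is processed only while $k$ sits in a queue of index at most $i_j$. Queue $i$ caps cumulative processing at $(1+\delta)^{i+1}$, so
\[
d(j,k)\le \min\bigl(p_k,(1+\delta)^{i_j+1}\bigr)\le (1+\delta)p_j .
\]
Trivially $d(k,j)\le p_j$. Summing gives $d(j,k)+d(k,j)\le(2+\delta)p_j\le(2+2\delta)\min(p_j,p_k)$, hence $\sum_j C_j\le(2+2\delta)\,\opt$.

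Underpredictions enter only through the fact that a job's initial queue index $\lfloor\log_{1+\delta}\hat p_j\rfloor$ is at most its final index. A job placed in a high queue simply receives no processing until the lower queues drain, which only helps the bound above. The step that needs care is the claim that $k$ never receives processing in a queue above $i_j$ before $C_j$: while $j$ is unfinished, it sits in some queue of index at most $i_j$, and the algorithm always serves the lowest non-empty queue. I expect this step to need a careful statement but no real difficulty.
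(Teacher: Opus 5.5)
Your proposal is correct and follows essentially the same route as the paper: the preemption bound comes from counting the queues $j$ passes through between $\lfloor\log_{1+\delta}\hat p_j\rfloor$ and $\lfloor\log_{1+\delta}p_j\rfloor$. The competitive ratio comes from the classical MLF delay decomposition compared against SJF, using that underprediction keeps $j$ in a queue of index at most $\lfloor\log_{1+\delta}p_j\rfloor$, so every other job receives at most $(1+\delta)p_j$ processing before $C_j$. Your pairwise accounting (separating $\sum_j p_j$ and paying $(1+\delta)p_j$ in one direction but only $p_j$ in the other) in fact gives the slightly sharper ratio $2+\delta$, and your additive $+1$ in the per-job preemption count is the more accurate form of the stated bound.
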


We sketch the proof of the theorem deferring the full proof to \Cref{app:single-machine-appendix}.
The preemption bound comes from the fact that job~$j$ is preempted only when it is moved between queues,
which happens at most
$\lfloor \log_{1+\delta} p_j\rfloor - \lfloor \log_{1+\delta} \hat p_j\rfloor$ many times.
For the competitive ratio, we reuse the standard MLF delay-style analysis of~\cite{MPT94}. It is easy to observe that before a job $j$ completes, any other job $i$ can be processed for at most $\min\{ (1+\delta) p_j, p_i\}$ units of time. If $\delta$ is infinitesimally small, the analysis essentially becomes that of Round-Robin, which is known to be $2$-competitive~\cite{MPT94}. The multiplicative factor $1+\delta$ used to define queues inflates the competitive ratio by a factor of $1+\delta$.

\paragraph{Handling overestimations.}
If we allow overestimations, let $g$ be an upper bound on the number of overestimated jobs. 
By scaling down predicted sizes, we can make $g$ small at the cost of more preemptions. 
We consider a variation of PMLF, which uses $g$ and a parameter $\gamma > 0$:
Let $T$ be the time when the number of unfinished jobs is at most $g / \gamma$, and $q_j(T)$ denote the cumulative processed size of job $j$ until time $T$.
At time $T$, move each unfinished job $j$ to the end of the queue $Q_k$ 
where $(1+\delta)^k \leq q_j(T) < (1+\delta)^{k+1}$.
Then, we continue with PMLF. %
We prove in \Cref{app:single-machine-appendix}: %

\begin{restatable}{theorem}{thmSingleEnforce}\label{thm:single-weakly-enforce}
	If there are at most $g$ overestimated jobs, 
	then for every $\varepsilon > 0$, the adapted PMLF is $2(1+\varepsilon)$-competitive
	and incurs at most 
	$O ( \frac{1}{\varepsilon} \sum_{j} 
(1+\log_2 \frac{p_j}{\hp_j}) + \frac{g}{\varepsilon^2} \log_2 (\max_j p_{j}) )$  preemptions. %
\end{restatable}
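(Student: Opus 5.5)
We set $\delta=\varepsilon$ and $\gamma=\varepsilon$ and split the adapted PMLF schedule at the reset time $T$. In phase one the algorithm is plain PMLF on $n$ jobs with predictions $\hp_j$; in phase two it is PMLF whose initial queues come from the actual processed amounts $q_j(T)$. We argue the competitive ratio and the preemption count separately.

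\textbf{Preemptions.} A job that is underestimated and finishes before $T$ is preempted only when it changes queues. As in the proof of \Cref{thm:single}, this happens at most $\lfloor\log_{1+\delta}p_j\rfloor-\lfloor\log_{1+\delta}\hp_j\rfloor = O(\frac1\varepsilon(1+\log_2(p_j/\hp_j)))$ times. This also holds after $T$: the reset only moves $j$ to the queue indexed by $q_j(T)\ge\hp_j$, which is at least its starting index, so its queue index still only increases and stays at most $\lfloor\log_{1+\delta}p_j\rfloor$. The reset itself adds at most one preemption per job unfinished at $T$, and there are at most $g/\gamma$ such jobs.

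An overestimated job is handled by cases. If it finishes before $T$, it is never moved out of its initial queue at all, since it completes within $\hp_j$ processing, so it gets $O(1)$ preemptions. If it survives to $T$, then after the reset it travels from queue $\lfloor\log_{1+\delta}q_j(T)\rfloor\ge 0$ up to at most $\log_{1+\delta}p_{\max}$. That is $O(\frac1\varepsilon\log_2 p_{\max})$ moves, using $p_j\ge1$. Only $g/\gamma=g/\varepsilon$ jobs are alive at $T$, so the total extra preemptions are $O(\frac{g}{\varepsilon^2}\log_2 p_{\max})$, which is the claimed bound.

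\textbf{Competitive ratio.} We use the delay-style accounting of \cite{MPT94} behind \Cref{thm:single}. Write $\sum_j C_j=\sum_j p_j+\sum_{i\ne j}d(i,j)$, where $d(i,j)$ is the processing job $i$ receives before $C_j$. We compare against $\opt=\sum_j p_j+\sum_{\{i,j\}}\min\{p_i,p_j\}$, the SJF value.

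For a pair where both jobs are underestimated, or where the pair is handled after the reset, the argument of \Cref{thm:single} goes through unchanged: $d(i,j)+d(j,i)\le(1+\delta)\min\{p_i,p_j\}$ plus lower-order terms, giving a factor $2(1+\delta)$. The key point is that PMLF never processes $i$ beyond level $(1+\delta)p_j$ before $j$ completes once both sit at their true levels.

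The problematic pairs are those involving an overestimated job $o$. Such a job may be delayed by, or run ahead of, jobs of size up to about $\hp_o$, which may be far larger than $p_o$. I would bound this crudely as follows:
\begin{itemize}
\item the delay \emph{caused} by an overestimated $o$ to any other job is at most $p_o$, because $o$ only runs while its queue is the lowest nonempty one;
\item the delay \emph{suffered} by $o$ before $T$ is at most $T$.
\end{itemize}
Since at least $g/\gamma$ jobs are still unfinished at time $T$, each of them has $C_j\ge T$. These jobs alone give $\opt\ge$ (their count)$\cdot T$ up to a constant. So the total extra cost $g\,T$ from overestimated jobs is at most $\gamma\cdot(g/\gamma)T\le\gamma\cdot O(\opt)$. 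One has to be careful here: I would instead compare to the algorithm's own cost on those jobs, and use that the total processing before $T$ lower-bounds the relevant $\opt$ terms. Likewise $\sum_{i}\sum_o\min(d(o,i),p_o)\le\sum_o(n\,p_o)$ is too crude, so I will charge $o$'s interference on $j$ as $\min\{p_o,(1+\delta)\max(p_j,\hp_o)\}$, and note that if this exceeds $p_j$ then $o$ ran before $j$ in SJF-agnostic order anyway.

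The main obstacle is exactly this pairwise charging for overestimated jobs before $T$: showing that their effect is at most an $\varepsilon$ fraction of $\opt$. I would first try to show that the whole pre-$T$ schedule costs at most $2(1+\delta)$ times the optimum for the jobs completed by $T$, plus $n_T\cdot T$ where $n_T\le$ (alive at $T$). Then I would bound the damage from overestimated jobs by $g\,T\le\gamma\, n_T T\le\gamma\,\alg$, and rearrange to get $(1-\gamma)\alg\le 2(1+\delta)\opt$, that is, $2(1+O(\varepsilon))$ after rescaling $\varepsilon$.
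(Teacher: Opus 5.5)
\textbf{There is a genuine gap in the competitive-ratio part.} Your preemption count is essentially fine. But the one step that goes beyond \Cref{thm:single}, absorbing the overestimated jobs' cost before $T$, is never actually carried out, and the tentative arguments you give for it are partly wrong.

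\textbf{The delay caused by an overestimated job is not the problem.} You propose the bound $\min\{p_o,(1+\delta)\max(p_j,\hp_o)\}$, which does not sum to $O(\opt)$. In fact the standard bound survives. Before $T$, $o$ sits in queue $\ell_o=\lfloor\log_{1+\delta}\hp_o\rfloor$ and runs only when every alive job, in particular $j$, is in a queue of index at least $\ell_o$. An underestimated $j$ never rises above its final level $k_j$, where $(1+\delta)^{k_j}\le p_j$. Hence $d(o,j)\le p_o<\hp_o<(1+\delta)^{\ell_o+1}\le(1+\delta)p_j$. More generally, $d(i,j)\le\min\{p_i,(1+\delta)p_j\}$ holds for every $i$ and every underestimated $j$. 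It also holds for $j\in J^o$ if you count only processing in $[T,C_j]$, since after the reset $j$ sits at its true level. The only thing that genuinely breaks is the delay \emph{suffered} by overestimated jobs before $T$.

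\textbf{Your handling of that suffered delay does not work as written.} The claim that the jobs unfinished at $T$ force $\opt\gtrsim(\text{count})\cdot T$ is false: $T$ is a time in the algorithm's schedule, and the optimum may finish those jobs far earlier. Your fallback (pre-$T$ cost at most $2(1+\delta)$ times the optimum of the jobs completed by $T$, plus $n_T T$) does not close either. The term $n_TT$ contains underestimated jobs alive at $T$, which cannot be charged to $\gamma\alg$. Also, jobs completed before $T$ are delayed by jobs still alive at $T$, so their cost is not bounded by the optimum of the completed jobs alone.

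\textbf{The split that works} isolates exactly the overestimated jobs' pre-$T$ time:
\[
\alg=\sum_{j\in J^u}C_j+\sum_{o\in J^o}\min\{C_o,T\}+\sum_{o\in J^o}(C_o-T)^+ .
\]
The middle term is at most $gT\le\gamma\alg$. This holds because more than $g/\gamma$ jobs have $C_j\ge T$; the paper uses the same fact pointwise, $n(t)-n^u(t)\le g\le\gamma\, n(t)$ for $t<T$. The first and third terms are each bounded job-by-job by $\sum_i\min\{p_i,(1+\delta)p_j\}$, using non-idling and the delay bound above. This gives $(1-\gamma)\alg\le(1+\delta)\sum_{i,j}\min\{p_i,p_j\}\le2(1+\delta)\opt$. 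Your inequality $gT\le\gamma\alg$ is the right ingredient, but without this decomposition and the observation above, the argument is incomplete.

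\textbf{A minor slip in the preemption count.} The claim $q_j(T)\ge\hp_j$ fails for an underestimated job not yet started by $T$ (e.g.\ $q_j(T)=0$). The reset can therefore move such a job down, after which it climbs again. The bound still holds: every job alive at $T$, underestimated or not, gets $O(\frac1\delta\log_2 p_{\max})$ moves afterwards. There are at most $g/\gamma$ such jobs, which gives the $\frac{g}{\varepsilon^2}\log_2 p_{\max}$ term. Your argument for overestimated jobs already covers this.
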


\section{Unrelated machines}\label{sec:unrelated}

In this section, we describe 
SNAP for scheduling on unrelated machines.
We are given a set $M$ of unrelated machines. 
Each job $j$ is available for scheduling from time 0, and we know its predicted size $\hat{p}_j$ and processing rates $\{\lambda_{ij}\}_{i \in M}$ across all machines. %
We assume that every job $j$ has processing requirement equal to a power of $1+\delta$ for some $\delta > 0$; that is, $p_j = (1+\delta)^i$ for some integer $i \geq 0$. 
If $p_j$ is not a power of $1+\delta$, we instead treat the job as active until it has been processed for $(1+\delta)^{\lceil\log_{1+\delta} p_j\rceil}$ units of time. This increases the competitive ratio by a factor of at most $1+\delta$.
We assume that all job sizes are underestimated, i.e., $\hat{p}_j \leq p_j$ for all jobs $j$. The case of overestimation is addressed in Appendix~\ref{sec:u-over}, following the same approach as for the single-machine case.

Before describing SNAP, we introduce \emph{checkpoints}. We say that a job $j$ reaches a \emph{checkpoint} at time $t$ if its cumulative processed size $q_j(t)$ reaches a power of $1+\delta$. 
For convenience, when a job reaches a checkpoint, we say that it is \emph{exhausted} (or that an \emph{exhaustion} occurs). 
For job $j$, we begin tracking checkpoints at its predicted size $\hat p_j$: 
let $(1+\delta)^{\rho_j}$ be its first checkpoint, where 
$\rho_j := \lceil\log_{1+\delta} \hat p_j\rceil$. 
Thus, the checkpoints of $j$ are $\{(1+\delta)^{\rho'}\}_{\rho' \geq \rho_j}$, and job $j$ can reach a checkpoint at most $\lceil \log_{1+\delta} (p_j / \hat{p}_j) \rceil + 1$ times.

\subsection{Algorithm: SNAP}

Our algorithm, which we call \textbf{Simulated Non-preemptive Adaptive Prediction (SNAP)} algorithm,
proceeds in epochs $1, 2, \dots, K$. %
Epoch $k$ starts at time $e_k$ and ends at time $e_{k+1}$, at which point epoch $k+1$ begins immediately. 
The first epoch starts at time $e_1 = 0$. 
At the start of epoch $k$, let $J_k := J(e_k)$ denote the set of available jobs for processing. 
Here, $J(t) := \{ j \mid r_j \leq t < C_j \}$ is the set of jobs alive at time $t$. Let $n_k := |J_k|$. Let $\beta, \gamma \in (0, 1)$ be some constants that we optimize later.

At the beginning of each epoch, we compute a complete schedule for the duration of the epoch and follow it. Specifically, we execute the following steps:

\begin{enumerate}
    \item \textbf{Compute Proportional Fairness (PF) processing rates.} We compute the PF rates for the set $J_k$, denoted by $y^{(k)} := \text{PF}(J_k)$. Formally, $y^{(k)}$ is the solution to the following convex program with $J = J_k$:
%
%
\begin{equation*}
\begin{aligned}
\textrm{PF}(J):\quad \max \quad & \sum_{j \in J} \log(y_j) \\
\text{s.t.}\quad
& \sum_{j \in J} x_{ij} \le 1 \quad \forall i \in M, \quad \sum_{i \in M} x_{ij} \le 1,\quad y_j = \sum_{i \in M} \lambda_{ij} x_{ij} &&  j \in J, \\
& x_{ij} \ge 0 \quad \forall i \in M,\ j \in J.
\end{aligned}
\end{equation*}
This can be solved approximately in near linear time \cite{criado2022fast}.

%
    \item \textbf{Compute remaining requirement to the next checkpoint.} 
    For each job $j \in J_k$, we define the remaining processing requirement to reach its next checkpoint as $u_{j,k} := (1 + \delta)^{h+1} - q_j(e_k)$, where $h \in \mathbb N$ %
    such that $(1 + \delta)^h \leq \max\{\hat p_j, q_j(e_k) \} < (1 + \delta)^{h+1}$.

    \item \textbf{Determine the simulation duration.} Let $l_k$ denote the length of the shortest time interval such that, if jobs are processed at rates $y^{(k)}$ during the time interval, at least $\lceil \beta n_k \rceil$ jobs from $J_k$ reach their next checkpoint. Formally, let the jobs in $J_k$ be sorted in non-decreasing order of $u_{j,k} / y^{(k)}_j$. If $j^*$ is the $\lceil \beta n_k \rceil$-th job in this ordering, breaking ties arbitrarily, then the duration is defined as $l_k = u_{j^*, k} / y^{(k)}_{j^*}$. The targeted processing requirement for each job $j \in J_k$ in the current epoch is then $v_{j,k} = \min\{u_{j,k}, l_k y^{(k)}_j\}$.

    \item \textbf{Convert to a non-preemptive schedule.} 
    Since the fractional assignments $x_{ij}$ from Step~1 constitute a valid 
    preemptive schedule for the required processing requirement 
    $v_{j,k}$, we can construct %
    a non-preemptive (and non-migratory) schedule
    that begins 
    at time $e_k$ and completes by time $e_k + 4 \cdot l_k$ where each each job $j \in J_k$ is processed by $v_{j,k}$ units~\cite{CorreaSV12}.

\end{enumerate}

We note that the algorithm runs in polynomial time, as both the conversion method~\cite{CorreaSV12} and the computation of PF rates~\cite{ImKM18,JLM25} are polynomial-time solvable.
From the description of SNAP, we can make the following two observations.

\begin{observation} \label{obs:u-exhaustion}
    In an epoch $k$, at least $\lceil \beta n_k \rceil$ jobs reach a checkpoint.
\end{observation}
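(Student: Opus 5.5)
The plan is to argue directly from the choice of $l_k$ in Step~3 together with the guarantee of the conversion in Step~4. Sort the jobs of $J_k$ in non-decreasing order of $u_{j,k}/y^{(k)}_j$, and let $j^*$ be the $\lceil \beta n_k\rceil$-th job in this order. For every job $j$ that precedes or equals $j^*$, we have $u_{j,k}/y^{(k)}_j \le l_k$, i.e., $l_k y^{(k)}_j \ge u_{j,k}$. Hence $v_{j,k} = \min\{u_{j,k}, l_k y^{(k)}_j\} = u_{j,k}$ for at least $\lceil \beta n_k\rceil$ jobs. (Since $\beta<1$ and $n_k \ge 1$, this index lies in $\{1,\dots,n_k\}$, so $j^*$ is well defined.)

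The next step is to verify that $y^{(k)}_j>0$ for all $j\in J_k$, so that the ratios are finite. This holds because the PF objective $\sum_j \log y_j$ is $-\infty$ whenever some rate is zero. Since each job has some machine with $\lambda_{ij}>0$, a feasible point with all rates strictly positive exists, for instance by splitting every machine evenly among its eligible jobs. Therefore the optimum has all $y^{(k)}_j>0$.

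Then I invoke Step~4. The fractional assignment $x$ runs for duration $l_k$ and gives each job $j$ exactly $l_k y^{(k)}_j \ge v_{j,k}$ units of processing. This is a valid preemptive schedule, and the conversion of \cite{CorreaSV12} produces a non-preemptive schedule in $[e_k, e_k+4l_k]$ in which each job $j\in J_k$ receives $v_{j,k}$ units. For each of the $\lceil\beta n_k\rceil$ selected jobs, its processed amount at the end of the epoch is therefore $q_j(e_k)+u_{j,k} = (1+\delta)^{h+1}$, which is a power of $1+\delta$. So the job reaches its next checkpoint, or completes exactly there, which also counts as an exhaustion since $p_j$ is a power of $1+\delta$.

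The only subtle point is confirming that this value is a genuine checkpoint, that is, $h+1 \ge \rho_j$. This follows because $(1+\delta)^{h+1} > \max\{\hat p_j, q_j(e_k)\} \ge \hat p_j$, which forces $h+1 \ge \lceil \log_{1+\delta}\hat p_j\rceil = \rho_j$. The same strict inequality gives $u_{j,k}>0$, so the checkpoint is newly reached in this epoch rather than counted twice.
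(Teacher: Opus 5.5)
Your route is the one the paper intends: the observation is read off directly from the choice of $l_k$ in Step~3 and the guarantee of Step~4. However, one step is false as written. You claim each of the $\lceil \beta n_k\rceil$ selected jobs ends the epoch with processed amount $q_j(e_k)+u_{j,k}=(1+\delta)^{h+1}$, allowing only that it might complete ``exactly there''.

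This fails when $\hat p_j = p_j = (1+\delta)^h > q_j(e_k)$, for instance an accurately predicted job in the first epoch. Step~2 picks $h$ with $(1+\delta)^h \le \max\{\hat p_j, q_j(e_k)\}$, so the target is $(1+\delta)^{h+1} = (1+\delta)p_j$. The job then completes at $p_j$ before receiving $v_{j,k}=u_{j,k}$. This is the only such case: since $p_j \ge \hat p_j$, $p_j > q_j(e_k)$, and $p_j$ is a power of $1+\delta$, we always have $p_j \ge (1+\delta)^h$.

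The conclusion survives, but you need the case split explicitly. Either the selected job receives all of $u_{j,k}$ and reaches $(1+\delta)^{h+1}$, or it completes earlier. In the latter case its final processed amount is $p_j$. This is a power of $1+\delta$ with $p_j \ge \hat p_j$, hence $p_j \ge (1+\delta)^{\rho_j}$ is one of its checkpoints. Also $p_j > q_j(e_k)$, so that checkpoint is newly reached within the epoch. With this added, the argument is complete.

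A smaller slip concerns your positivity witness for the PF rates. Splitting every machine evenly among its eligible jobs can violate $\sum_i x_{ij}\le 1$ when a job is eligible on several machines. Scaling every $x_{ij}$ by $1/|M|$, or giving each job a share of only one eligible machine, fixes it.
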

\begin{observation} \label{obs:u-epoch-length}
    For each epoch $k$, we have that $e_{k+1} - e_k \leq 4 \cdot l_k$.
\end{observation}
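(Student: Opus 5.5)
The statement to prove is the last one displayed, Observation~\ref{obs:u-epoch-length}: for every epoch $k$, $e_{k+1}-e_k \le 4 l_k$. I plan to read this off the construction in Step~4. The approach is to show that the rates $y^{(k)}$ give a fractional (preemptive) schedule of length $l_k$ for the targets $v_{j,k}$. The rounding of~\cite{CorreaSV12} then turns this into a non-preemptive, non-migratory schedule of length at most $4l_k$. Epoch $k+1$ starts as soon as this schedule finishes, which gives the bound.

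\textbf{Step 1: a fractional schedule of length $l_k$.} Let $x^{(k)}$ be the assignment attached to the optimal PF solution $y^{(k)}$. Consider the time interval $[e_k, e_k+l_k]$. On it, each machine $i$ devotes a fraction $x^{(k)}_{ij}$ of its time to job $j$.
\begin{itemize}
\item The machine constraints $\sum_j x^{(k)}_{ij}\le 1$ and job constraints $\sum_i x^{(k)}_{ij}\le 1$ hold.
\item The bipartite time-allocation matrix $\big(l_k x^{(k)}_{ij}\big)$ therefore has all row and column sums at most $l_k$.
\item By the standard open-shop/Birkhoff--von Neumann decomposition, it can be realized as an actual preemptive schedule of length $l_k$ in which no job runs on two machines at once.
\end{itemize}
In this schedule job $j$ receives $\sum_i \lambda_{ij} l_k x^{(k)}_{ij} = l_k y^{(k)}_j \ge v_{j,k}$ units of work, because $v_{j,k}=\min\{u_{j,k}, l_k y^{(k)}_j\}$. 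Scaling the $x$'s down per job yields exactly $v_{j,k}$ units.

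\textbf{Step 2: invoke~\cite{CorreaSV12}.} Their rounding result says that a fractional unrelated-machine assignment meeting the machine loads $\sum_j x_{ij}\,(\cdot)\le l_k$ and the per-job time bounds can be converted into a non-preemptive, non-migratory schedule with a constant-factor blowup in length; the constant is $4$ in their makespan/“time” framework. This is where the main obstacle lies. I need to check that the hypotheses of their theorem match our LP. Their LP bounds each job's total processing time $\sum_i x_{ij}\,v_{j,k}/\lambda_{ij}\cdots$ by the target length $T=l_k$, as well as each machine's load. In our setting these are exactly the bounds obtained from Step~1. I would rewrite the Step~1 solution in their variables, namely the fraction of job $j$ done on machine $i$, $z_{ij}=\lambda_{ij}x_{ij}l_k/v_{j,k}$. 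I would then verify their constraints with $T=l_k$ and cite their factor-$4$ conversion. If their guarantee is stated relative to an LP optimum, our feasible solution upper-bounds that optimum, so the same bound follows.

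\textbf{Step 3: conclude.} SNAP follows the converted schedule from time $e_k$. Every job $j\in J_k$ has then received $v_{j,k}$ units of work, and jobs that complete earlier only shorten the schedule. The next epoch begins when this schedule ends, by time $e_k+4l_k$. Hence $e_{k+1}-e_k\le 4 l_k$.
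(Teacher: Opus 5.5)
Your proposal is correct and takes essentially the paper's route: the paper simply reads the bound off Step~4, where the PF assignment $x^{(k)}$ is a valid preemptive schedule of length $l_k$ for the targets $v_{j,k}$. The factor-$4$ rounding of~\cite{CorreaSV12} turns this into a non-preemptive, non-migratory schedule finishing by $e_k+4l_k$, after which the next epoch starts. Your extra verification of the rounding's hypotheses, with normalized fractions $\lambda_{ij}x^{(k)}_{ij}/y^{(k)}_j$, is right: machine loads $\sum_j x^{(k)}_{ij}v_{j,k}/y^{(k)}_j$ and per-job times $\sum_i x^{(k)}_{ij}v_{j,k}/y^{(k)}_j$ are both at most $l_k$, which fills in what the paper leaves implicit.
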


We first argue about the number of preemptions and migrations of SNAP.

\begin{theorem}\label{thm:u-preemption-bound}
    When all job sizes are underestimated, the total number of preemptions performed by SNAP is at most 
$
O\big(  \frac{1}{\beta \delta}\sum_j  \big( \log_2 \frac{p_j}{\hat{p}_j} + 1 \big) \big).
$
\end{theorem}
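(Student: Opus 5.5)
The plan is a charging argument: bound the preemptions made within each epoch, charge them to the exhaustions guaranteed by Observation~\ref{obs:u-exhaustion}, and then bound the total number of exhaustions per job.

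\textbf{Step 1: preemptions per epoch.} In epoch $k$ the schedule produced in Step~4 is non-preemptive and non-migratory, built from the conversion of~\cite{CorreaSV12}. So within the epoch each job $j\in J_k$ is processed in at most one contiguous interval on a single machine. The only preemptions chargeable to epoch $k$ therefore occur at its boundary $e_{k+1}$, when a job that is still unfinished is set aside and possibly resumed later, perhaps on another machine. Consequently the number of preemptions (and migrations) attributable to epoch $k$ is at most $n_k=|J_k|$, up to an additive constant per job. I would state this carefully: a job processed in epoch $k$ but not completed counts as one preemption, and idle jobs count as none.

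\textbf{Step 2: charging to exhaustions.} By Observation~\ref{obs:u-exhaustion}, at least $\lceil\beta n_k\rceil$ jobs of $J_k$ reach a checkpoint during epoch $k$. The reason is that each such job receives $v_{j,k}=u_{j,k}$, exactly the amount needed to reach its next checkpoint. Charge the at most $n_k$ preemptions of epoch $k$ evenly to these exhaustions. Each exhaustion then receives at most $n_k/\lceil\beta n_k\rceil\le 1/\beta$ preemptions. Exhaustion events are distinct across epochs: each such event advances $q_j$ to a new power of $1+\delta$ at or above $(1+\delta)^{\rho_j}$. Hence
\[
\#\text{preemptions}\;\le\;\frac{1}{\beta}\cdot\#\{\text{exhaustions}\}.
\]

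\textbf{Step 3: counting exhaustions per job.} Job $j$ starts tracking checkpoints at $(1+\delta)^{\rho_j}\ge\hat p_j$, and it stops once $q_j$ reaches $p_j$. Under underprediction, its number of exhaustions is at most
\[
\left\lceil\log_{1+\delta}(p_j/\hat p_j)\right\rceil+1 \;=\; O\!\left(\tfrac{1}{\delta}\log_2\tfrac{p_j}{\hat p_j}+1\right),
\]
using $\log_2(1+\delta)=\Theta(\delta)$ for $\delta\in(0,1]$. Here $u_{j,k}$ must be checked to never skip a checkpoint and never count a checkpoint twice. The $\max\{\hat p_j,q_j(e_k)\}$ in Step~2 guarantees that the target is the next power above the current progress. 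Summing over $j$ and multiplying by $1/\beta$ gives $O\big(\frac{1}{\beta\delta}\sum_j(\log_2\frac{p_j}{\hat p_j}+1)\big)$.

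\textbf{Main obstacle.} The delicate point is Step~1. I need that the conversion of~\cite{CorreaSV12} produces, for each job, one contiguous block on one machine within the epoch, and not merely a non-preemptive schedule in some weaker sense. I also need that completed jobs and jobs reaching exactly their checkpoint are handled consistently at epoch boundaries. If the conversion could split a job into $O(1)$ pieces, the per-epoch bound would become $O(n_k)$ and the argument would be unchanged. I would first verify this from the structure of that rounding, in which each job is assigned to a single machine and sequenced there.
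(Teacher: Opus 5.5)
Your proposal is correct and matches the paper's proof. The non-preemptive, non-migratory conversion gives at most one preemption per job of $J_k$ in epoch $k$, so the total is at most $\sum_k n_k \le \frac{1}{\beta}\cdot(\text{number of exhaustions})$ by Observation~\ref{obs:u-exhaustion}, and each job has at most $\lceil \log_{1+\delta}(p_j/\hat p_j)\rceil + 1$ exhaustions. (A minor imprecision in your Step~1: a job's single preemption in epoch $k$ happens when its contiguous block ends, which may be strictly before $e_{k+1}$; this does not change the count.)
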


\begin{proof}
    In each epoch $k$, every job $j \in J_k$ is preempted (or migrated) at most once due to the non-preemptive nature of the schedule constructed in Step 4. Thus, the total number of preemptions is bounded by $\sum_{k} n_k$. By Observation~\ref{obs:u-exhaustion}, at least $\beta n_k$ jobs reach a checkpoint during an epoch. Consequently, $\sum_{k } n_k \leq \frac{1}{\beta} \times (\text{Total number of exhaustions})$.   
    Each job $j$ can reach a checkpoint at most $\lceil \log_{1+\delta} (p_j / \hat{p}_j) \rceil + 1$ times (under the assumption of underestimated predictions). Therefore, 
    \begin{equation*}
    \textstyle
        \sum_{k } n_k \hspace{-0.5mm} \leq  \hspace{-0.5mm}\frac{1}{\beta} \sum_j \bigl( \lceil \log_{1+\delta} \frac{p_j}{\hat{p}_j} \rceil \hspace{-0.5mm}+ \hspace{-0.5mm}1 \bigr) 
        \hspace{-0.5mm} =\hspace{-0.5mm} O \bigl( \frac{1}{\beta \delta} \sum_j \bigl( 1 + \log_2 \frac{p_j}{\hat{p}_j} \bigr) \bigr) \ . \qedhere
    \end{equation*}
\end{proof}

We next aim to prove the following theorem on SNAP's competitive ratio.

\begin{theorem}\label{thm:u-cr}
	For any $\varepsilon > 0$, the SNAP algorithm with $\eta = \frac{\varepsilon}{2(1+\varepsilon)}$ and $\beta = \frac{\varepsilon^2}{4(1+\varepsilon)(2+\varepsilon)}$.
    achieves a total completion time that is at most 
    $14.48(1+\varepsilon)$ times the optimum.
\end{theorem}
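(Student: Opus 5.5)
We plan to compare SNAP's epoch schedule against a continuous, fully preemptive Proportional Fairness schedule, and to use the known guarantee that PF is $3.62$-competitive on unrelated machines \cite{JLM25} as a black box. The constant should come out as $4 \cdot 3.62 = 14.48$, with the factor $4$ paid by the non-preemptive conversion (Observation~\ref{obs:u-epoch-length}) and the factor $(1+\varepsilon)$ absorbing losses from $\beta$, $\eta$ and $\delta$.

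\textbf{Step 1 (charging time to alive jobs).} We write $\sum_j C_j$ for SNAP as at most $\sum_k n_k (e_{k+1}-e_k) \le 4\sum_k n_k l_k$. The epochs partition time, and every alive job is counted during each epoch in which it is alive, so the whole problem reduces to bounding $\sum_k n_k l_k$.

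\textbf{Step 2 (lower bound on OPT per epoch).} During epoch $k$, at rates $y^{(k)}$, at least a $(1-\beta)$-fraction of the jobs in $J_k$ are processed for the full $l_k$ time units without reaching a checkpoint; we call these the \emph{unexhausted} jobs. We intend to relate $n_k l_k$ to a lower bound on OPT in the style of the PF analysis. Concretely, we use the super-additivity/decomposition property of PF on unrelated machines (Lemma~\ref{lem:decomposition}) together with the dual-fitting certificate of \cite{JLM25}. In that analysis, PF's instantaneous cost $|J(t)|$ is charged to a dual of the time-indexed LP whose value is at most $3.62\cdot\opt$. We will show that the same dual variables, assigned over epoch $k$ with weights scaled by a factor $\eta$, remain feasible. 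The dual constraint for a job $j$ involves its remaining size, and $j$'s remaining size is at least the work it receives in the epoch. The slack $\eta$ is what pays for the fact that rates are frozen over the epoch rather than recomputed as jobs finish. Because our $v_{j,k}$ never exceeds work up to the next checkpoint, each job receives at most $(1+\delta)$ times the work it ``deserves''.

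\textbf{Step 3 (handling exhausted jobs and summing).} The $\beta n_k$ exhausted jobs may finish or move on early; this does not hurt, since they only reduce future load. The risk is that PF rates computed for the larger set $J_k$ are too small compared with what the remaining jobs would get. We handle this with a monotonicity argument: the $(1-\beta)n_k$ unexhausted jobs would still accumulate at least $(1-\beta)$ of the PF-progress that the dual charges, so $n_k l_k \le \frac{1}{1-\beta}\cdot(\text{dual charge of epoch }k)$. Summing over epochs and collecting the factors gives $4(1+\delta)\cdot\frac{1}{(1-\beta)(1-\eta)}\cdot 3.62\,\opt$. The stated choices of $\eta$ and $\beta$ are exactly those making $\frac{1}{(1-\beta)(1-\eta)}\le 1+\varepsilon$ up to the remaining slack, and $\delta$ is set proportional to $\varepsilon$.

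\textbf{Main obstacle.} The crux is Step 2: showing that freezing the PF rates for a whole epoch loses only a constant factor. The difficulty is that in the continuous PF analysis rates react instantly to completions, while here they do not. We would first try to prove that the unfrozen PF schedule on $J_k$ and the frozen one agree over the window $[e_k, e_k+l_k]$ for all unexhausted jobs, using that no job in $J_k$ completes before reaching a checkpoint and sizes are powers of $1+\delta$. This reduces the frozen schedule to a true PF run on a truncated instance, where the black-box $3.62$ bound applies. If that fails, we fall back to modifying the dual-fitting directly, which is where the parameter $\eta$ would enter.
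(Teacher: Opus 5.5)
Your Step~1 matches \Cref{lem:u-alg-virt-bound}. The gap is in Steps~2--3: you never obtain a per-epoch lower bound on the optimum that can legitimately be summed over epochs.

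\textbf{Why your primary route fails.} The frozen schedule and an unfrozen PF run agree only until the \emph{first} exhaustion in the epoch, not up to $l_k$. Your reasons (no completion before a checkpoint, sizes are powers of $1+\delta$) give nothing beyond that point. Once some $j \in X_k$ reaches its checkpoint, SNAP stops working on it (its target is $v_{j,k}=u_{j,k}$), but every other job keeps rate $y^{(k)}$. A PF run on the actual instance, or on any truncated one, would instead either keep processing $j$ or recompute the others' rates.

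The only instance on which the frozen rates are a genuine PF run is the hypothetical one where every $j\in J_k$ has size $l_k y_j^{(k)}$, so all jobs finish together at $l_k$. That is what gives $n_k l_k \le 2\,\opt(l_k y_j^{(k)} \mid j \in J_k)$ (Claim~\ref{claim:u-1}, via Theorem~4.1 of \cite{JLM25}). In the paper's argument the constant is $14.48 = 4\cdot 2\cdot 1.81$; it does not come from a time-indexed dual.

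For $j \in X_k$, however, the size $l_k y_j^{(k)}$ exceeds the work $j$ actually receives in the epoch. So these per-epoch instances need not satisfy the hypothesis of Lemma~\ref{lem:decomposition}. A black-box bound for a single PF run is also unavailable, because the concatenation of frozen epochs is not a PF run. Making it one requires nullifying exhausted jobs, which is exactly the $\beta$-robust route of Section~\ref{sec:framework-psp}. That is your ``modify the dual fitting'' fallback, and in the paper it yields $27/(1-\beta)^3$ per unit of conversion loss, not $14.48$.

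\textbf{The missing idea: discard $X_k$ before decomposing.} The paper splits the optimum by time-scaling two sub-schedules (Claim~\ref{claim:u-2}):
$\opt(l_k y_j^{(k)}\mid j\in J_k) \le \frac{1}{\eta}\opt(l_k y_j^{(k)}\mid j\in X_k) + \frac{1}{1-\eta}\opt(l_k y_j^{(k)}\mid j\in Y_k)$.
It bounds the first term by $l_k|X_k| \le \beta n_k l_k$, since running the rates $y^{(k)}$ on $X_k$ alone already finishes those sizes by $l_k$. Absorbing this into the left-hand side gives $n_k l_k \le \frac{2\eta}{(\eta-2\beta)(1-\eta)}\opt(l_k y_j^{(k)}\mid j\in Y_k)$. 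Only for $Y_k$ are the sizes real work (Lemma~\ref{lem:sum-up-pj}). Lemma~\ref{lem:decomposition} then applies and yields $\alg \le 14.48\,\frac{\eta}{(\eta-2\beta)(1-\eta)}\,\opt(p)$. The stated $\eta,\beta$ make this factor exactly $1+\varepsilon$.

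\textbf{Your parameter accounting.} Your Step~3 factor $\frac{1}{1-\beta}$ is unsupported. Neither the optimum nor a dual charge is additive over jobs, so having a $(1-\beta)$-fraction of unexhausted jobs does not by itself retain a $(1-\beta)$-fraction of the lower bound. Your final expression $\frac{1}{(1-\beta)(1-\eta)}$ is also minimized at $\eta=0$, which shows that $\eta$ does no work in your argument. In the actual proof, $\eta$ is precisely the trade-off between the $2\beta/\eta$ loss from $X_k$ and the $\frac{1}{1-\eta}$ loss from $Y_k$.
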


We split up the proof of this theorem into multiple lemmas. From now on fix the schedule of the SNAP algorithm.
Let $K$ be the number of epochs. Let $\alg(p)$ denote SNAP's total completion time objective for a processing time vector $p = \{p_j\}_j$.
 
\begin{lemma}\label{lem:u-alg-virt-bound}
	It holds that $\alg(p) \leq 4 \sum_{k=1}^K n_k l_k$.
\end{lemma}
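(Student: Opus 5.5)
The plan is to charge each job's completion time to the epochs during which it is alive, and then use Observation~\ref{obs:u-epoch-length} to bound the length of each epoch. First, every job $j$ completes at the end of some epoch, or during it. Indeed, $j$ remains in $J_k$ for every epoch $k$ that starts before $C_j$. The only exception is a job released after time $0$, but here all jobs are available at time $0$, so $J_1$ contains every job.

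The key identity is
\[
C_j \le e_{k(j)+1} = \sum_{k=1}^{k(j)} (e_{k+1}-e_k),
\]
where $k(j)$ denotes the epoch in which $j$ completes, i.e., the last epoch $k$ with $j \in J_k$. This uses $e_1 = 0$. Summing over all jobs and exchanging the order of summation gives
\[
\alg(p) = \sum_j C_j \le \sum_j \sum_{k : j \in J_k} (e_{k+1}-e_k) = \sum_{k=1}^K n_k (e_{k+1}-e_k).
\]
Here we use that $j \in J_k$ exactly for $k = 1, \dots, k(j)$, so each epoch $k$ is counted once for each of the $n_k$ jobs alive at its start. Applying Observation~\ref{obs:u-epoch-length}, which gives $e_{k+1}-e_k \le 4 l_k$, yields the claimed bound $\alg(p) \le 4\sum_k n_k l_k$.

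The main subtlety is verifying that $J_k$ is monotone, that is, $J_{k+1} \subseteq J_k$, and that a job alive at time $e_k$ stays in $J_k$ until its completion epoch. This holds because no job arrives after time $0$ and completed jobs never return. A second point to check is that a job completing in the middle of epoch $k$ is still bounded by $e_{k+1}$. This holds since its completion time is at most the end of the epoch's non-preemptive schedule.

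If the schedule in Step~4 ends before $e_k + 4l_k$, the paper treats the next epoch as starting at $e_{k+1}$, and Observation~\ref{obs:u-epoch-length} covers this case. So no additional argument is needed beyond the charging above.
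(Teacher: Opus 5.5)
Your proof is correct and is essentially the paper's argument. The paper writes $\alg(p)=\sum_{k}\sum_{t=e_k}^{e_{k+1}-1}|J(t)|$, bounds $|J(t)|\le n_k$ by monotonicity, and then applies Observation~\ref{obs:u-epoch-length}; you instead bound $C_j \le e_{k(j)+1}$ job by job and exchange the order of summation, which reaches the same intermediate quantity $\sum_k n_k(e_{k+1}-e_k)$.
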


\begin{proof}
We have
\begin{align*}	
    \alg(p) %
    = \sum_{k=1}^K \sum_{t = e_k}^{e_{k+1} - 1} |J(t)| %
    \leq \sum_{k=1}^K \sum_{t = e_k}^{e_{k+1} - 1}|J(e_k)|  %
    =  \sum_{k=1}^K (e_{k+1} - e_k) n_k %
    \leq  4 \sum_{k=1}^K l_k n_k  \ ,
\end{align*}
where the equalities use that each alive job contributes $1$ per timestep and $n_k = |J(e_k)|$,
and the inequalities use monotonicity of $|J(t)|$ and Observation~\ref{obs:u-epoch-length}.
\end{proof}

Let $\opt(z_j \mid j \in S)$ denote the minimum total completion time to complete a set of jobs $S$, where job $j \in S$ has a size $z_j$. For the sake of analysis, consider a set of jobs where we have one job for each $j \in J_k$, but with a different size, $l_k y_j^{(k)}$. Since $y_j^{(k)}$ is PF's processing rate on $j$ when it runs on $J_k$, PF will finish them with these hypothetical sizes exactly at the same time. Using Theorem 4.1 of \cite{JLM25}, which shows that PF achieves a competitive ratio of two when it completes all jobs with the same arrival time at the same time, we have the following claim.

\begin{claim} 
    \label{claim:u-1}
For each epoch $k$,     
	we have $n_k l_k \leq 2 \cdot \opt(l_k y_j^{(k)} \mid j \in J_k)$.	
\end{claim}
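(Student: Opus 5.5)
The plan is to apply Theorem~4.1 of \cite{JLM25} to an auxiliary instance that we build for epoch $k$. Take the job set $J_k$, give every job release time $0$, and let job $j$ have size $z_j := l_k y^{(k)}_j$ with the same rates $\{\lambda_{ij}\}_{i\in M}$. We then run PF on this instance from time $0$. At time $0$ the alive set is exactly $J_k$, so PF uses the rates $y^{(k)} = \mathrm{PF}(J_k)$ computed in Step~1 of SNAP.

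First we check that every job finishes exactly at time $l_k$. Under the constant rate $y^{(k)}_j$, job $j$ receives $t\, y^{(k)}_j$ units by time $t$, and this is strictly below $z_j = l_k y^{(k)}_j$ for every $t<l_k$. Hence no job completes before $l_k$, and the alive set stays $J_k$ throughout $[0,l_k)$. PF therefore never recomputes its rates, and all $n_k$ jobs complete simultaneously at time $l_k$. Jobs with $y^{(k)}_j=0$ never occur, because the objective $\sum_j\log y_j$ forces $y^{(k)}_j>0$ whenever any feasible positive rate exists. We may assume this holds, since otherwise the instance is infeasible. So PF's total completion time on this instance is exactly $n_k l_k$.

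Next we invoke the cited result. Theorem~4.1 of \cite{JLM25} states that when all jobs share a common release time and PF completes them all at the same time, PF's total completion time is at most twice the optimum. Applied to our instance this gives $n_k l_k \le 2\cdot \opt(l_k y^{(k)}_j \mid j\in J_k)$, which is the claim.

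The main obstacle is making sure the hypotheses of the cited theorem match our setting. The PF program here is the unrelated-machines instance of the polytope framework used in \cite{JLM25}, and the simultaneous-completion condition must hold exactly. The degenerate case to handle is a job with $l_k y^{(k)}_j = 0$. Such a job is either excluded, since it contributes nothing on either side in the relevant scaling, or ruled out by the positivity of the PF rates noted above.
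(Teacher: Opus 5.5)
Your proposal is correct and follows the paper's own argument. Like the paper, you consider the auxiliary instance on $J_k$ with sizes $l_k y_j^{(k)}$, note that PF runs at the constant rates $y^{(k)}$ and finishes every job at time $l_k$, and invoke Theorem~4.1 of \cite{JLM25} for the factor $2$; your checks that PF never recomputes its rates before $l_k$ and that every $y^{(k)}_j$ is positive spell out what the paper leaves implicit.
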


Let $X_k$ be the jobs exhausted during epoch $k$, strictly before the end of the epoch. %
Let $Y_k := J_k \setminus X_k$. Note that $|Y_k| \geq (1 - \beta) n_k$ because of the definition of the epoch. Our next goal is to bound the RHS of the inequality in Claim~\ref{claim:u-1} by a linear combination of analogous quantities restricted to $X_k$ and $Y_k$, respectively. The proof is deferred to Appendix~\ref{app:u-deferred}.

\begin{restatable}{claim}{claimSnapCombination}
    \label{claim:u-2}
For each epoch $k$ and any $\eta \in (0, 1)$, we have, 
    \begin{align*} %
         \opt(l_k y_j^{(k)} \mid j \in J_k) \leq \frac{1}{\eta} \opt(l_k y_j^{(k)} \mid j \in X_k) + \frac{1}{1-\eta} \opt(l_k y_j^{(k)} \mid j \in Y_k)  \ .
    \end{align*}
\end{restatable}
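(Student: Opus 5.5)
The plan is to take optimal schedules for the two subinstances and merge them by time-sharing every machine between them. Let $S_X$ be an optimal schedule for the jobs of $X_k$ with sizes $l_k y^{(k)}_j$, with value $\opt_X := \opt(l_k y_j^{(k)} \mid j \in X_k)$. Define $S_Y$ and $\opt_Y$ analogously for $Y_k$. Since $X_k$ and $Y_k$ partition $J_k$, it suffices to exhibit one feasible (preemptive, migratory) schedule for the whole instance on $J_k$ whose total completion time is at most $\frac{1}{\eta}\opt_X + \frac{1}{1-\eta}\opt_Y$.

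The merged schedule runs on every machine $i$ and at every instant a scaled copy of both schedules. At time $t$, machine $i$ gives an $\eta$ fraction of its capacity to whatever $S_X$ does on machine $i$ at time $\eta t$. It gives the remaining $1-\eta$ fraction to whatever $S_Y$ does on machine $i$ at time $(1-\eta)t$. Put differently, the slowed-down versions $S_X(\eta\,\cdot)$ and $S_Y((1-\eta)\,\cdot)$ are run in parallel with machine capacities scaled by $\eta$ and $1-\eta$.

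The key steps are as follows.

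\textbf{Feasibility.} The fractional assignments $x_{ij}$ satisfy the machine constraints $\sum_j x_{ij}\le 1$ because $\eta + (1-\eta) = 1$. The job constraints $\sum_i x_{ij}\le 1$ hold because each job lies in exactly one of $X_k$ or $Y_k$ and its rate is only scaled down.

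\textbf{Work accounting.} A job $j\in X_k$ receives by time $t$ exactly the work $S_X$ gave it by time $\eta t$. The integral is $\int_0^t \eta\, r_j(\eta s)\,ds = \int_0^{\eta t} r_j(u)\,du$. So $j$ completes at time $C_j^{X}/\eta$, and symmetrically each $j \in Y_k$ completes at $C_j^{Y}/(1-\eta)$.

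\textbf{Summing.} Adding up the completion times gives $\frac{1}{\eta}\opt_X + \frac{1}{1-\eta}\opt_Y$. Hence $\opt(l_k y_j^{(k)} \mid j \in J_k)$, being a minimum, is at most this value.

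The only delicate point is making sure that time-sharing is allowed in the model used by $\opt$. The benchmark $\opt$ should be the preemptive, migratory optimum in which a machine may be fractionally shared, equivalently one that alternates rapidly between schedules. This is the same model in which PF is compared in Claim~\ref{claim:u-1} and Theorem 4.1 of \cite{JLM25}. If instead one insists on integral schedules (one job per machine at a time), I would replace the simultaneous sharing by alternating tiny time slices of lengths proportional to $\eta$ and $1-\eta$. This approximates the fractional schedule arbitrarily well, and the bound follows by a limiting argument.
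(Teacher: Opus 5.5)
Your proposal is correct and is essentially the paper's proof: the paper also slows down the optimal schedules for $X_k$ and $Y_k$ by factors $\eta$ and $1-\eta$ and superimposes them, with rate $\eta z_{j,\eta t,X} + (1-\eta) z_{j,(1-\eta)t,Y}$ at time $t$. The paper justifies feasibility in one line by convexity of the feasible-rate polytope, where you check the machine and job constraints explicitly, and it likewise concludes that each $j$ completes by $C_{j,X}/\eta$ or $C_{j,Y}/(1-\eta)$.
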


The following lemma upper bounds the cost incurred by our algorithm during epoch $k$ by the cost that the optimal solution must pay for the processing performed on jobs that are not exhausted in the middle of the epoch. We defer to Appendix~\ref{app:u-deferred} the proof of the following lemma, which follows by combining Claims~\ref{claim:u-1} and \ref{claim:u-2} with %
    $|X_k| \leq \beta n_k$.

\begin{restatable}{lemma}{lemepochbound}
\label{lem:epoch-bound}
	For each epoch $k$, we have $n_k l_k \leq  \frac{2\eta}{(\eta - 2\beta) (1-\eta)} \cdot \opt(l_k y_j^{(k)} \mid j \in Y_k) $.
\end{restatable}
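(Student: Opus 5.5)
We combine Claim~\ref{claim:u-1} with Claim~\ref{claim:u-2}, and then absorb the $X_k$ term back into the left-hand side using $|X_k| \le \beta n_k$. Chaining the two claims gives
\[
n_k l_k \le \frac{2}{\eta}\,\opt(l_k y_j^{(k)} \mid j \in X_k) + \frac{2}{1-\eta}\,\opt(l_k y_j^{(k)} \mid j \in Y_k).
\]
So it suffices to bound the $X_k$ term by something like $|X_k|\, l_k \le \beta n_k l_k$, move it to the left, and divide.

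For the $X_k$ term, we exhibit a feasible schedule for the jobs of $X_k$ with sizes $l_k y_j^{(k)}$: simply run PF's fractional assignment $x^{(k)}$ restricted to $X_k$. This is feasible because dropping jobs from a feasible fractional assignment keeps it feasible. Each job $j\in X_k$ then receives rate at least $y_j^{(k)}$, so it completes by time $l_k$. Hence
\[
\opt(l_k y_j^{(k)} \mid j \in X_k) \le |X_k|\, l_k \le \beta n_k l_k .
\]
One subtlety is whether $\opt$ here refers to preemptive fractional schedules. If $\opt$ is instead the non-preemptive or integral optimum, one can use the conversion of~\cite{CorreaSV12} at a constant-factor loss, or note that Claim~\ref{claim:u-1} is already stated against the same notion of $\opt$. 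I expect that a bound of the form $\opt(\cdot\mid X_k)\le |X_k| l_k$ holds for the notion used; this is the step needing care.

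Plugging in, we get $n_k l_k \le \frac{2\beta}{\eta} n_k l_k + \frac{2}{1-\eta}\opt(l_k y_j^{(k)}\mid j\in Y_k)$. This rearranges to
\[
\Bigl(1-\frac{2\beta}{\eta}\Bigr) n_k l_k \le \frac{2}{1-\eta}\,\opt(l_k y_j^{(k)} \mid j \in Y_k),
\]
that is, $n_k l_k \le \frac{2\eta}{(\eta-2\beta)(1-\eta)}\opt(l_k y_j^{(k)}\mid j\in Y_k)$, assuming $\eta>2\beta$. This holds for the parameter choices of Theorem~\ref{thm:u-cr}. The main obstacle is the feasibility and accounting step for $X_k$, i.e., making sure the trivial schedule's cost compares to the same $\opt$ notion and that $|X_k|\le\beta n_k$. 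The latter holds since $X_k$ consists of jobs exhausted strictly before $l_k$, which by the sorted definition of $j^*$ number fewer than $\lceil\beta n_k\rceil$, hence at most $\beta n_k$.
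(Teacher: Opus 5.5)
Your proposal is correct and matches the paper's proof: chain Claims~\ref{claim:u-1} and~\ref{claim:u-2}, bound $\opt(l_k y_j^{(k)} \mid j \in X_k) \le l_k |X_k| \le \beta n_k l_k$ by running PF's rates restricted to $X_k$, then rearrange. Your remark that the rearrangement needs $\eta > 2\beta$ is a useful addition. The worry about which notion of $\opt$ is meant goes away once you note it is the preemptive (fractional) optimum, which the convex-mixing proof of Claim~\ref{claim:u-2} already assumes, so no conversion loss via~\cite{CorreaSV12} arises.
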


We then show that $\{l_k y^{(k)}\}_{j \in Y_k}$ is a decomposition of the full instance.
\begin{lemma}\label{lem:sum-up-pj}
 For every job $j$, it holds that $\sum_{k: j \in Y_k} l_k y_j^{(k)} \leq p_j$.
\end{lemma}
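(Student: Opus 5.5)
The plan is to show that in every epoch $k$ with $j \in Y_k$, the amount $l_k y_j^{(k)}$ equals the processing $v_{j,k}$ that $j$ actually receives in epoch $k$. Summing over those epochs then gives at most the total processing $j$ ever receives, and that is at most $p_j$.

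\textbf{Step 1: equality for non-exhausted jobs.} Fix an epoch $k$ and $j \in Y_k = J_k \setminus X_k$. Since $j \notin X_k$, $j$ does not reach its next checkpoint strictly before the end of the epoch. I would argue this forces $l_k y_j^{(k)} \le u_{j,k}$, and hence $v_{j,k} = \min\{u_{j,k}, l_k y_j^{(k)}\} = l_k y_j^{(k)}$.

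This is the step that needs the most care, because of how ``exhausted strictly before the end'' interacts with the ordering by $u_{j,k}/y_j^{(k)}$.
\begin{itemize}
\item The intended reading of $X_k$ is: the jobs whose ratio satisfies $u_{j,k}/y^{(k)}_j < l_k$. Such jobs reach their checkpoint in the virtual PF schedule before time $l_k$, and for them $v_{j,k} = u_{j,k}$ is exactly the amount needed to exhaust them.
\item Every other job has $u_{j,k}/y_j^{(k)} \ge l_k$, i.e.\ $l_k y_j^{(k)} \le u_{j,k}$.
\item Jobs with ratio exactly $l_k$ (including $j^*$) have $v_{j,k} = u_{j,k} = l_k y_j^{(k)}$. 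So equality holds for them whether they are placed in $X_k$ or in $Y_k$.
\end{itemize}
In every case, $j \in Y_k$ implies $l_k y_j^{(k)} = v_{j,k}$. The same reading also gives $|X_k| \le \lceil \beta n_k\rceil - 1 \le \beta n_k$, which is what Lemma~\ref{lem:epoch-bound} relies on.

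\textbf{Step 2: the schedule delivers exactly $v_{j,k}$.} By Step~4 of SNAP, the non-preemptive schedule built from~\cite{CorreaSV12} processes each job $j \in J_k$ by exactly $v_{j,k}$ units during epoch $k$, within the interval $[e_k, e_{k+1})$. Epochs are disjoint in time, so the cumulative processing satisfies
\[
q_j(e_{k+1}) - q_j(e_k) = v_{j,k}.
\]
If $j$ completes during the epoch, it receives at most this amount.

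\textbf{Step 3: summing over epochs.} Summing Step~1 over the epochs with $j \in Y_k$ gives
\[
\sum_{k: j\in Y_k} l_k y_j^{(k)} \;=\; \sum_{k: j \in Y_k} v_{j,k} \;\le\; \sum_{k: j\in J_k} v_{j,k}.
\]
The right-hand side is the total processing job $j$ receives before its completion. A job stops receiving processing once it reaches $p_j$: sizes are powers of $1+\delta$ and each $v_{j,k}$ is capped at $u_{j,k}$, so no epoch overshoots the completion checkpoint. Therefore the total is at most $p_j$.
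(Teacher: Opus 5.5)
Your proposal is correct and makes the same argument as the paper: a job in $Y_k$ is not exhausted before the epoch ends, so it receives exactly $v_{j,k} = l_k y_j^{(k)}$ units in epoch $k$, and these amounts, received in disjoint epochs, total at most $p_j$. Your extra care spells out what the paper's short proof takes for granted: reading $X_k$ through the ordering by $u_{j,k}/y_j^{(k)}$ (ties at ratio $l_k$ are harmless), and checking that $v_{j,k} \le u_{j,k}$ never carries $j$ past $p_j$. That second point uses that $p_j$ is itself a checkpoint, via the first checkpoint $(1+\delta)^{\lceil \log_{1+\delta} \hat p_j \rceil}$ of Section~\ref{sec:unrelated}; the literal Step~2 formula would instead place the next checkpoint at $(1+\delta)p_j$ when $\hat p_j = p_j$.
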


\begin{proof}
    Fix  a job $j$. Consider an epoch $k$ with $j \in Y_k$. Since $j$ is not exhausted until epoch $k$'s end time, it is processed by %
    $l_k y_j^{(k)}$ during the epoch. The sum of this quantity over all epochs $k$ with $j \in Y_k$ cannot exceed $j$'s size, $p_j$.    
\end{proof}

We now employ the ``decomposition'' lemma, which follows from the super-additivity of the non-preemptive optimum for unrelated machine scheduling \cite{JLM25} and the $1.81$-gap between the preemptive and non-preemptive optima for the total (weighted) completion time objective \cite{Sitters17}.

\begin{lemma}[{Lemma~4.5 in~\cite{JLM25}}]
    \label{lem:decomposition}
    Consider any set of jobs $S$ where each job $j$ has a size $p_j$. For each $j \in S$, let $p_{j, \kappa'} \geq 0$, $1 \leq \kappa' \leq \kappa$, be such that $\sum_{\kappa' = 1}^\kappa p_{j, k'} \le p_j$. Then, we have 
    $\sum_{\kappa' =1}^\kappa \opt(p_{j,\kappa'} \mid j \in S) \leq 1.81 \cdot \opt(p_j \mid j \in S)$.
\end{lemma}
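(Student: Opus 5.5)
The plan is to pass through the non-preemptive optimum. Write $\opt$ for the (preemptive) optimum as used throughout, and $\opt_{NP}$ for the non-preemptive, non-migratory optimum on the same unrelated machines. The chain I aim for is
\[
\sum_{\kappa'=1}^{\kappa}\opt(p_{j,\kappa'}\mid j\in S)\;\le\;\sum_{\kappa'=1}^{\kappa}\opt_{NP}(p_{j,\kappa'}\mid j\in S)\;\le\;\opt_{NP}(q_j\mid j\in S)\;\le\;\opt_{NP}(p_j\mid j\in S)\;\le\;1.81\cdot\opt(p_j\mid j\in S),
\]
where $q_j:=\sum_{\kappa'}p_{j,\kappa'}\le p_j$. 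The four inequalities are justified as follows.
\begin{enumerate}
\item The first holds because every non-preemptive schedule is also a preemptive one.
\item The second is the super-additivity of $\opt_{NP}$; it is the heart of the argument.
\item The third is monotonicity. Shrinking the sizes in a fixed non-preemptive schedule and shifting jobs earlier on each machine does not increase any completion time.
\item The last is the $1.81$ gap between non-preemptive and preemptive optima for total completion time on unrelated machines \cite{Sitters17}, which I take as given.
\end{enumerate}

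For super-additivity, fix an optimal non-preemptive schedule $\sigma$ for sizes $q$. In $\sigma$, each job $j$ is assigned to a single machine $i(j)$ and occupies time $t_j := q_j/\lambda_{i(j)j}$ there. Without loss of generality each machine sequences its jobs by SPT with respect to these times. With unit weights, the cost of machine $i$ is
\[
\sum_{j:i(j)=i} t_j + \sum_{\{j,j'\}:\,i(j)=i(j')=i}\min\{t_j,t_{j'}\}.
\]
For each piece $\kappa'$, use the same assignment $j\mapsto i(j)$ with times $t_{j,\kappa'} := p_{j,\kappa'}/\lambda_{i(j)j}$, again sequenced by SPT on each machine. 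This is a feasible non-preemptive schedule, so $\opt_{NP}(p_{j,\kappa'}\mid j\in S)$ is at most the analogous expression in the $t_{j,\kappa'}$.

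Now sum over $\kappa'$. The linear terms add up exactly to $\sum_j t_j$, because $\sum_{\kappa'}t_{j,\kappa'}=t_j$. For the pairwise terms I use the elementary inequality
\[
\sum_{\kappa'}\min\{a_{\kappa'},b_{\kappa'}\}\le \min\Bigl\{\sum_{\kappa'} a_{\kappa'},\sum_{\kappa'} b_{\kappa'}\Bigr\},
\]
which holds since the left side is at most each of the two sums on the right. Hence the pieces' total cost is at most the cost of $\sigma$, which is $\opt_{NP}(q_j\mid j\in S)$.

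The only delicate point I expect is being careful about which optimum ``$\opt$'' denotes in the lemma and in \cite{Sitters17}. If $\opt$ already meant the non-preemptive optimum, the last step would simply be dropped and the $1.81$ would be slack. Otherwise the chain above needs the gap result exactly in the form stated: preemptive with migration versus non-preemptive, on unrelated machines. The weighted case would go through identically, since the pairwise cost term becomes a WSPT-type expression that is still a minimum of linear quantities.
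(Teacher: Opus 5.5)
Your proposal is correct and follows the route the paper indicates: super-additivity of the non-preemptive optimum, which you prove directly by reusing the optimal machine assignment with per-machine SPT and $\sum_{\kappa'}\min\{a_{\kappa'},b_{\kappa'}\}\le\min\{\sum_{\kappa'} a_{\kappa'},\sum_{\kappa'} b_{\kappa'}\}$, combined with the $1.81$ non-preemptive versus preemptive gap of \cite{Sitters17}. You are also right that $\opt$ here must be the preemptive, migratory optimum, since Claims~\ref{claim:u-1} and~\ref{claim:u-2} compare against fractional schedules, so the last step of your chain is genuinely needed.
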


Lemmas~\ref{lem:sum-up-pj} and ~\ref{lem:decomposition} immediately yield the following. 

\begin{corollary}
    \label{cor:u-final}
    It holds that $\sum_{k=1}^K \opt(l_k y_j^{(k)} \, | \, j \in Y_k) \leq 1.81 \cdot \opt(p)$.
\end{corollary}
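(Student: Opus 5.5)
The corollary follows by applying Lemma~\ref{lem:decomposition} to the whole job set, with each epoch serving as one piece of the decomposition. Take $S$ to be the set of all jobs with sizes $p_j$, and set $\kappa = K$. For each job $j$ and epoch $k$, define $p_{j,k} := l_k y_j^{(k)}$ if $j \in Y_k$, and $p_{j,k} := 0$ otherwise. These values are nonnegative because $l_k \ge 0$ and the PF rates satisfy $y_j^{(k)} \ge 0$. By Lemma~\ref{lem:sum-up-pj}, for every job $j$,
\[
\sum_{k=1}^K p_{j,k} \;=\; \sum_{k:\, j\in Y_k} l_k y_j^{(k)} \;\le\; p_j ,
\]
so the hypothesis of Lemma~\ref{lem:decomposition} holds. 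The lemma then gives $\sum_{k=1}^K \opt(p_{j,k} \mid j \in S) \le 1.81\cdot \opt(p)$.

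It remains to identify each summand with the quantity in the corollary, that is, to show $\opt(p_{j,k} \mid j \in S) = \opt(l_k y_j^{(k)} \mid j \in Y_k)$. This is the only step that needs a remark, since it concerns jobs of size zero. Jobs $j \notin Y_k$ have size $p_{j,k}=0$ and complete at time $0$ in any schedule, so they contribute nothing to the objective. They also impose no processing, so they do not constrain the schedule of the remaining jobs. Hence the instance restricted to $Y_k$ has the same optimum as the full instance.

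If one prefers not to rely on the zero-size convention, monotonicity suffices. Padding the instance with zero-size jobs cannot decrease the optimum, so $\opt(l_k y_j^{(k)} \mid j \in Y_k) \le \opt(p_{j,k} \mid j \in S)$. Summing this over $k$ and combining with the bound from Lemma~\ref{lem:decomposition} gives
\[
\sum_{k=1}^K \opt\bigl(l_k y_j^{(k)} \,\big|\, j \in Y_k\bigr) \;\le\; 1.81\cdot \opt(p),
\]
which is exactly the claim.
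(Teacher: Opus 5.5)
Your proof is correct and follows the paper's route: the paper likewise obtains the corollary by applying Lemma~\ref{lem:decomposition}, with one piece per epoch ($l_k y_j^{(k)}$ for $j \in Y_k$ and $0$ otherwise), where Lemma~\ref{lem:sum-up-pj} supplies the hypothesis. Your remark that adding zero-size jobs cannot decrease the optimum just spells out a step the paper leaves implicit.
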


We are now ready to %
prove Theorem~\ref{thm:u-cr}. %
Using \Cref{lem:u-alg-virt-bound}, 
     \Cref{lem:epoch-bound}, 
    and \Cref{cor:u-final}, we obtain
    \begin{align*}
    \alg(p) 
    \leq 4 \textstyle \sum_{k=1}^K n_k l_k 
    &\leq 4 \big( \tfrac{2\eta}{(\eta - 2\beta) (1-\eta)} \big) \textstyle \sum_{k=1}^K \opt(l_k y_j^{(k)} \mid j \in Y_k) \\ 
    &\leq 14.48  \big( \tfrac{\eta}{(\eta - 2\beta) (1-\eta)} \big) \cdot \opt(p) \enspace .
\end{align*}
    Hence,
    \(
        \alg \leq 14.48 \cdot (1+\varepsilon) \cdot \opt(p) 
    \)
    using $\eta = \frac{\varepsilon}{2(1+\varepsilon)}$ and $\beta = \frac{\varepsilon^2}{4(1+\varepsilon)(2+\varepsilon)}$.

\section{A general framework for PSP and malleable job scheduling }\label{sec:framework-psp}

We next present a general framework
for the Polytope Scheduling Problem (PSP) with parsimonious preemption, which encompasses various scheduling problems \cite{ImKM18,JLM25}.
There are $n$ jobs $j=1,\ldots,n$
with processing requirements $p_j$. %
A feasible schedule selects over time for every time $t$ 
a processing rate $y_j(t)$ for each active job $j \in J(t)$ from a given downward-closed polytope $\mathcal P = \{y \in \mathbb R^n_{\geq 0} \mid By \leq \bone \}$ for some matrix $B = [b_{dj}] \in \R^{D \times n}_{\geq 0}$.
Job $j$ completes when it has received $p_j$ amount of processing, that is, its completion time $C_j$ is the earliest point in time $t'$ such that $\int_{r_j}^{t'} y_j(t) \, \mathrm{d}t \geq p_j$.
The goal is to minimize the total completion time $\sum_j C_j$.
We again make the assumption
that all jobs are underpredicted; we can mitigate this assumption as in the previous sections.

\paragraph{The framework.}
Our framework is a generalization of SNAP in (cf.~\Cref{sec:unrelated}).
The main difference lies in Step~4, where we assume to be given a procedure
to convert a rate schedule (a vector belonging to $\mathcal{P}$) into a
schedule with few preemptions. 
\begin{definition}[$\alpha$-approximate $O(1)$-preemptive schedule]\label{def:approximate-preemptive}
	For a given PSP schedule of length $l$ with fixed allocation $\{y_j\}_{j \in J}$,
	an \emph{$\alpha$-approximate $O(1)$-preemptive schedule} satisfies processing requirements $\{l y_j\}_{j \in J}$, has a makespan of at most $\alpha l$, and uses only $O(1)$ preemptions per job.
\end{definition}

The definition of preemption depends on the specific PSP, as each polytope $\mathcal{P}$ determines a distinct scheduling problem. Consequently, Definition~\ref{def:approximate-preemptive} allows us to reason within the abstract setting of the PSP, providing a general framework for our analysis.
We again assume that all $p_j$'s are powers of $1 + \delta$ for some constant $\delta > 0$, which increases the competitive ratio by at most a factor of $1+\delta$.

Our framework generalizes the SNAP algorithm. Thus, we focus here on the key modifications and defer the full description to \Cref{app:framework}. The primary distinction lies in Step 4, which is adapted as follows: we compute an $\alpha$-approximate $O(1)$-preemptive schedule (as per \Cref{def:approximate-preemptive}) for the set of jobs $J_k$. For each job $j \in J_k$, the processing targets are defined as $v_{j,k} = \min\{u_{j,k}, l_k y^{(k)}_j\}$. This schedule starts at time $e_k$ and is guaranteed to finish no later than $e_{k+1} \leq e_k + \alpha l_k$.

\paragraph{Analysis.}
Bounding the number of preemptions is analogous to \Cref{thm:u-preemption-bound} (see \Cref{thm:psp-preemptions}). For the analysis of the 
competitive ratio, for convenience we analyze our algorithm
before the conversion to the $\alpha$-approximate $O(1)$-preemptive
schedule, which we will call the \emph{virtual algorithm}; in this case, we effectively have $e_{k+1} = e_k + l_k$. 
This assumption does not alter the processing each job receives within an epoch. However, the conversion increases each job's completion time by a factor of at most $\alpha$, resulting in a multiplicative loss in the competitive ratio by a factor of $\alpha$.

The primary departure from the analysis of SNAP for unrelated machines is that PSP lacks the super-additivity guarantees for subinstances (cf.~\Cref{lem:decomposition}). Consequently, we must use different techniques to bound the objective. Our main technical novelty is the introduction of a \emph{robust} variant of PSP, which we term $\beta$-robust PSP, and a corresponding analysis of PF within this framework.

\begin{definition}
In the $\beta$-robust PSP problem, we are given a PSP instance and a constant $\beta \in [0,1)$. At any time $t$, an algorithm selects a processing allocation $y(t) \in \mathcal{P}$ for the set of active jobs $J(t)$. An adversary then chooses a subset $N(t) \subseteq J(t)$ of at most $\beta |J(t)|$ jobs whose processing rates are nullified. Formally, the resulting processing allocation is:
    \( \hat{y}_j(t) := y_j(t) \cdot \bone[j \notin N(t)]. \)    
\end{definition}
We show in \Cref{sec:robustpf} 
the following theorem.

\begin{restatable}{theorem}{thmRobustPf}\label{thm:rpf}
	For any $\beta \in [0,1)$, PF is $27/(1-\beta)^3$-competitive for $\beta$-robust PSP.
\end{restatable}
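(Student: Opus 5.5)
We plan to follow the dual-fitting analysis of PF for PSP from \cite{ImKM18,JLM25}, redoing it so that it survives the nullification of up to a $\beta$-fraction of active jobs. The factor $27$ comes from that analysis, which uses speed augmentation $3$, and each factor $1/(1-\beta)$ will be paid once wherever the rate vector $\hat y(t)$ replaces $y(t)$.

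\emph{Setup.} Let $y(t)=\mathrm{PF}(J(t))$. By the KKT conditions of the PF program on $\mathcal P=\{y\ge 0: By\le \bone\}$, there are duals $\mu_d(t)\ge 0$ with $\sum_d \mu_d(t) b_{dj} = 1/y_j(t)$ for every $j\in J(t)$ and $\sum_d\mu_d(t)=|J(t)|$. The second identity follows from complementary slackness: $\sum_j y_j\sum_d\mu_d b_{dj}=|J(t)|$. Only $\hat y(t)$ advances the jobs, and at most $\beta|J(t)|$ active jobs are zeroed at each time.

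\emph{Dual fitting.} We take the standard time-indexed LP relaxation for PSP, run on a machine with speed $1/s$:
\[\min \sum_j\int_0^\infty \frac{t}{p_j}\, z_j(t)\,dt \quad\text{subject to}\quad \int z_j=p_j,\qquad B z(t)\le \bone/s.\]
Its dual has variables $a_j$ and $c_{d}(t)\ge 0$ with constraints $a_j/p_j - \sum_d b_{dj}c_d(t)\le t/p_j$. We set $a_j$ to $\frac1{p_j}$ times a weighted "remaining-work" quantity accumulated until $C_j$, as in \cite{JLM25}. Concretely, $a_j$ sums over times $t'\le C_j$ the number of jobs $|J(t')|$, weighted by the fraction of $p_j$ still unprocessed. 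We set $c_d(t)=\frac{1}{\kappa}\int_{t'\ge t}\mu_d(t')\,dt'$. Feasibility requires showing that for $t<C_j$,
\[\int_t^{C_j}\sum_d b_{dj}\mu_d(t')\,dt' = \int_t^{C_j}\frac{dt'}{y_j(t')}\]
dominates the remaining-work term. This is where robustness enters. Job $j$ progresses only at times where it is not nullified, so the remaining work is bounded by $\int \hat y_j\le\int y_j$. The difficulty is that $\int 1/y_j$ is compared against work actually done, and Cauchy--Schwarz/AM--GM gives $\int 1/y_j \ge (\text{length})^2/\int y_j$. We therefore charge the count of active jobs, $\sum_t |J(t)|$, which is ALG, against the dual objective. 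The objective $\sum_j a_j - \int\sum_d c_d$ loses $\frac1\kappa\int t\,|J(t)|\,dt$-type terms. The $a_j$ term only gains from jobs actually progressing, and at any time at least $(1-\beta)|J(t)|$ jobs progress. This costs one factor $1/(1-\beta)$ in the lower bound on $\sum_j a_j$.

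\emph{Main obstacle.} The delicate step is the potential-style bound in \cite{JLM25}, which shows $\sum_j a_j\gtrsim \mathrm{ALG}$ by relating each job's completion to the number of jobs alive at geometric fractions of its remaining work. The obstacle is that a nullified job can stall, so the jobs alive at a given time no longer correspond to a fixed fraction of processing. I will handle it as in the original argument, splitting time into the periods where $|J(t)|$ halves (thirds), and arguing that in each period at least a $(1-\beta)$-fraction of the departing "mass" is genuinely processed. This step is used three times: once in the speed/scaling parameter, once in the $a_j$ lower bound, and once in bounding $c_d$ via $\sum_d\mu_d=|J(t)|$ restricted to non-nullified jobs. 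Each use contributes a factor $3/(1-\beta)$, which yields $27/(1-\beta)^3$. If the thirds-splitting does not go through cleanly, the fallback is to optimize the split parameter $\kappa$ and accept a slightly different constant, which suffices for the downstream use of Theorem~\ref{thm:rpf}.
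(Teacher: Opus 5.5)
\textbf{There is a genuine gap.} Your high-level frame matches the paper: dual fitting against a speed-$1/\kappa$ time-indexed LP, the KKT identities, and losing the contribution of at most a $\beta$-fraction of jobs at each time. But your dual solution cannot certify any constant ratio.

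\emph{The $c_d$ term is too large.} With $c_d(t)=\frac1\kappa\int_{t'\ge t}\mu_d(t')\,dt'$ and $\sum_d\mu_d(t')=|J(t')|$, the subtracted part of the dual objective is $\frac{1}{s\kappa}\int_0^\infty t\,|J(t)|\,dt$. This grows quadratically when all sizes are scaled, whereas any feasible $a_j$ grows only linearly. Concretely, take a single job of size $p$ with $\mathcal P=\{y\le 1\}$. The dual constraint at $t=C_j=p$ forces $a_j\le p$, while the subtracted term is $p^2/(2s\kappa)$. So the dual objective is negative once $p>2s\kappa$. Equivalently, $c_d$ must be dimensionless to match $t/p_j$ in the constraint, but yours has units of time.

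\emph{The other steps are not yet arguments.} Your $a_j$ (``number of alive jobs weighted by the unprocessed fraction'') is not a workable definition. The halving/thirds period splitting and the Cauchy--Schwarz comparison of $\int 1/y_j$ with work done do not produce the needed inequality. The attribution of three factors $3/(1-\beta)$ is read off the target rather than derived, and a fallback with a different constant would not prove the stated bound. Also, $\sum_d\mu_d=|J(t)|$ is not used ``restricted to non-nullified jobs'': PF solves the convex program on all of $J(t)$ before nullification, and the identity holds for that full set.

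\textbf{The missing ingredient is a quantile-based scaling.} Fix $\lambda\in(\beta,1)$ and let $\zeta(t)$ be the $\lambda$-quantile of the post-nullification ratios $\hy_j(t)/p_j$ over unfinished jobs. Set $a_j=\sum_{t\le C_j}\bone[\hy_j(t)/p_j\le\zeta(t)]\cdot\bone[j\notin N(t)]$ and take the dual variables $b_{dt}=\frac1\kappa\sum_{t'\ge t}\zeta(t')\eta_d(t')$.
\begin{enumerate}
\item $\sum_j a_j\ge(\lambda-\beta)\alg$ by plain counting at each time: at least $\lambda|U(t)|$ jobs lie below the quantile and at most $\beta|U(t)|$ are nullified. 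No potential argument or period splitting is needed.
\item $\sum_d b_{dt}\le\frac{1}{(1-\lambda)\kappa}|U(t)|$. This uses $\sum_d\eta_d(t')=|J(t')|$, then $(1-\lambda)\zeta(t')|U(t')|\le\sum_{j\in U(t')}\hy_j(t')/p_j$, then $\sum_{t'}\hy_j(t')\le p_j$. The last inequality is why the quantile must be taken over $\hy$ rather than $y$.
\item For feasibility, note that at every counted time $j\notin N(t')$, so $\hy_j(t')=y_j(t')$. You may then write $\frac1{p_j}=\frac{1}{y_j(t')}\cdot\frac{\hy_j(t')}{p_j}$, substitute the KKT identity $\frac{1}{y_j(t')}=\sum_d b_{dj}\eta_d(t')$, and bound $\frac{\hy_j(t')}{p_j}\le\zeta(t')$.
\end{enumerate}
Choosing $\lambda=\frac{2+\beta}{3}$ and $\kappa=\frac{9}{(1-\beta)^2}$ gives $\lambda-\beta-\frac{1}{(1-\lambda)\kappa}=\frac{1-\beta}{3}$. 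Weak duality then yields the ratio $\kappa\cdot\frac{3}{1-\beta}=\frac{27}{(1-\beta)^3}$.
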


The key observation is that 
the schedule of the virtual algorithm can be interpreted as 
a schedule of PF for $\beta$-robust PSP with a specific choice of the set of nullified jobs.
Using this observation, we show in \Cref{app:framework} that the virtual algorithm achieves a total completion time at most $27\frac{(1+\delta)}{(1 - \beta)^3}$ times the optimum.
Setting $\delta = (1+\varepsilon)(1-\beta)^3 - 1$ and $\beta = 1 - ((1+\delta)/(1+\varepsilon))^{1/3}$ %
yields: %
\begin{theorem}\label{thm:psp-main}
    Given a procedure for computing $\alpha$-approximate $O(1)$-preemptive schedules, there exists a $27\alpha(1+\varepsilon)$-competitive algorithm for PSP 
    that performs at most $O(\frac{1}{\varepsilon^3} \sum_j (1+ \log_2 p_j / \hp_j))$ preemptions for any $\varepsilon > 0$.
\end{theorem}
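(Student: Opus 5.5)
The proof has two parts: a preemption count, which works exactly as in \Cref{thm:u-preemption-bound}, and a competitive-ratio bound, which combines \Cref{thm:rpf} with the factor $\alpha$ lost in the conversion step. The algorithm is the framework above with parameters $\beta$ and $\delta$, both chosen as functions of $\varepsilon$ at the end.

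\emph{Preemptions.} In epoch $k$, the $\alpha$-approximate $O(1)$-preemptive schedule causes only $O(1)$ preemptions per job of $J_k$, so the total is $O(\sum_k n_k)$. Every epoch exhausts at least $\lceil \beta n_k\rceil$ jobs, by the same argument as \Cref{obs:u-exhaustion}. Under underprediction, each job is exhausted at most $\lceil \log_{1+\delta}(p_j/\hp_j)\rceil+1$ times. Together these give
\[
\textstyle O\bigl(\frac{1}{\beta\delta}\sum_j (1+\log_2 \frac{p_j}{\hp_j})\bigr)
\]
preemptions. We take $\beta=\Theta(\varepsilon)$ and $\delta=\Theta(\varepsilon)$, which keeps $(1+\delta)/(1-\beta)^3\le 1+\varepsilon$. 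Then $\frac{1}{\beta\delta}=O(1/\varepsilon^2)$, which is within the claimed $O(1/\varepsilon^3)$.

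\emph{Competitive ratio of the virtual algorithm.} In the virtual algorithm, epoch $k$ runs the constant rates $y^{(k)}=\mathrm{PF}(J_k)$ for time $l_k$. By the choice of $l_k$, only the jobs in $X_k$ reach their checkpoint strictly inside the epoch, and $|X_k|<\beta n_k$. These jobs are then idled for the rest of the epoch, because their target $v_{j,k}=u_{j,k}$ has been met. I would model this as a $\beta$-robust PSP schedule. At every time $t$ in the epoch, PF is applied to the current active set, and the adversary nullifies the jobs of $X_k$ that have already hit their checkpoint, which are at most $\beta |J(t)|$ of them.

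The delicate point is that genuine PF on $J(t)$ would recompute rates whenever a job completes. In the virtual schedule, however, jobs complete only at checkpoints, and we assumed each $p_j$ is a power of $1+\delta$. So a job that completes mid-epoch is one of the exhausted jobs in $X_k$. Until epoch end, the active set used for PF is still $J_k$, and the completed jobs are treated as nullified. I would check this against the definition, with two points in mind:
\begin{itemize}
\item The cardinality condition must hold against $|J(t)|$, where $J(t)$ is the set of jobs PF runs on. This set is $J_k$, so there is no issue.
\item The proof of \Cref{thm:rpf} from \Cref{sec:robustpf} must allow PF's set to include already completed jobs as nullified ones. Otherwise I would slightly reinterpret the completion time.
\end{itemize}
This matching step is the main obstacle. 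If it goes through, the virtual algorithm costs at most $\frac{27}{(1-\beta)^3}$ times the optimum on the rounded sizes, which is at most $\frac{27(1+\delta)}{(1-\beta)^3}\opt(p)$.

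\emph{Finishing.} The real algorithm follows the converted schedule, and epoch $k$ then has length at most $\alpha l_k$. Each job receives the same processing per epoch as in the virtual algorithm and finishes within the corresponding stretched epoch. Hence every completion time grows by at most a factor $\alpha$; the argument for this is the same as in \Cref{lem:u-alg-virt-bound}, with $4$ replaced by $\alpha$. Finally, choose $\beta$ and $\delta$ as stated so that $(1+\delta)/(1-\beta)^3\le 1+\varepsilon$. This gives competitive ratio $27\alpha(1+\varepsilon)$ together with the preemption bound above.
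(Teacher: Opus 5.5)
The step you yourself flag as the main obstacle is a genuine gap. With the nullification pattern you chose, it does not go through as an application of \Cref{thm:rpf}. The rest of your plan matches the paper: the preemption count, the $(1+\delta)$ rounding loss, and a parameter choice that even gives $O(\varepsilon^{-2})$.

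You run $j\in X_k$ at rate $y^{(k)}_j$ on $[0,l_{j,k}]$ and nullify it afterwards. For exhausted jobs that remain unfinished, this is a legal $\beta$-robust schedule. The problem is a job whose checkpoint reached at $l_{j,k}$ equals $p_j$. That job is finished and leaves $J(t)$ mid-epoch. Genuine PF would then re-solve the convex program on $J_k\setminus\{j\}$ and change the other jobs' rates, which the virtual algorithm does not do. The robust model only lets the adversary zero out rates of unfinished jobs ($N(t)\subseteq J(t)$). So keeping finished jobs in PF's set as nullified ones is not covered by \Cref{thm:rpf}, and your first bullet (``this set is $J_k$, so there is no issue'') assumes exactly what fails. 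Your ``reinterpret the completion time'' fallback could be made rigorous: count such jobs in $U(t)$ until the epoch ends and re-run the dual fitting of \Cref{sec:robustpf}. The two objective lemmas and dual feasibility do survive this change, but that is a new argument you have not supplied.

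The paper avoids all this by reversing the nullification in time, which fits the definition verbatim. In its instance $I'_k$, job $j\in J_k$ has requirement $v_{j,k}$, and $N(t)=\{j\in X_k : t\le l_k-l_{j,k}\}$. An exhausted job therefore sits idle first and is processed at $y^{(k)}_j$ during the last $l_{j,k}$ time units, receiving exactly $u_{j,k}$. This has several consequences:
\begin{itemize}
\item every job of $J_k$ completes its epoch-$k$ work exactly at $l_k$, so no job leaves the active set inside an epoch;
\item PF's rates genuinely equal $y^{(k)}$ throughout, and $|N(t)|\le\beta|J(t)|$;
\item concatenating the $I'_k$ gives a bona fide $\beta$-robust instance with total requirements $p_j$, whose PF cost is exactly $\sum_k n_k l_k$.
\end{itemize}

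That last point also repairs your finishing step. The conversion does not stretch each completion time by at most a factor $\alpha$: a job completing at $l_{j,1}\ll l_1$ in the virtual first epoch may finish only near $\alpha l_1$. What the argument of \Cref{lem:u-alg-virt-bound} actually gives is $\alg\le\alpha\sum_k n_k l_k$. So the quantity you must bound by $\frac{27(1+\delta)}{(1-\beta)^3}\opt$ is $\sum_k n_k l_k$, not the virtual algorithm's true completion times. The paper's $I'$, or your zombie variant, bounds $\sum_k n_k l_k$ directly. A schedule in which jobs finish at their true virtual completion times only bounds a smaller quantity. You would then additionally need that the at least $(1-\beta)n_k$ jobs of $J_k\setminus X_k$ stay alive for all of epoch $k$, which costs an extra factor $1/(1-\beta)$.
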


Finally, we can apply this theorem to malleable job scheduling on unrelated machines.
Details are deferred to \Cref{app:malleable}.

    \begin{restatable}{theorem}{thmMalleableUnrelated}
        \label{thm:m-u}
    For any $\varepsilon > 0$, there is a  $27 \frac{e}{e-1} (1+\varepsilon)$-competitive algorithm that  
    incurs at most $O(\frac{1}{\varepsilon^3} \sum_j (1+ \log_2 p_j / \hp_j))$ preemptions
    for minimizing the total completion time of malleable jobs on unrelated machines with non-decreasing, concave, and non-decreasing work speed-up functions.
    \end{restatable}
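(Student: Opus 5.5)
The plan is to derive the statement from \Cref{thm:psp-main}: cast malleable job scheduling on unrelated machines as a PSP instance, then build an $\alpha$-approximate $O(1)$-preemptive schedule procedure with $\alpha = \frac{e}{e-1}$.

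\textbf{Step 1: a polytope for malleable jobs.} I would model the problem with a configuration-type linear program. Variables $x_{ij}\ge 0$ give the (fractional) number of machines of type/eligibility $i$ assigned to job $j$, subject to the machine-capacity constraints $\sum_j x_{ij}\le 1$ per machine. The rate of job $j$ is given by its concave, non-decreasing speed-up function applied to its assignment. Since the speed-up functions are concave and piecewise linear (after the standard discretization), the set of achievable rate vectors $y$ is a downward-closed convex set. Using concavity, this set can be written as a polytope $\{y \ge 0 : By\le \bone\}$ after introducing auxiliary variables and projecting. Projection of a polyhedron preserves polyhedrality and downward-closedness, so this is indeed a PSP instance. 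If exact representability fails, I would fall back on a fractional relaxation polytope whose points dominate integral schedules, and absorb the gap into $\alpha$.

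\textbf{Step 2: the rounding procedure (the main obstacle).} Given a fixed fractional allocation $y$ held for a duration $l$, I must produce a schedule that
\begin{itemize}
\item delivers $l y_j$ work to every job $j$,
\item has makespan at most $\frac{e}{e-1}\, l$, and
\item changes each job's machine set only $O(1)$ times.
\end{itemize}
I would first try to realize the fractional allocation as a convex combination of integral machine-to-job assignments and process each job during a contiguous window on a fixed machine set. Concretely, I would sort jobs (for instance by demand), place each one greedily into a time window, and accept that concavity of the speed-up costs a factor in the duration. The constant $\frac{e}{e-1}$ suggests a correlation-gap or randomized-rounding argument. Concave speed-up applied to a randomly rounded number of machines loses at most a $1-1/e$ fraction of the expected rate, so stretching the duration by $\frac{e}{e-1}$ compensates. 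Since each job keeps one fixed machine set per window, it incurs $O(1)$ preemptions. The hardest part will be making this deterministic while keeping the makespan bound for every job simultaneously rather than only in expectation. For that I would try pipage or dependent rounding on the bipartite machine--job assignment, which preserves marginals and yields the $(1-1/e)$ concave-function guarantee per job.

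\textbf{Step 3: conclude.} I would plug $\alpha=\frac{e}{e-1}$ into \Cref{thm:psp-main}. This gives competitive ratio $27\frac{e}{e-1}(1+\varepsilon)$ and the preemption bound $O(\frac{1}{\varepsilon^3}\sum_j(1+\log_2 p_j/\hp_j))$, because each epoch contributes $O(1)$ preemptions per alive job, charged as in \Cref{thm:u-preemption-bound}.
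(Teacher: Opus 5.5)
Your reduction is the paper's: everything is delegated to a procedure producing $\frac{e}{e-1}$-approximate $O(1)$-preemptive schedules, after which \Cref{thm:psp-main} gives both the ratio and the preemption bound. The genuine gap is Step~2, which carries the whole content of the theorem, and the mechanism you sketch does not work.

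A single rounded machine-to-job assignment held for a window of length $\frac{e}{e-1}l$ fails as soon as $n_k>|M|$. PF gives every alive job a positive rate, so some job then receives no machine at all. Dependent or pipage rounding preserves marginals and yields a $(1-1/e)$ guarantee for concave functions at best in expectation. A makespan bound, however, needs every job's rate in the realized schedule. Lengthening the window compensates an expected loss, not a realized rate of $0$. Deterministic pipage rounding controls one aggregate objective, not $n_k$ separate per-job constraints. With unrelated speeds even the expectation bound fails: a job holding a fraction $1/L$ of a speed-$L$ machine with $f_j(s)=\min\{s,1\}$ has $y_j=1$ but expected rounded rate $1/L$. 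Your plan also never uses the non-decreasing-work hypothesis.

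More fundamentally, your fallback of absorbing the relaxation gap into $\alpha$ cannot succeed for the paper's $\mathcal P$, because that gap is unbounded. Take a single machine $i$ with $\lambda_{ij}=L\ge n$ and $f_j(s)=\min\{s,1\}$ for all jobs; these are concave, non-decreasing, and have $s/f_j(s)$ non-decreasing. PF sets $y_j=1$ for every alive job. Yet if the machine serves one job at a time, a job progresses at rate at most $f_j(L)=1$, and only while it holds the machine. The at least $(1-\beta)n_k$ jobs with $v_{j,k}=l_k$ therefore need total time at least $(1-\beta)n_k l_k$, so no constant $\alpha$ exists.

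The paper argues differently from you. It maps $x$ to $\hx_{ij}=\lambda_{ij}x_{ij}/\sigma_j(x)$, sets $r_{ij}=\max\{\lambda_{ij},\gamma_j\}$, verifies the assignment and machine-load constraints of \cite{FotakisMP23}, and invokes their $\frac{e}{e-1}$ rounding as a black box. On this example, however, $\hx_{ij}=1$ and $r_{ij}=L$, so the load $\sum_j\frac{y_j}{f_j(r_{ij})}\cdot\frac{r_{ij}}{\lambda_{ij}}\cdot\hx_{ij}$ equals $n_k$ rather than at most $1$. The culprit is Inequality~\eqref{eq:speedup2}, which applies concavity in the wrong direction: concavity and $f_j(0)\ge 0$ give $x_{ij}f_j(\sigma_j(x)/x_{ij})\le f_j(\sigma_j(x))$. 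So copying that verification will not close your gap either. A correct argument needs a relaxation in which a saturated job cannot draw its full rate from a sliver of a fast machine, or additional restrictions on the $f_j$. As things stand, the constant $\frac{e}{e-1}$ is not established by either route.
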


\section{Experiments}

To the best of our knowledge, no existing algorithms limit preemption using predictions while simultaneously optimizing for the total completion time objective. To address this gap, we developed and implemented a suite of algorithms, including SNAP. With the exception of the Blind strategy, all of these algorithms incorporate novel algorithmic aspects. We use a default value of $\delta=1$. For additional implementation details, see Appendix~\ref{app:alg-detail}.

\begin{figure}[t]
    \centering
    \includegraphics[width=0.48\textwidth]{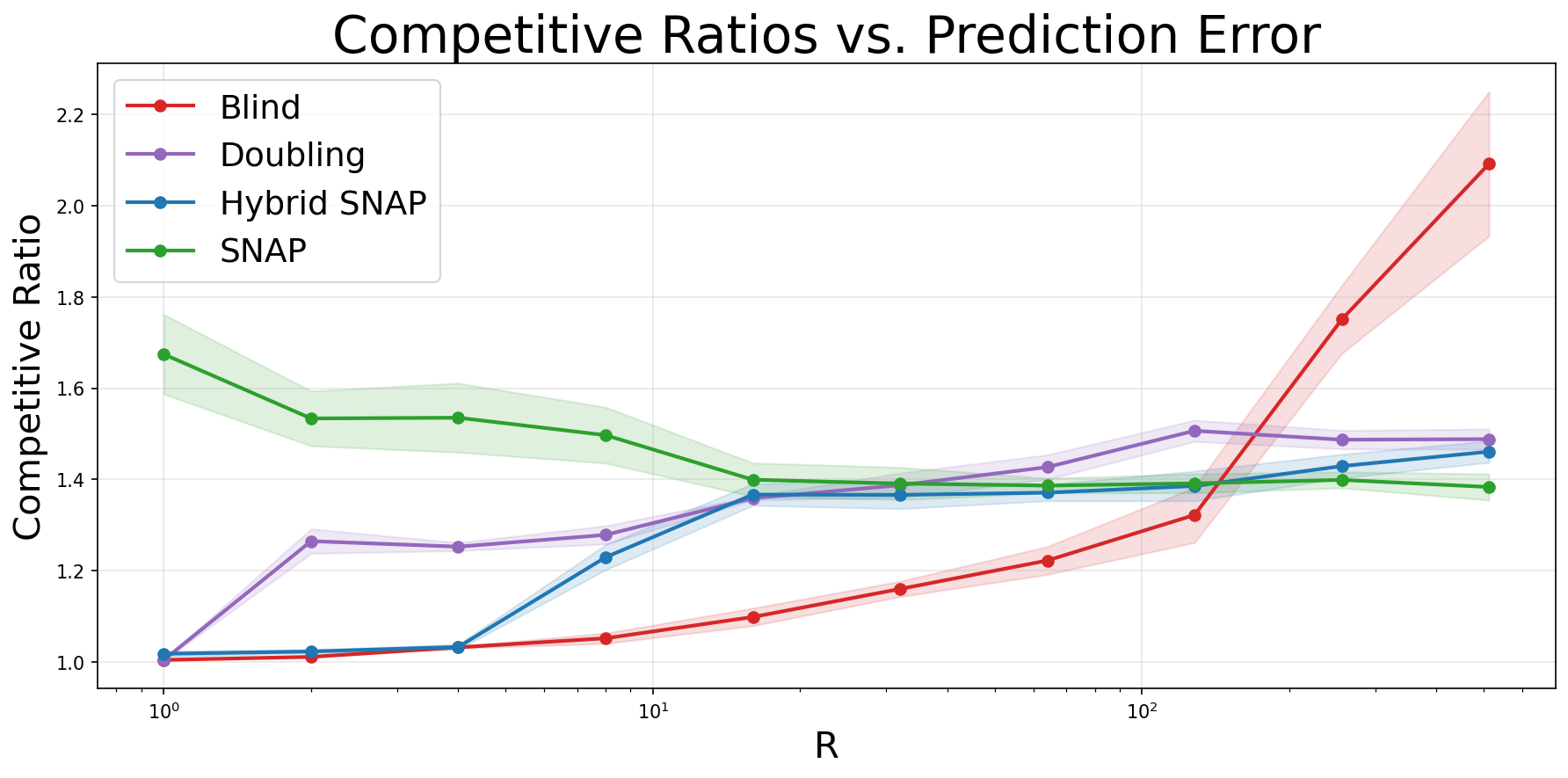}
    \hfill
    \includegraphics[width=0.48\textwidth]{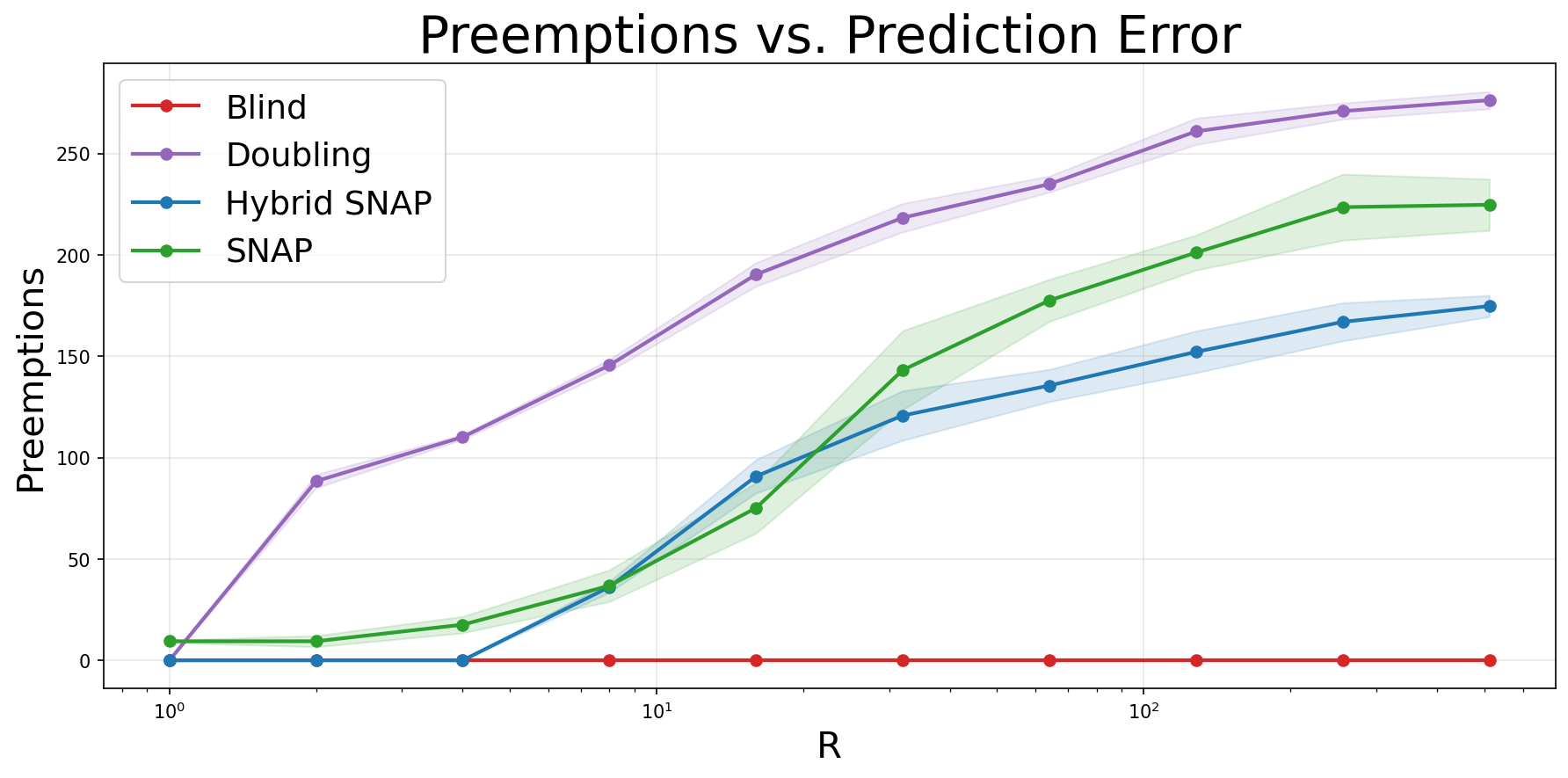}
    \caption{Left: Competitive ratio vs. prediction error parameter $R$. Performance of the Blind strategy degrades significantly as error increases, whereas SNAP remains stable. Right: Average preemptions per job vs. $R$. SNAP exhibits more parsimonious preemption behavior. Hybrid SNAP (with $c=4$) improves upon SNAP for small values of $R$ by leveraging the strengths of the Blind and Doubling.}
    \label{fig:experiments-main}
\end{figure}

\textbf{SNAP:} Our primary algorithm, modified to eliminate idle time by running PMLF on each machine until a $\beta$ fraction of jobs are exhausted. Specifically, we follow Step 4 to assign jobs to machines in an epoch, which are then processed by PMLF on each machine. \textbf{Doubling:} A variant of the immediate-dispatch algorithm in \cite{AnandGK12} (where each machine runs SJF and new jobs are assigned to minimize the marginal increase of the objective). This variant doubles a job's estimated size whenever processing exceeds the current estimate and re-dispatches it to the machine that minimizes the objective increase.
\textbf{Blind:} Based on the immediate-dispatch and non-migratory algorithm in \cite{AnandGK12}, this strategy is executed by assuming $p_j = \hat{p}_j$ for all jobs. This algorithm makes no preemptions.
\textbf{Hybrid SNAP:} A modified SNAP algorithm that begins with Doubling with large milestones $\{ c (1+ \delta)^i \hat p_j \mid  i \geq 1\}$ for a large parameter $c \geq 10$ (to emulate the Blind strategy). Jobs that have reached several milestones are processed by SNAP.

\textbf{Setup:} 
We evaluated the algorithms on synthetic instances with $m=10$ machines and $n=100$ jobs. To model heterogeneity, 20\% of jobs are designated as ``special'' ($p_j \sim \mathcal U[1,200]$) and are restricted to 2 specific machines, while regular jobs ($p_j \sim \mathcal U[1,10]$) can utilize any resource. Intuitively, special jobs represent tasks with large processing requirements that require specific hardware. (For other percentages of special jobs and machines we obtain similar results; see Appendix~\ref{app:exp}). 

The prediction error is controlled via a parameter $R \ge 1$; we set $\hat{p}_j = \lceil p_j/ \xi_j \rceil$ where $\xi_j \sim \mathcal U[1,R]$. We evaluate the algorithms based on the total completion time objective normalized to the optimum (the competitive ratio) and the average number of preemptions per job. Comparisons are performed across exponentially increasing values of $R$. We note that large prediction errors are common in practice \cite{park20183sigma,leis2015good}; in particular, even top-tier database systems routinely misestimate join sizes, which directly dictate execution runtimes, by factors of $1000$ or more.

\textbf{Results:} 
Our experiments demonstrate that SNAP requires considerably fewer preemptions while effectively optimizing total completion time, particularly when prediction errors are large. These findings are summarized in Figure~\ref{fig:experiments-main}. Our main observations are as follows:

\begin{itemize}
    \item Our results show that we can effectively \textbf{leverage the highly preemptive PF algorithm to derive a preemption-parsimonious algorithm} like SNAP. SNAP incurs fewer preemptions than natural benchmarks such as Doubling.
    \item It is known that Blind is 4-competitive if all predictions are accurate~\cite{AnandGK12}, but its competitive ratio degrades linearly with error. Thus, Blind performs well for small $R$ but becomes impractical as $R$ increases. 
    \item To achieve the ``best of both worlds'', we implemented Hybrid SNAP. This approach yields promising results, consistently outperforming or matching others in both the objective value and in minimizing preemptions.
\end{itemize}

Further experimental results, including a detailed analysis of the tradeoff between the competitive ratio and preemptions as a function of $\delta$, are deferred to Appendix~\ref{app:exp}.

\section{Conclusions}

In this work, we presented the first analytical study of the trade-off between job latency minimization and preemption overhead in online scheduling with predictions. By leveraging machine-learned predictions, our SNAP and PMLF algorithms avoid the excessive preemption required by classical non-clairvoyant algorithms. We proved that these algorithms are constant-competitive in various environments, including single, identical, and unrelated machines, with preemption costs that scale logarithmically with the prediction error. Our experiments show that SNAP remains stable in minimizing both preemption overhead and total completion time, and also demonstrate the favorable performance of the other heuristics developed in this study. 

Future research could further explore hybridizing the Blind and SNAP frameworks. This approach seeks to leverage the efficiency of Blind under accurate predictions while maintaining the robustness of SNAP in the presence of large errors, effectively bridging the gap between optimistic performance and worst-case reliability.

\printbibliography

\appendix

\section{Lower bounds}

\subsection{General lower bound on the number of preemptions}

For a fixed algorithm, let $\PM(x)$ denote the maximum number of preemptions that the algorithm makes on a job of size $x$.
The following theorem was originally stated and proven by Motwani et al.~\cite{MPT94}. We give a more detailed proof for completeness and clarity.

\begin{theorem}[\cite{MPT94}]\label{thm:lower-bound-nonclairvoyant}
Consider a non-clairvoyant algorithm that preempts a job of size $x$ at most $\PM(x)$ times.
If $\PM(x) = o(\log x)$, then for every $n \geq 2$ there exists an instance with $n$ jobs on which the algorithm has a competitive ratio of at least $n/2$.
\end{theorem}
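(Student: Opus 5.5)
We will prove this with an adversary argument on a single machine, using that the algorithm is non-clairvoyant: until a job completes, the algorithm cannot tell it apart from a longer job. Fix $n \geq 2$. Since $\PM(x) = o(\log x)$, we can choose a size $x$ large enough that $\PM(x) + 1 < \log_2 x$ by a wide margin; precisely, we want $x$ with $x^{1/(\PM(x)+1)} \geq n$ or larger. All $n$ jobs are released at time $0$, and the adversary keeps every job's size undetermined (say, "at most $x$") until the algorithm commits to a run.

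\textbf{Key steps.} Run the algorithm on $n$ jobs all of size $x$ and observe the schedule. Each job is processed in at most $\PM(x)+1$ maximal uninterrupted pieces, since it is preempted at most $\PM(x)$ times. Its cumulative processing thus increases through at most $\PM(x)+1$ "bursts" before it reaches $x$.

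Pigeonhole over the geometric scale $1, n, n^2, \dots, x$, which has about $\log_n x \gg \PM(x)+1$ levels. For each job we find a level $s = n^{a}$ such that some single burst carries the job's processed amount from below $s$ to at least $n s$, i.e.\ the job is run uninterrupted for at least $(n-1)s$ time while its processed amount is still below $s$. Such a level exists because the bursts must span all the levels while there are fewer bursts than levels.

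Now focus on the first burst in the whole schedule, over all jobs, that makes such a long run. Suppose it belongs to job $j$, starting at processed amount $q < s$. The adversary redefines sizes consistently with the algorithm's history up to that point. Job $j$ keeps a size large enough that it is still running; in fact its size can stay at $x$. Every other job $i$ receives size $\max(q_i, 1)$ plus a tiny amount, where $q_i$ is its processing so far, so that it finishes as soon as it is next run. By non-clairvoyance the algorithm behaves identically up to that moment.

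Compare costs. The algorithm keeps $n-1$ short jobs waiting during a burst of length about $(n-1)s$, so it pays roughly $(n-1)\cdot n s$ extra. The optimum runs the short jobs first, and their total remaining work is at most $n s$ if the levels are chosen so that the other jobs have received less than $s$. A careful accounting, with $x$ chosen large, then gives a ratio of at least $n/2$.

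\textbf{Main obstacle.} The delicate step is choosing the level $s$ consistently across jobs. We need the other jobs' processed amounts, and hence the optimum's cost, to be bounded by $O(s)$ at the moment of the long burst, while the burst itself has length $\Omega(ns)$. We will try to handle this by taking the lowest level at which any job makes a jump of factor $n$. Below that level, every job's progress has been in steps of factor less than $n$. Bounding the accumulated processing of the other jobs then needs the preemption budget again: with few bursts and no large jump, no job can have passed level $s$. We expect this to yield an optimum of $O(ns + \text{(completion of } j))$. We expect the remaining constant-factor bookkeeping to reach exactly $n/2$ to be routine.
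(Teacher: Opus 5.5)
\textbf{There is a genuine gap, and it sits exactly at the step you flag as the main obstacle.} Levels measured in each job's own processed amount give you no control over the other jobs at the moment $j$ jumps. Choosing the earliest or the lowest jump does not help. A job that never jumps needs only one preemption inside each level $[n^a,n^{a+1})$ it passes, so within its budget it can reach a processed amount of about $n^{\PM(x)}$. Suppose every job other than $j$ is advanced this way first, and $j$ then jumps from below $1$ to above $n$. That jump is both the earliest and the lowest, yet the other $n-1$ jobs have received about $n^{\PM(x)} \gg s=1$, so the instance you build certifies only a constant ratio.

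Two further steps also fail:
\begin{itemize}
\item Keeping $p_j = x$ adds a term of at least $x$ to both $\alg$ and $\opt$, and this swamps the cost of the burst. You must instead set $p_j$ to $j$'s processed amount at the end of its uninterrupted run. Then all $n$ jobs finish no earlier than that time, and $\alg \ge n p_j$.
\item A separation factor of $n$ is too weak. $\opt$ pays the sum of completion times of the $n-1$ short jobs, which is up to about $n^2 s/2$, not $O(ns)$. With other jobs near $s$ and $p_j \approx ns$, your accounting certifies a ratio below $2$. You need every other job's size to be at most $p_j/n^2$, which gives $\opt \le \frac{(n-1)(n+2)}{2n^2}p_j + p_j \le 2p_j$.
\end{itemize}

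\textbf{The missing idea is to pigeonhole over wall-clock time and over the preemptions of all jobs together.} The paper sets $B = 2n^2$ and gives all jobs size $x_0 = B^{K+1}$. It takes $x_0$ large enough that the total number of preemptions, at most $n\PM(x_0)$, is below $K$; this is why it requires $\PM(x) \le \log_{B^{2n}} x$ rather than a per-job condition like yours.

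Then some interval $[B^{i-1}, B^i)$ with $1 \le i \le K$ contains no preemption at all. So a single job $j$ runs throughout it and gains at least $(B-1)B^{i-1}$. Every other job has received at most $B^{i-1}$, simply because that is the total time elapsed before the interval on a single machine.

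Now set $p_j := q_j(B^i)$ and $p_{j'} := q_{j'}(B^i) + \eps$ for small $\eps$. This leaves the history unchanged and forces every completion to occur at or after $B^i \ge p_j$, so $\alg \ge n p_j$. It also gives $n^2 p_{j'} \le p_j$, hence $\opt \le 2p_j$ and a ratio of at least $n/2$. This global time bound on the other jobs' progress is exactly what your per-job level scheme cannot supply.
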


\begin{proof}
	Let $B:=2n^2$ and $m=B^{2n}$. 
	Since $\log_m x = \log x / \log m$, 
	there exists a sufficiently large $x_0$ such that 
	for every $x \geq x_0$ we have $\PM(x) \leq \log_m x$, 
	and additionally we can assume that $x_0 = B^{K+1}$ for some integer $K \geq 2$.
	
    Consider an instance on $n$ jobs, each of size $x_0$, and the algorithm's
    schedule at time $t = x_0$. Note that before this time, the algorithm cannot
    complete any job.
	By our assumption, the total number of preemptions until time $t$ can be at most
	\[
	    n \cdot \log_m x_0 
					= n \cdot \log_{B^{2n}} B^{K+1} 
					= n \cdot \frac{K+1}{2n} 
					= \frac{K+1}{2} < K \ ,
	\]
	because $K \geq 2$.
	
	Now, partition the time interval $[1, B^K] \subseteq [0,x_0]$
	into $K$ intervals $I_i = [B^{i-1}, B^i)$ for $i = 1, \ldots, K$.
	Since there are $K$ intervals before time $t$ but less than $K$ preemptions,
	by the pigeonhole principle, there must be at least one interval $I_i$ that contains no preemption.
	Fix such an interval $I_i$ and let $j$ be the job that is processed during $I_i$.
	Note that during $I_i$, job $j$ receives a total processing 
	of at least $B^i - B^{i-1} = (B-1) \cdot B^{i-1} = (2n^2-1) \cdot B^{i-1}$.
	Let $e_{j'}(B^i)$ denote the amount of processing that job $j'$ received until time $B^i$.
	Note that $e_{j'}(B^i) \geq (2n^2-1) \cdot B^{i-1}$, and for every job $j' \neq j$ we have $e_{j'}(B^i) \leq B^{i-1}$.
	
	We next define a new instance with the same set of jobs, but with processing times
	\begin{itemize}[nosep]
		\item $p_j = e_j(B^i)$ and
		\item $p_{j'} = e_{j'}(B^i) + \varepsilon$ for all $j' \neq j$, where $\varepsilon = \frac{B^{i-1}}{2n^2}$.
	\end{itemize}
	By construction and the fact that the algorithm is deterministic,
	observe that no job completes before time $B^i$.
	Moreover, job $j$ completes exactly at time $B^i$.
	
	We compute a lower bound on the competitive ratio of the algorithm on this instance.
	Note that $C_j \geq C_{j'}$ for all $j' \neq j$. Hence, 
	\[
	    \alg \geq n C_j \geq n \cdot p_j \ .
	\]
	We next give an upper bound on the optimal objective value.
	Note that for every job $j' \neq j$, we have
	\[
		n^2 p_{j'} \leq n^2 \left(B^{i-1} + \frac{B^{i-1}}{2n^2}\right) = n^2 B^{i-1} + \frac{B^{i-1}}{2} \leq (2n^2-1) \cdot B^{i-1} \leq p_j \ .
	\]
	Let $p = p_j / n^2$.
	Scheduling job $j$ last achieves a total completion time of at most
	\[
	    \frac{(n-1)n}{2}p + (n-1)p + p_j 
					\leq n^2 p + p_j \leq 2p_j  \ .
	\]
	Thus,
	\[
	    \frac{\alg}{\opt} \geq \frac{n \cdot p_j}{2p_j} = \frac{n}{2} \ ,
	\]
	which completes the proof.
\end{proof}

\begin{theorem} \label{thm:lb2}
	If for each job $j$ it holds that $\hp_j \leq p_j \leq \lambda_j \hp_j$
	and there exists a job $j$ such that the algorithm preempts $j$ at most $o(\log \lambda_j)$,
	then the algorithm has a competitive ratio of at least $\Omega(n)$ for sufficiently large $\lambda$ and $n$.
\end{theorem}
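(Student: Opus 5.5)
The plan is to reduce Theorem~\ref{thm:lb2} to the argument of Theorem~\ref{thm:lower-bound-nonclairvoyant} by rescaling. Predictions carry information only up to the factor $\lambda_j$. So we give every job the same prediction $\hp_j = 1$, which makes the algorithm effectively non-clairvoyant on sizes in $[1,\lambda]$ with $\lambda_j = \lambda$ for all $j$. The algorithm's behaviour is then a deterministic function of the common prediction and of which jobs have completed so far, exactly as in the proof of Theorem~\ref{thm:lower-bound-nonclairvoyant}.

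We interpret the hypothesis uniformly. The algorithm preempts some job at most $\PM(\lambda) = o(\log\lambda)$ times, meaning that on instances with prediction ratio bound $\lambda$, the per-job preemption budget grows as $o(\log \lambda)$. With $B = 2n^2$ and $m = B^{2n}$, we choose $\lambda$ large enough that $\PM(\lambda) \le \log_m \lambda$, and such that $\lambda = B^{K+1}$ for an integer $K \ge 2$.

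We run the algorithm on $n$ jobs with $\hp_j = 1$ and true size $\lambda$, observing it up to time $x_0 := \lambda$; by then no job has completed. As before, the total number of preemptions before $x_0$ is at most $n\log_m \lambda = (K+1)/2 < K$. By pigeonhole, some interval $I_i = [B^{i-1}, B^i)$ contains no preemption, and a single job $j$ is processed throughout it.

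We then define the modified instance exactly as in Theorem~\ref{thm:lower-bound-nonclairvoyant}. We set $p_j = e_j(B^i)$ and $p_{j'} = e_{j'}(B^i) + \varepsilon$ for $j' \neq j$, with $\varepsilon = B^{i-1}/(2n^2)$. We must check that this instance is consistent with the predictions, i.e.\ $1 \le p_{j'} \le \lambda$. This is where care is needed. Jobs $j'$ may have $e_{j'}(B^i) = 0$, but then $p_{j'} \ge \varepsilon \ge 1/(2n^2)$. To fix this, we shift the time grid by starting the intervals at $B$ rather than $1$, or we scale all sizes so the minimum is at least $1$; scaling predictions uniformly is harmless since the algorithm sees the same (scaled) information. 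The upper bound holds because $p_{j'} \le B^i \le B^K < \lambda$.

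Determinism ensures that the algorithm's run on the new instance coincides with the original run up to $B^i$. Hence $\alg \ge n p_j$ while $\opt \le 2p_j$, giving ratio $n/2 = \Omega(n)$.

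The main obstacle is the quantifier interpretation: the hypothesis speaks of one job being preempted $o(\log\lambda_j)$ times, while the pigeonhole argument needs a bound on the \emph{total} number of preemptions. I would handle this by assuming the preemption bound is a function of $\lambda_j$ applied to every job (symmetry makes all jobs identical a priori) and making that assumption explicit. The other delicate point is the lower range constraint $p_{j'} \ge \hp_{j'}$, which the rescaling handles.
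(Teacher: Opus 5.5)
Your proposal is correct and matches the paper's proof, which is exactly this reduction: set $\hp_j = 1$ for all jobs, so that $\lambda_j = p_j$ and the jobs are indistinguishable, then rerun the argument of Theorem~\ref{thm:lower-bound-nonclairvoyant}. Your two additions are points the paper's one-line proof leaves implicit, and both are needed: the preemption bound must be read uniformly over all jobs (under the literal ``some job'' reading the statement is false, since running jobs $2,\dots,n$ by Round-Robin and then job $1$ without preemption is $3$-competitive, as $C_1 = \sum_i p_i \le \opt$), and the interval grid should start at $B$ so that $p_{j'} \ge \varepsilon \ge 1 = \hp_{j'}$; use the grid shift rather than your rescaling alternative, because rescaling assumes the algorithm is scale-invariant, which is not given.
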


\begin{proof}
	Follows directly from the proof of \Cref{thm:lower-bound-nonclairvoyant} by
	setting $\hp_j = 1$ for all jobs $j$. This makes the jobs indistinguishable and the proof strategy applicable.
    Moreover, $\lambda_j = p_j / \hp_j = p_j$ for all jobs $j$,
    hence the guarantees translate.
\end{proof}

\subsection{Trade-off lower bound}

\begin{theorem}
	If a deterministic learning-augmented algorithm uses at most $s$ preemptions in total for 
	a correctly predicted instance with $n$ jobs, then there exists an instance where every job is 
	overestimated and the algorithm has a competitive ratio of at least $\Omega((n-s)/s)$.
\end{theorem}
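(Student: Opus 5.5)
The plan is an adversary argument on a single machine. We fix $n$ jobs that all receive the \emph{same} prediction $\hp_j = P$ for some large $P$. Because the algorithm is deterministic and the jobs are indistinguishable, its behavior is fixed until the first completion. In the correctly predicted instance ($p_j = P$ for all $j$), the algorithm uses at most $s$ preemptions in total. We then build a second instance with the same predictions, in which every job is overestimated ($p_j \le P$, strictly smaller for all jobs). The algorithm behaves identically on both instances until some job completes, so its execution on the second instance can be read off from the first.

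First, look at the algorithm's schedule on the correct instance. It consists of at most $s+n$ maximal uninterrupted processing \emph{runs}, since every run ends in a preemption or a completion. Let $t_1 = P$ be the earliest time at which some job could finish. Before $t_1$ the schedule is a sequence of at most $s+1$ runs. Among the at most $s$ preemptions, at most $s$ jobs are ever preempted before $t_1$. Hence there are at least $n-s$ jobs that, when first started, run without preemption until they finish or until after $t_1$.

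The key observation is about these $n-s$ jobs. Each of them that has not started by time $t_1$ waits behind the whole initial segment. Each one that did start runs for its full length $P$ uninterrupted.

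We now shrink every job to a tiny size $\epsilon \ll P/n$ — or better, choose sizes adaptively so that the run structure is preserved. On the overestimated instance, the same deterministic schedule is followed up to the first completion. That first completion happens when the first run has processed $\epsilon$, and after that the algorithm's behavior may change. This is the obstacle: after completions, the algorithm gains information and may reorder.

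To handle it, I would set sizes adaptively in the style of \Cref{thm:lower-bound-nonclairvoyant}. Define $p_j$ equal to the amount of processing job $j$ has received by a carefully chosen time $\tau$ in the correct-instance schedule, plus a tiny $\varepsilon$, except for one job that completes exactly at $\tau$. Then the schedule is identical up to $\tau$. The choice of $\tau$ uses pigeonhole over the at most $s+1$ runs. Take $\tau$ inside a long run of job $j^\ast$ that lasts $\Omega(nP/s)$ total and in which the others have received little. By averaging, the total time $nP$ splits over at most $s+n$ runs. So we need some run whose length is $\Omega(P\cdot n/s)$ relative to the work of the jobs that stay alive, and I would argue that the alive jobs' current work summed is $O(sP)$.

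With this construction, all $n$ jobs are alive until $\tau$, so $\alg \ge n\tau$. Meanwhile $\opt$ schedules the $n-1$ small jobs first and $j^\ast$ last, costing about $n\cdot(\text{small total}) + p_{j^\ast}$. This gives the ratio $\Omega((n-s)/s)$. The main obstacle I expect is the pigeonhole step: guaranteeing a run long enough compared to the other jobs' accumulated work when $s$ is comparable to $n$. If that fails, I would fall back to a geometric time partition $[B^{i-1},B^i)$ as in \Cref{thm:lower-bound-nonclairvoyant}, using $s$ in place of the $o(\log x)$ preemption budget.
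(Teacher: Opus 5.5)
Your setup is the right one: all predictions equal to $P$, copy the deterministic schedule up to a cut, and set sizes to the processed amount plus $\varepsilon$. There is a genuine gap, though. Everything rests on the pigeonhole step you leave open, and as formulated it cannot succeed.

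In the correctly predicted instance every job has size $P$, so no job ever receives more than $P$ units, let alone in a single run. A run of length $\Omega(nP/s)$ therefore does not exist for $s<n$. Averaging the total time $nP$ over $s+n$ runs is also beside the point. The copied prefix cannot extend past the first completion of the correct instance, because that job would need $p_j=\hp_j=P$ and would not be overestimated.

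Moreover, unless $j^\ast$ received almost all the work done before $\tau$, your estimate $\opt\approx n\cdot(\text{small total})+p_{j^\ast}$ is of the same order as $n\tau$. It then gives no ratio at all. The geometric-partition fallback of \Cref{thm:lower-bound-nonclairvoyant} can be completed, and would even give $\Omega(n)$. But it needs $P\ge(2n^2)^{s+1}$, so that $s+1$ intervals fit before any job can finish, and you do not carry it out.

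The missing idea is already implicit in your remark that before $t_1=P$ the schedule has at most $s+1$ runs. Hence at most $s+1$ distinct jobs are touched before $P$, and a set $Z$ of at least $n-s-1$ jobs receives no processing at all.

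\textbf{Construction.} Cut at time $P$, or at $P/2$ to keep the touched jobs strictly overestimated. Give each job of $Z$ size $1$, and each touched job size $q_j+\varepsilon$, where $q_j$ is its processing up to the cut. Nothing completes before the cut, so $\alg\ge P|Z|$ up to a constant. No job needs to finish exactly at $\tau$.

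\textbf{Bounding $\opt$.} Do not charge the touched work $n$ times. Run the unit jobs first and the at most $s+1$ touched jobs last. The touched sizes sum to at most $P+(s+1)\varepsilon$, so $\opt\le n^2+(s+1)(n+P+n\varepsilon)$. Taking $P\ge n^3$ gives $\Omega((n-s)/s)$.

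The target ratio tolerates $\opt$ paying about $P$ for each of the $s+1$ touched jobs, so no dominant run is needed. This is exactly the paper's proof.
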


\begin{proof}
	Let $\hp_j = p_j = P$ for all jobs $j \in J$ be the correctly predicted instance with $n$ jobs. Consider the schedule of the fixed deterministic algorithm on this instance.
	Until time $P$, the algorithm cannot have completed any job. 
	Assume that until time $P$, the algorithm does at most $s$ preemptions, hence the algorithm 
	can touch at most $s+1$ jobs until time $P$.
	Let $Z$ denote the set of jobs that were not touched by the algorithm until time $P$,
	and note that $|Z| \geq n - (s+1)$.
	For each job $j \in J \setminus Z$, denote by $q_j$ the amount of work done by the algorithm on $j$ until time $P$.
	
	We now construct another instance as follows. We set $\hp_j = P$ for all $j \in J$ and
	\[
	    p_j = \begin{cases}
						   1 & \text{if } j \in Z \\
						   q_j+\varepsilon & \text{if } j \in J \setminus Z
					\end{cases}
	\]
	Since the algorithm is deterministic, it behaves the same on this new instance as on the original instance, because with
	the same schedule as before it again will not complete any job until time $P$.
	Thus, $\alg \geq P \cdot |Z|$ for this instance.
	However, an optimal solution can schedule all short jobs in $Z$ first, and then schedule the long jobs in $J \setminus Z$ in any order.
	This gives $\opt \leq \frac{|Z| (|Z| + 1)}{2} + (n - |Z|) (|Z| + P + n\varepsilon)$.
	By choosing $P \geq n^3$,
	we conclude that
	\[
	    \frac{\alg}{\opt} \geq \frac{P \cdot |Z|}{\frac{|Z| (|Z| + 1)}{2} + (n - |Z|) (|Z| + P + n\varepsilon)}
		\geq \frac{P \cdot (n-(s+1))}{ n^2 + (s+1) (n + P + n\varepsilon)}
		\geq \Omega \left( \frac{n-s}{s} \right) \ .
	\]
	This completes the proof of the theorem.
\end{proof} 

\section{Single machine} \label{app:single-machine-appendix}

\subsection{Analysis of PMLF}

We present the full proof of Theorem~\ref{thm:single}.

\thmSingle*

\begin{proof}
    We first bound the number of preemptions. Note that $j$ is only preempted when it is moved between queues.
    Since job $j$ starts in $Q_{k'}$ where $(1+\delta)^{k'} \leq \hat p_j < (1+\delta)^{k'+1}$ 
    and completes in $Q_{k}$ where $(1+\delta)^{k} \leq p_j < (1+\delta)^{k+1}$, 
    it is moved at most $k - k' + 1 = O(\log_{1+\delta}(p_j / \hat p_j)) = O(\frac{1}{\delta} \log_{2}(p_j / \hat p_j))$ times across queues. 
	
	We next prove an upper bound on the competitive ratio, for which we 
	use a classic delay decomposition~\cite{MPT94}.
	Fix a job~$j$ with $(1+\delta)^k \leq p_j < (1+\delta)^{k+1}$ and let $C_j$ 
    denote its completion time. Note that at the completion time $C_j$, it belongs to queue~$Q_k$.
	Let $d(i,j)$ denote the total delay that job $i$ causes to job $j$, %
    which is the total amount of work done on $i$ before time $C_j$.
	Clearly, $d(i,j) \leq p_i$.
	Moreover, since $j$ completes at time $C_j$ while belonging to queue $Q_k$, the algorithm can only work on $i$ before $C_j$ whenever $i$ belongs to a queue $Q_{k'}$ with $k' \leq k$. This
	implies $d(i,j) < (1+\delta)^{k+1}$.
	Thus, $d(i,j) \leq \min\{p_i, (1+\delta)^{k+1} \}$, which is at 
	most $(1+\delta) \min\{p_i, p_j\}$ because $p_j \geq (1+\delta)^k$.
	Since our algorithm does not idle, %
	\[ 
	    C_j \leq \sum_{i=1}^n d(i,j) \leq (1+\delta) \sum_{i=1}^n \min\{p_i, p_j\} \ .
	\]
	Summing over all jobs $j$ gives
	\begin{equation*} 
		\sum_{j=1}^n C_j \leq (1+\delta) \sum_{j=1}^n \sum_{i=1}^n \min\{p_i, p_j\} \leq (1+\delta) \cdot 2  \opt \ ,
	\end{equation*}
	where the second inequality follows the fact that scheduling jobs in non-decreasing order of their processing times is optimal~\cite{smith1956various}, thus, $\opt = \sum_j p_j + 
\sum_{j=1}^n \sum_{i>j}^n \min\{p_i, p_j\}$.    
\end{proof}

\subsection{Job arrival over time}

We next show how PMLF and its analysis in the proof of \Cref{thm:single} can be
generalized to the setting with online arrival of jobs.
In this setting, a job $j$ arrives at its \emph{release time} $r_j \geq 0$,
and an online algorithm does not know about $j$'s existence before that time.
PMLF can be extended in a straightforward way to this dynamic scenario:
\begin{itemize}
    \item Maintain a set of FIFO queues $\{Q_0,Q_1,\ldots\}$.
    \item Whenever a job $j$ with prediction $\hat p_j$ arrives,
    we push it into queue $Q_k$ where $(1+\delta)^k \leq \hat p_j \leq (1+\delta)^{k+1}$.
    \item At any time $t$, work on the front job of the non-empty
    queue of the lowest index.
    \item If a job $j \in Q_i$ has received a total processing equal to $(1+\delta)^{i+1}$, move it from $Q_i$ to the end of $Q_{i+1}$.
\end{itemize}

We prove the following theorem.

\begin{theorem}\label{thm:single-machine-release-times}
    For any constant $\delta > 0$, PMLF is $(3+2\delta)$-competitive 
    for minimizing the total completion time on a single machine
    with online job arrival and performs at most $O(\frac{1}{\delta} \log_2(p_j / \hat p_j))$ preemptions for each job $j$ if there are only underpredictions.
\end{theorem}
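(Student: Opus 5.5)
The preemption bound carries over verbatim from the proof of \Cref{thm:single}. Job $j$ is preempted only when it moves between queues. With release times, a job can additionally be interrupted when a newly arrived job lands in a lower-indexed queue. Such preemptions are charged to the arriving job rather than to $j$. Each arrival preempts at most one running job, so this adds at most one preemption per job and keeps the per-job bound $O(\frac{1}{\delta}\log_2(p_j/\hat p_j))$ (plus $O(1)$).

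For the competitive ratio, I would again use the delay decomposition. Fix job $j$ with $(1+\delta)^k \le p_j < (1+\delta)^{k+1}$. The algorithm never idles while a job is alive, so the interval $[r_j, C_j]$ is fully busy. Hence
\[ C_j - r_j \le \sum_i d(i,j), \]
where $d(i,j)$ is the work on $i$ performed during $[r_j, C_j)$. While $j$ is alive it sits in a queue of index at most $k$. Any job $i$ worked on at such a time must be in a queue of index at most $k$, so $i$ has received less than $(1+\delta)^{k+1} \le (1+\delta)p_j$ processing, and $d(i,j) \le (1+\delta)\min\{p_i,p_j\}$. Summing over $j$ gives
\[ \sum_j C_j \le \sum_j r_j + (1+\delta)\sum_j \sum_i \min\{p_i,p_j\}. \]

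The main obstacle is bounding this double sum against $\opt$ with release dates, since SPT-order optimality no longer applies. I would refine the sum by splitting it according to who arrives first. If $i$ is released after $j$, only work on $i$ inside $[r_j,C_j)$ counts; if $i$ is released before $j$, only work done after $r_j$ counts. For each unordered pair this gives a contribution of at most $(1+\delta)\min\{p_i,p_j\}$ counted once in each direction, i.e.\ the pair sum $\sum_{i\ne j}\min\{p_i,p_j\} + \sum_j p_j$ carries a factor $2(1+\delta)$. I would then compare with the lower bound
\[ \opt \ge \sum_j p_j + \sum_{i<j}\min\{p_i,p_j\} \]
(the optimum of the instance with all release dates set to zero), and with $\opt \ge \sum_j r_j$.

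Combining these, $\sum_j C_j \le \opt + (2+2\delta)\opt = (3+2\delta)\opt$. The delicate point is making sure each pair's mutual delay is bounded by $(1+\delta)\min\{p_i,p_j\}$ in each direction without double counting beyond the factor $2$. This follows the same counting as in the proof of \Cref{thm:single}, restricted to processing done inside $[r_j, C_j)$.
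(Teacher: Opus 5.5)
Your proposal is correct and follows the paper's proof. Both use the delay decomposition on $[r_j, C_j)$ to get $\sum_j C_j \le \sum_j r_j + (1+\delta)\sum_j\sum_i \min\{p_i,p_j\}$, then bound the double sum by $2\bigl(\sum_j p_j + \sum_{i<j}\min\{p_i,p_j\}\bigr)$ (the release-date-free optimum, a lower bound on $\opt$) together with $\sum_j r_j \le \opt$, so your pair-splitting by arrival order is unnecessary. Your charging of arrival-induced preemptions to the arriving job is a small point the paper skips by citing Theorem~\ref{thm:single}, though it bounds preemptions only in total (amortized) form rather than literally for each job.
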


\begin{proof}
    The proof for the bound on the number of preemption is the same
    as for \Cref{thm:single}.

    We next prove an upper bound on the competitive ratio, for which we 
	use a classic delay decomposition~\cite{MPT94}.
	Fix a job~$j$ with $(1+\delta)^k \leq p_j < (1+\delta)^{k+1}$ and let $C_j$ 
    denote its completion time. Note that at the completion time $C_j$, it belongs to queue~$Q_k$.
	Let $d(i,j)$ denote the total delay that job $i$ causes to job $j$ after $j$ has been released, 
    which is the total amount of work done on $i$ between times $r_j$ and $C_j$.
	Clearly, $d(i,j) \leq p_i$.
	Moreover, since $j$ completes at time $C_j$ while belonging to queue $Q_k$, the algorithm can only work on $i$ during $[r_j,C_j]$ whenever $i$ belongs to a queue $Q_{k'}$ with $k' \leq k$. This
	implies $d(i,j) < (1+\delta)^{k+1}$.
	Thus, $d(i,j) \leq \min\{p_i, (1+\delta)^{k+1} \}$, which is at 
	most $(1+\delta) \min\{p_i, p_j\}$ because $p_j \geq (1+\delta)^k$.
	Since our algorithm does not idle unnecessarily, we have
	\[ 
	    C_j 
        \leq r_j + \sum_{i=1}^n d(i,j) 
        \leq r_j + (1+\delta) \sum_{i=1}^n \min\{p_i, p_j\} \ .
	\]
	Summing over all jobs $j$ gives
	\begin{equation*} 
		\sum_{j=1}^n C_j 
        \leq \sum_{j=1}^n r_j + (1+\delta) \sum_{j=1}^n \sum_{i=1}^n \min\{p_i, p_j\} 
        \leq \opt + (1+\delta) \cdot 2  \opt \ ,
	\end{equation*}
	where the second inequality uses
    \begin{itemize}
        \item that
    scheduling jobs in non-decreasing order of their processing times is optimal~\cite{smith1956various}, thus, $\opt = \sum_j p_j + 
\sum_{j=1}^n \sum_{i>j}^n \min\{p_i, p_j\}$, and
    \item $\sum_{j=1}^n r_j \leq \opt$ is a trivial lower bound.
    \end{itemize}
    This completes the proof of the theorem.
\end{proof}

\subsection{Weakly enforcing underestimations}

In this section, we prove \Cref{thm:single-weakly-enforce}, which we first restate for convenience.

\thmSingleEnforce*

\begin{proof}
Let $J^u$ and $J^o$ be the set of overestimated and underestimated jobs. Note that $|J^o| \leq g$.
Let $n^u(t)$ be the number of underestimated unfinished jobs at time $t$, and $n(t)$ be the total number of unfinished jobs at time $t$.
Thus, $n(t) \geq g/\gamma \geq (n(t) - n^u(t)) / \gamma$ for every
time $t \leq T$.
Now we can compute
\begin{align*}
	\alg = \int_0^\infty n(t) \, dt &= \int_{0}^T n^u(t) \, dt + \int_{0}^T (n(t)-n^u(t)) \, dt + \int_{T}^\infty n(t) \, dt \\
	&\leq \int_{0}^T n^u(t) \, dt + \int_{0}^T \gamma n(t) \, dt + \int_{T}^\infty n(t) \, dt \\
	&\leq \int_{0}^T n^u(t) \, dt + \gamma \alg + \int_{T}^\infty n(t) \, dt \ .
\end{align*}
Hence,
\begin{align}
(1-\gamma) \alg 
&\leq \int_{0}^T n^u(t) \, dt + \int_{T}^\infty n(t) \, dt \notag \\
&\leq \sum_{j \in J^u} C_j + \sum_{j \in J^o} (C_j - T) \ ,
\label{eq:single-overestimated}
\end{align}
where the last inequality uses the fact that each job $j \in J^u$ is counted in both $n^u(t)$ and $n(t)$, and each job $j \in J^o$ is only counted in $n(t)$ and not in $n^u(t)$.
We can now consider both sums individually.

For every underestimated job $j \in J^u$, we can use the same proof as in the proof of \Cref{thm:single}. Before $j$'s
completion, it can only be delayed by jobs $i$ that are in queues of
index at most $\lfloor \log_{1+\delta} p_j \rfloor$, hence $i$
delays $j$ by at most $\min\{p_i, (1+\delta)p_j\}$ many times.
Thus, 
\[
C_j \leq \sum_{i \in J} \min\{p_i, (1+\delta)p_j\} \ .
\]

For every overestimated job $j \in J^o$, we can also 
use the same argumentation but only for delays that happen after 
time $T$. This gives 
\[
C_j - T \leq \sum_{i \in J} \min\{p_i, (1+\delta)p_j\} \ .
\]

Combining both bounds implies that \eqref{eq:single-overestimated} is at most
\[
    (1+\delta) \sum_{j \in J} \sum_{i \in J} \min\{p_i, p_j\}
    \leq (1+\delta) 2 \cdot \opt \ .
\]
Thus, $\alg \leq \frac{1+\delta}{1-\gamma}\cdot 2 \cdot \opt = 2(1+\varepsilon) \cdot \opt$ by choosing $\delta = \frac{\varepsilon}{2}$ and $\gamma = \frac{\varepsilon}{2(1+\varepsilon)}$.

We next bound the number of preemptions.
For underpredicted jobs the same bound as in \Cref{thm:single}
applies for all preemptions before time $T$.
At time $T$, each unfinished job is reinserted at most once,
and after time $T$, each job is preempted at most $O(\log_{1+\delta} p_j)$.
Hence, the total number of preemptions is at most
\[
O \bigg( \frac{1}{\delta} \sum_{j \in J} 
\log \frac{p_j}{\hp_j} + \frac{g}{\gamma \delta} \log p_{\max} \bigg) \ ,
\]
which implies the stated bound for our choice of $\delta$ and $\gamma$.
\end{proof} 

\section{Scheduling on identical parallel machines}\label{sec:identical-machines}

In this section, we show that our 
single machine algorithm PMLF can be extended to scheduling 
on $m$ parallel identical machines.
We first restate the generalization of PMLF for identical parallel machines.
Let $\delta > 0$ be a constant. 
\begin{itemize}
    \item Maintain a set of queues $\{Q_0, Q_1, \ldots \}$.
   Initially, place a job $j$ with $(1+\delta)^k \leq \hat p_j < (1+\delta)^{k+1}$ into $Q_k$.
	\item At any time $t$, for each available job $j$, let $(k,a)$ be the tuple where $k$ is the index of $j$'s current queue, and $a$ is its position in that queue, where the head of the queue has position $1$. Let $\preceq$ denote the total over all available jobs that corresponds to the lexicographic order of those tuples.
    Schedule the first at most $m$ jobs of $\preceq$.
	\item If a job $j \in Q_i$ has received a total processing equal to $(1+\delta)^{i+1}$ at time $t$, remove it from $Q_i$ and place it at the end of $Q_{i+1}$.
\end{itemize}

We show the following theorem, which matches the guarantees of our single machine result given in \Cref{thm:single}.
\begin{theorem}
    \label{thm:identical-parallel}
    For the setting of only underestimated predictions, 
PMLF is $(2+2\delta)$-competitive for minimizing the total completion time on parallel identical machines and 
performs at most $O(\frac{1}{\delta} \sum_j \log(p_j / \hat p_j))$ preemptions.
\end{theorem}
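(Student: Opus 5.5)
The proof has two parts: the preemption count, which carries over from the single-machine case, and the competitive ratio, for which we generalize the delay argument of \Cref{thm:single} to $m$ machines.

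\textbf{Preemptions.} A job is descheduled only in two situations: when it moves from $Q_i$ to $Q_{i+1}$, or when another job overtakes it in the order $\preceq$. A job is overtaken only when some job enters a lower-indexed queue or the front of its queue. The latter happens only when some job finishes or moves between queues.

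I would charge each preemption to one of these events. Move events number at most $O(\frac1\delta\log(p_j/\hp_j))$ per job, by the same counting as in \Cref{thm:single}. Completions number one per job.

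The subtlety is that one event could displace a job other than the one causing it. I would handle this as follows. Since the scheduled set is always the prefix of the first $m$ jobs of $\preceq$, a single move or completion changes that prefix by at most one job in and one job out. Each event therefore causes $O(1)$ preemptions in total. Summing over events gives $O(\frac1\delta\sum_j(1+\log(p_j/\hp_j)))$.

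\textbf{Competitive ratio.} Fix a job $j$ with $(1+\delta)^k\le p_j<(1+\delta)^{k+1}$. At any time before $C_j$, either $j$ is being processed or all $m$ machines are busy with jobs that precede $j$ in $\preceq$.

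Every job processed ahead of $j$ lies in a queue of index at most $k$. Hence, as in the single-machine proof, the total work any job $i$ receives during the times $j$ waits is at most $\min\{p_i,(1+\delta)^{k+1}\}\le(1+\delta)\min\{p_i,p_j\}$. This gives
\[
C_j\le p_j+\frac{1}{m}\sum_{i\ne j}(1+\delta)\min\{p_i,p_j\}.
\]
Summing over $j$,
\[
\sum_j C_j\le \sum_j p_j+\frac{2(1+\delta)}{m}\sum_{i<j}\min\{p_i,p_j\}.
\]

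\textbf{Lower bound on the optimum.} I would compare this with the standard bound
\[
\opt\ge \max\Bigl\{\sum_j p_j,\ \frac1m\sum_j\sum_{i\le j}\min\{p_i,p_j\}\Bigr\}.
\]
The second term is the single-machine SPT value divided by $m$, which lower-bounds the $m$-machine optimum via the mean-busy-time or speed-$m$ argument. Combining the two terms should give $(2+2\delta)\opt$.

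\textbf{Main obstacle.} The constant is the delicate part. Adding the two lower bounds naively yields roughly $3+2\delta$, not $2+2\delta$.

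To recover the tighter constant I would try to absorb $\sum_j p_j$ into the delay term. Each job's own processing contributes a diagonal term $\min\{p_j,p_j\}/m$ to the speed-$m$ single-machine value. The remaining $(1-1/m)p_j$ would be handled with the known bound that RR on $m$ identical machines is $2$-competitive \cite{BBEM12}: show that PMLF's delay pattern is dominated by RR's up to the factor $1+\delta$, mirroring the paper's remark that MLF approximates RR.

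If that comparison fails, I would settle for a constant near $3$, since only the $O(1)$ guarantee is essential.
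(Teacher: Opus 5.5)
You have a genuine gap, and it is in the lower bound on $\opt$, not in the delay analysis. Your preemption count is fine. In fact a move sends a job to the end of a higher queue and a completion only shifts the others forward, so the only job that can ever drop out of the first $m$ positions of $\preceq$ is the one being moved; the paper relies on exactly this. Your delay inequality $C_j\le p_j+\frac{1+\delta}{m}\sum_{i\ne j}\min\{p_i,p_j\}$ is also correct. But taking the maximum of $\sum_j p_j$ and $\frac1m\sum_j\sum_{i\le j}\min\{p_i,p_j\}$ only gives $3+2\delta$, as you note. The proposed repair, ``PMLF's delay pattern is dominated by RR's,'' is not an argument: the two schedules are not comparable job by job, and known proofs of RR's $2$-competitiveness on $m$ machines rest on exactly the kind of combined lower bound you are missing. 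What you need is a single bound in which the diagonal terms $p_j$ carry coefficient about $\tfrac12$ rather than $\tfrac1{2m}$, together with the pairwise terms.

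The paper gets this by splitting time at $\tau$, the first moment fewer than $m$ jobs remain. It writes $p_j=p^{(1)}_j+p^{(2)}_j$ (work before and after $\tau$). It then bounds $m\min\{C_j,\tau\}\le(1+\delta)\sum_i\min\{p^{(1)}_i,p^{(1)}_j\}$, and uses $C_j\le\min\{C_j,\tau\}+p^{(2)}_j$ because every remaining job runs after $\tau$. Finally it invokes the mixed bound of \Cref{lem:mixed-lower-bound}.

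For your own inequality, the shortest fix is the Eastman--Even--Isaacs bound. It follows from $C_j\ge M_j+p_j/2$ for mean busy times $M_j$, plus the speed-$m$ single-machine relaxation:
\[
\opt\;\ge\;\frac{1}{2m}\sum_j\sum_{i\ne j}\min\{p_i,p_j\}+\frac{m+1}{2m}\sum_j p_j .
\]
Multiplying by $2(1+\delta)$ dominates your bound term by term, since $(1+\delta)\frac{m+1}{m}\ge 1$.

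This route is also more robust than the $\tau$-split under predictions. The paper's step $(1+\delta)^{k_j+1}\le(1+\delta)p^{(1)}_j$ needs $j$ to have actually been processed up to the lower threshold of its queue. That fails for a job still untouched in its prediction-based initial queue at $\tau$. For example, take two machines, two unit jobs, and one job with $\hp_j=p_j=L$: then $\tau=1$ but $p^{(1)}_j=0$. Your inequality involves only true sizes, so it avoids this problem.
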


We split the proof of this theorem into two lemmas.
We first bound the number of preemptions, which is equivalent to 
the argument in the proof of \Cref{thm:single}.
\begin{lemma}\label{lem:identical-preemption}
    For the setting of only underestimated predictions, PMLF uses at most $O(\frac{1}{\delta} \sum_j \log(p_j / \hat p_j))$ preemptions on $m$ parallel identical machines.
\end{lemma}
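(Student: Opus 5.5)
The approach mirrors the preemption argument in the proof of \Cref{thm:single}: we charge every preemption of a job $j$ to a queue transition of $j$, and then count the transitions.

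The difficulty on $m$ machines is that the set of scheduled jobs, the first at most $m$ jobs of $\preceq$, changes over time. A job may therefore stop being processed without changing its queue.

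\textbf{Step 1: when the top-$m$ set changes.} The order $\preceq$ and the queue contents change only at three kinds of events:
\begin{enumerate}[nosep]
\item a job completes;
\item a job moves from $Q_i$ to the end of $Q_{i+1}$;
\item positions shift because a job ahead in the same queue left it.
\end{enumerate}
All jobs in the top-$m$ set are processed simultaneously. A job $j'$ that is not in the top set can enter it only when some job above it in $\preceq$ leaves the top set, either by completing or by being demoted to a later queue. No new jobs arrive, since all are released at time $0$, so nothing can be inserted ahead of a running job. A job appended to the end of $Q_{i+1}$ is behind every job already in $Q_{i+1}$, and behind every job in $Q_{i'}$ with $i'\le i+1$.

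\textbf{Step 2: a running job is preempted only by its own transition.} Let $j$ be in the top-$m$ set and in queue $Q_k$. Consider any job that moves ahead of $j$ in $\preceq$.
\begin{itemize}[nosep]
\item It cannot come from a queue of higher index, because jobs only move to higher-indexed queues.
\item It cannot be a job inserted into $Q_{k'}$ with $k'\le k$ from a lower queue. Such a job would be running and moving up, and it is appended at the tail of its new queue, so if $k'=k$ it lands behind $j$.
\item If it lands in $Q_{k'}$ with $k'<k$, that queue was reachable only from $Q_{k'-1}$. The job was then already ahead of $j$ in $\preceq$, so the number of jobs ahead of $j$ does not increase.
\end{itemize}
The number of jobs ahead of $j$ is therefore non-increasing while $j$ stays in $Q_k$. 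So $j$ remains in the top-$m$ set until it completes or reaches $(1+\delta)^{k+1}$ and moves to $Q_{k+1}$.

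Hence each preemption of $j$ coincides with a transition $Q_k\to Q_{k+1}$ of $j$ itself. After such a transition, $j$ may wait and later resume, which costs at most one preemption per transition.

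\textbf{Step 3: counting.} As in \Cref{thm:single}, $j$ starts in $Q_{k'}$ with $(1+\delta)^{k'}\le \hat p_j$ and ends in $Q_k$ with $(1+\delta)^k\le p_j<(1+\delta)^{k+1}$. It therefore makes at most $k-k'+1=O(\frac1\delta\log_2(p_j/\hat p_j))+1$ transitions. Summing over $j$ gives the claimed bound, where the additive $+1$ per job is absorbed in the usual $D$-style convention. Migrations are charged the same way, since a job keeps its machine while it stays in the top set.

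The main obstacle is Step 2: the monotonicity of a running job's rank in $\preceq$. It requires careful handling of simultaneous transitions at the same instant, which I would treat by breaking ties in a fixed order.
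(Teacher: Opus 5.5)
Your proposal is correct and follows the paper's proof: charge each preemption of $j$ to one of its own queue transitions and count at most $k-k'+1$ transitions. Your Step~2 is the part the paper only asserts for $m$ machines, namely that the set of jobs ahead of a running job in $\preceq$ can only shrink while it stays in its queue. Your remark about the additive $+1$ per job is also apt, since the paper's equality $k-k'+1=O(\log_{1+\delta}(p_j/\hat p_j))$ silently needs it when $p_j/\hat p_j$ is close to $1$.
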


\begin{proof}
We first argue about the number of preemptions, which is equivalent to 
the argument in the proof of \Cref{thm:single}.
Note that $j$ is only preempted when it is moved between queues.
Since job $j$ starts in $Q_{k'}$ where $(1+\delta)^{k'} \leq \hat p_j < (1+\delta)^{k'+1}$ 
and completes in $Q_{k}$ where $(1+\delta)^{k} \leq p_j < (1+\delta)^{k+1}$, 
it is moved at most $k - k' + 1 = O(\log_{1+\delta}(p_j / \hat p_j))$ often. 
\end{proof}

For proving the bound on the competitive ratio,
we require the following mixed lower bound.

\begin{lemma}[Lemma~9 in \cite{BBEM12}]\label{lem:mixed-lower-bound}
    For the problem of minimizing the total completion time of $n$ jobs on $m$ identical parallel machines, given a partition $p_j = p_j^{(1)} + p_j^{(2)}$ for each job $j$,
    it holds that
    \[
        \opt \geq \frac{1}{2m} \sum_{j=1}^n \sum_{i=1}^n \min\{p^{(1)}_i, p^{(1)}_j\} + \sum_{j=1}^n p_j^{(2)} \ .
    \]
\end{lemma}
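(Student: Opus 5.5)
We would prove the bound by taking an optimal schedule for the sizes $p$ and reading off a feasible schedule for the first parts $p^{(1)}$. The $p^{(2)}$ parts then account for the additive term, and the classical ``$m$-fold speed'' lower bound handles the $p^{(1)}$ parts. Throughout we allow the comparison schedule to be preemptive and migratory. This only lowers the optimum, so a lower bound on the preemptive optimum is also a lower bound on $\opt$. The only structural facts we use are that a job cannot run on two machines simultaneously and that each machine processes at most one job at a time.

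\textbf{Step 1: peeling off $p^{(2)}$.} Fix an optimal schedule $S$ for sizes $p$, and let $C_j$ be the completion times. For each job $j$, let $C'_j$ be the first time at which $j$ has received $p_j^{(1)}$ units of processing in $S$. Between $C'_j$ and $C_j$ the job still receives $p_j^{(2)}$ units. Since $j$ is processed at rate at most $1$ at any moment, this gives $C_j - C'_j \geq p_j^{(2)}$. Now truncate $S$: discard all processing of $j$ after time $C'_j$. The result is a feasible schedule for the instance with sizes $p^{(1)}$ in which job $j$ completes at $C'_j$. Hence
\[
\opt \;=\; \sum_j C_j \;\geq\; \sum_j C'_j + \sum_j p_j^{(2)} \;\geq\; \opt^{\mathrm{pmtn}}(p^{(1)}) + \sum_j p_j^{(2)} ,
\]
where $\opt^{\mathrm{pmtn}}(p^{(1)})$ denotes the preemptive optimum for sizes $p^{(1)}$.

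\textbf{Step 2: lower bound for $p^{(1)}$.} Consider any schedule for sizes $p^{(1)}$ and order the jobs by completion time. When the $k$-th job completes, at least $k$ jobs are finished, so at least the sum of the $k$ smallest values $p^{(1)}$ has been processed. The $m$ machines process at most $m$ units per time unit, so the $k$-th completion time is at least $\frac1m$ times the sum of the $k$ smallest sizes. Sort the jobs so that $p^{(1)}_1 \le \dots \le p^{(1)}_n$ and sum over $k$. This gives
\[
\opt^{\mathrm{pmtn}}(p^{(1)}) \;\geq\; \frac1m \sum_{j}\sum_{i \le j} p_i^{(1)} \;=\; \frac1m \sum_j \sum_{i\le j}\min\{p^{(1)}_i,p^{(1)}_j\} .
\]

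\textbf{Step 3: symmetrization.} Write the full double sum as $\sum_i\sum_j \min\{p^{(1)}_i,p^{(1)}_j\} = 2\sum_{i<j}\min + \sum_j p^{(1)}_j$. This is at most $2\sum_{i\le j}\min$. Therefore the Step 2 bound is at least $\frac1{2m}\sum_j\sum_i \min\{p^{(1)}_i,p^{(1)}_j\}$. Combining this with Step 1 gives the claimed inequality.

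The step we expect to need the most care is Step 1. The point is that the truncation must be charged per job, via the ``no parallel self-processing'' constraint, rather than by trying to prove a super-additivity statement for $\opt$. It also has to be checked that removing processing never delays any other job, which holds because we only delete work and never shift it. The remaining steps are standard averaging.
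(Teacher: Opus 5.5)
Your proof is correct. The paper gives no proof of this statement: it imports it as Lemma~9 of \cite{BBEM12} and uses it as a black box in the proof of Lemma~\ref{lem:identical-uniform}. Your argument is therefore a self-contained replacement for the citation rather than a variant of something in the paper.

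Your two ingredients are the right ones. First, truncating a schedule at the time $C'_j$ when job $j$ has received $p^{(1)}_j$ units gives a feasible schedule for the sizes $p^{(1)}$. Because $j$ never occupies two machines at once, this forces $C_j - C'_j \ge p^{(2)}_j$. Second, the volume (squashed-area) bound handles the $p^{(1)}$-instance.

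Your Steps~1 and~2 already yield the sharper inequality
\[
\opt \;\ge\; \frac{1}{m}\sum_{j=1}^n \sum_{i \le j} p^{(1)}_i \;+\; \sum_{j=1}^n p^{(2)}_j \qquad \text{(jobs indexed so that } p^{(1)}_1 \le \dots \le p^{(1)}_n\text{)}.
\]
The factor $\tfrac12$ of the stated version is lost only in your Step~3, via $\sum_i\sum_j \min\{p^{(1)}_i,p^{(1)}_j\} = 2\sum_{i\le j}\min\{p^{(1)}_i,p^{(1)}_j\} - \sum_j p^{(1)}_j$.

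Compared with the citation, your route makes explicit which hypotheses are actually used. The bound holds against the preemptive, even fractional, optimum, and for an \emph{arbitrary} partition of each $p_j$. That is exactly what Lemma~\ref{lem:identical-uniform} needs, since there the partition is defined by PMLF's own schedule (work received before versus after $\tau$). The citation only buys brevity.
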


We now prove that the algorithm is $(2+2\delta)$-competitive.

\begin{lemma}\label{lem:identical-uniform}
    For the setting of only underestimated predictions,
    PMLF is $(2+2\delta)$-competitive for minimizing the total completion time on parallel identical machines.
\end{lemma}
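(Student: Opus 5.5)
We adapt the delay analysis used for \Cref{thm:single} and replace the single-machine optimality formula by the mixed lower bound of \Cref{lem:mixed-lower-bound}, in the spirit of the Round-Robin analysis on identical machines in~\cite{BBEM12}.

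Fix a job $j$ with $(1+\delta)^k \leq p_j < (1+\delta)^{k+1}$. It completes while in $Q_k$. We split the interval $[0, C_j)$ into two kinds of times.
\begin{itemize}[nosep]
\item Times when $j$ itself is processed. Their total length is at most $p_j$.
\item Times when $j$ is alive but not processed. Then all $m$ machines are busy with jobs that precede $j$ in $\preceq$.
\end{itemize}
A job $i$ precedes $j$ only while it sits in a queue of index at most $k$. Hence before $C_j$, job $i$ can have received at most $d(i,j) \leq \min\{p_i, (1+\delta)^{k+1}\} \leq (1+\delta)\min\{p_i,p_j\}$ processing in total, exactly as in the single-machine case.

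In the second kind of time, all $m$ machines work on such jobs $i$. So its total length is at most $\frac{1}{m}\sum_{i\neq j} d(i,j)$. This gives
\[
C_j \leq p_j + \frac{1+\delta}{m}\sum_{i=1}^n \min\{p_i,p_j\}.
\]
The point of this split is that it avoids paying the $1/m$ factor on $j$'s own processing. That is why the partition in \Cref{lem:mixed-lower-bound} is needed.

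Summing over $j$ yields
\[
\sum_j C_j \leq \sum_j p_j + \frac{1+\delta}{m}\sum_j\sum_i \min\{p_i,p_j\}.
\]
Now apply \Cref{lem:mixed-lower-bound} with the trivial partition $p^{(1)}_j = p_j$, $p^{(2)}_j = 0$. This gives $\frac{1}{m}\sum_j\sum_i\min\{p_i,p_j\} \leq 2\,\opt$, so the second term is at most $(2+2\delta)\opt$.

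The first term $\sum_j p_j$ is also at most $\opt$. This naive combination therefore gives $(3+2\delta)$, which is not the claimed ratio. This is the main obstacle: we must merge the two terms using a single lower bound.

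To do this, I would refine the charging. Both the busy-time bound and $j$'s own processing should be expressed with respect to a partition $p_j = p^{(1)}_j + p^{(2)}_j$ chosen to fit \Cref{lem:mixed-lower-bound}. Following~\cite{BBEM12}, I would use a time-indexed view. At any time $t$, let $n(t)$ be the number of alive jobs.
\begin{itemize}[nosep]
\item If $n(t) \leq m$, every alive job is being processed. Charge this time to the $p^{(2)}$ part, since each job is processed at unit rate.
\item If $n(t) > m$, the machines are saturated and the scheduled jobs are the $\preceq$-first ones. Charge this time to the $p^{(1)}$ part via the delay argument, with the $1/m$ factor.
\end{itemize}
Accordingly, define $p^{(2)}_j$ as the processing $j$ receives while $n(t) \leq m$, and $p^{(1)}_j$ as the processing it receives while $n(t) > m$.

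For the times with $n(t) \leq m$, the contribution to $\sum_j C_j$ is $\int n(t)\,dt = \sum_j p^{(2)}_j$. For the times with $n(t) > m$, the contribution is at most
\[
\frac{1}{m}\sum_j\sum_i d^{(1)}(i,j) \leq \frac{1+\delta}{m}\sum_j\sum_i \min\{p^{(1)}_i, p^{(1)}_j\} \cdot (\text{const}).
\]
This is where the delicate point lies: we need $\min\{p^{(1)}_i, (1+\delta)^{k+1}\}$ to relate to $p^{(1)}_j$. The saturated phase precedes the unsaturated phase because $n(t)$ is monotone, so $j$'s processing before the non-saturated phase is $p^{(1)}_j$. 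This should allow the queue-index argument to be applied to $p^{(1)}$ up to the $(1+\delta)$ factor.

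Combining the two parts with \Cref{lem:mixed-lower-bound}, scaled by $(2+2\delta)$, bounds the total by $(2+2\delta)\opt$, as claimed.
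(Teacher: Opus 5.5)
\textbf{There is a genuine gap: the refined argument stops exactly at the crux.} The refined route needs the saturated-phase bound $\int_{\{t:\,n(t)>m\}} n(t)\,dt\le\frac{1+\delta}{m}\sum_j\sum_i\min\{p^{(1)}_i,p^{(1)}_j\}$ with constant exactly $1$. In other words, each job's own processing before the transition must be charged at rate $1/m$. You leave this as ``(const)'' and ``should allow''.

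This is essentially the paper's proof. It splits at the first time $\tau$ with fewer than $m$ alive jobs, uses $C_j\le\min\{C_j,\tau\}+p^{(2)}_j$ together with \Cref{lem:mixed-lower-bound}, and asserts per job that $m\min\{C_j,\tau\}\le\sum_i\min\{p^{(1)}_i,(1+\delta)^{k_j+1}\}\le(1+\delta)\sum_i\min\{p^{(1)}_i,p^{(1)}_j\}$.

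Your doubt is justified, because the step you flag is false. A job still alive at $\tau$ sits in a queue determined by $\hat p_j$, or may not have been touched at all, so $(1+\delta)^{k_j}\le p^{(1)}_j$ need not hold. Take $m=2$ and three jobs with $\hat p_j=p_j=1$. Two jobs run on $[0,1)$ and the third on $[1,2)$. Then $\tau=1$, and the third job has $p^{(1)}=0$ yet waits one unit in the saturated phase. Even in aggregate, $\int_0^{\tau}n(t)\,dt=3>2+2\delta=\frac{1+\delta}{m}\sum_{i,j}\min\{p^{(1)}_i,p^{(1)}_j\}$ whenever $\delta<1/2$. Indeed, the paper's chain would give $\alg\le 3+2\delta<4=\alg$. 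So this route cannot be closed with constant $1$, and any larger constant loses the ratio.

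\textbf{The fix is a sharper lower bound, not a sharper upper bound.} Your first estimate already suffices if you do not throw away the diagonal. Read $d(i,j)$ as the work on $i$ at times when $i$ precedes $j$, since $i$ may receive more before $C_j$ while running next to $j$ from a higher queue. Keeping $i\ne j$ and summing gives $\alg\le\sum_jp_j+\frac{2(1+\delta)}{m}\Sigma$, where $\Sigma:=\sum_{\{i,j\}:\,i\ne j}\min\{p_i,p_j\}$ runs over unordered pairs.

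Your factor $3$ came from pairing \Cref{lem:mixed-lower-bound}, whose diagonal contributes only $\frac{1}{2m}\sum_jp_j$, with a separate bound $\sum_jp_j\le\opt$. Use instead the classical Eastman--Even--Isaacs bound $\opt\ge\frac1m\Sigma+\frac{m+1}{2m}\sum_jp_j$. It applies here because preemption does not help on identical machines with common release dates. It follows from $\opt=\sum_r\lceil r/m\rceil\,p_{[r]}$, with $p_{[1]}\ge p_{[2]}\ge\dots\ge p_{[n]}$, by Abel summation. Then $2(1+\delta)\opt\ge\frac{2(1+\delta)}{m}\Sigma+\frac{(1+\delta)(m+1)}{m}\sum_jp_j\ge\alg$. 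This proves the lemma directly, with no split at $\tau$ needed.
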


\begin{proof}
Note that since there are no release dates, the number of available jobs cannot increase over time. 
Let $\tau$ be the first time in the schedule when there are less than $m$ jobs available.
Note that after time $\tau$, there is no time where at least $m$ jobs are available,
and before time $\tau$, there is no time when a machine is idling.

Fix a job $j$.
Let $p_j^{(1)}$ and $p_j^{(2)}$ be the amounts of processing that $j$ receives until time $\tau$ and after time $\tau$, respectively. ($p_j^{(2)}=0$ if $j$ completes before time $\tau$.)
Note that $p_j = p_j^{(1)} + p_j^{(2)}$.

Let $k_j$ be the highest queue index that $j$ reaches in the schedule before time $\tau$.
Note that for a total time of $\min\{C_j, \tau\}$, the algorithm is busy processing jobs in $Q_0,\ldots,Q_{k_j}$ on exactly $m$ machines.
Since each job $i$ can be processed by at most $\min\{p^{(1)}_i, (1+\delta)^{k_j+1} \}$ amount of time in these queues before time $\tau$, we have
\[
m \cdot \min\{C_j, \tau\}
\leq \sum_{i = 1}^n \min\{p^{(1)}_i, (1+\delta)^{k_j+1}\} 
\leq \sum_{i = 1}^n \min\{p^{(1)}_i, (1+\delta)p^{(1)}_j\} 
\leq (1+\delta) \sum_{i = 1}^n \min\{p^{(1)}_i, p^{(1)}_j\} \ . 
\]
Since $C_j \leq \min\{C_j, \tau \} + p^{(2)}_j$, we have
\begin{align*}
    \alg &\leq \sum_j C_j \leq \frac{1+\delta}{m} \sum_{j=1}^n \sum_{i = 1}^n \min\{p^{(1)}_i, p^{(1)}_j\} + \sum_{j=1}^n p_j^{(2)} \\ 
    &\leq 2(1+\delta) \bigg( \frac{1}{2m} \sum_{j=1}^n \sum_{i = 1}^n \min\{p^{(1)}_i, p^{(1)}_j\} + \sum_{j=1}^n p_j^{(2)} \bigg) \leq 2(1+\delta) \opt \ ,
\end{align*}
where the final inequality is due to \Cref{lem:mixed-lower-bound}.
\end{proof}

\section{Scheduling on unrelated machines}
    \label{sec:u-over}

\subsection{Deferred proofs from \Cref{sec:unrelated}} \label{app:u-deferred}
  
\claimSnapCombination*

\begin{proof}
Consider schedules $S^*_J$, $S^*_X$, and $S^*_Y$ that correspond to 
$\opt(l_k y_j^{(k)} \mid j \in J_k)$, $\opt(l_k y_j^{(k)} \mid j \in X_k)$, 
and $\opt(l_k y_j^{(k)} \mid j \in Y_k)$, respectively. Further, let $C_{j, X}$ 
and $C_{j, Y}$ be job $j$'s completion times in $S^*_X$ and $S^*_Y$, respectively.

To establish the inequality, it suffices to construct a schedule where each job 
$j \in J_k$ with size $l_k y_j^{(k)}$ completes by time $C_{j, X}/\eta$ 
if $j \in X_k$ and by time $C_{j, Y}/(1-\eta)$ if $j \in Y_k$. Towards this end, we ``slow down'' $S^*_X$ by a factor of $\eta \in (0,1)$ and 
$S^*_Y$ by a factor of $1 - \eta$. Formally, let $z_{j, t, X}$ and $z_{j, t, Y}$ 
be job $j$'s processing rates at time $t$ in $S^*_X$ and $S^*_Y$, 
respectively. Then, $j$'s processing rate at time $t$ in the new schedule is defined as:
\[
\eta z_{j, \eta t, X} + (1 - \eta) z_{j, (1 - \eta)t, Y}.
\]
It is straightforward to check that this is a valid schedule since the set of 
feasible processing rates forms a polytope (and is therefore convex), and 
$z_{j, \eta t, X}$ and $z_{j, (1 - \eta)t, Y}$ are both feasible. Further, we have $\int_{t = 0}^{C_{j, X}/\eta} \eta z_{j, \eta t, X} dt = \int_{\tau = 0}^{ C_{j, X}}  z_{j, \tau, X} d \tau$. This means that by time $C_{j, X}/\eta$, $j$ has been processed in the new schedule as much as it has been by time $C_{j, X}$ in schedule $S^*_X$. Therefore, the new schedule completes job $j \in X_k$ by time $C_{j, X}/\eta$ as desired. We can similarly show that $j \in Y_k$ completes by time $C_{j, Y}/(1-\eta)$. This completes the proof. 
\end{proof}

\lemepochbound*

\begin{proof}
    Since PF can complete each job of size $l_k y_j^{(k)}$ in $X_k \subseteq J_k$ within  $l_k$ time steps, 
    we have 
    \begin{equation}\label{eq:bound-x}	
        \opt(l_k y_j^{(k)} \mid j \in X_k) \leq l_k |X_k| \ .
    \end{equation}
    Combining Claims~\ref{claim:u-1} and \ref{claim:u-2} with %
    $|X_k| \leq \beta n_k$, we obtain
    \begin{align*}
	&n_k l_k \\ 
	&\leq 2 \left( \frac{1}{\eta} \opt(l_k y_j^{(k)} \mid j \in X_k) + \frac{1}{1-\eta} \opt(l_k y_j^{(k)} \mid j \in Y_k) \right) \\
	&\leq 2 \left( \frac{1}{\eta} l_k |X_k| + \frac{1}{1-\eta} \opt(l_k y_j^{(k)} \mid j \in Y_k) \right) \\
	&= 2 \left(  \frac{\beta}{\eta} l_k n_k + \frac{1}{1-\eta} \opt(l_k y_j^{(k)} \mid j \in Y_k) \right) \ .
    \end{align*}
    By rearraging terms, we obtain the desired inequality.
\end{proof}

\subsection{Job arrival over time: sketch}

We now discuss how to extend our algorithm and analysis for unrelated machines to accommodate jobs arriving over time. We note that we do not optimize the parameters introduced here, specifically $\nu$. For convenience we will set $\nu = 2\beta$. Furthermore, while we detail the necessary algorithmic changes and address the subtle points of the analysis, we provide only a compact proof sketch. As discussed in the introduction, this follows our primary focus, which remains on settings where jobs do not arrive over time.

Algorithmically, since our approach simulates the PF schedule within each epoch using a non-preemptive schedule, we should refine epochs whenever a significant number of new jobs arrives. Specifically, we initiate a sub-epoch if the number of new arrivals reaches a $\nu$-fraction of the currently active jobs: if $J(e_k)$ denotes the set of jobs present at the beginning of epoch $k$, the first sub-epoch (if it exists) begins at time $e_{k, 1}$ when $\nu |J(e_k)|$ new jobs have arrived while the non-preemptive PF-simulated schedule (Step 4) is being executed. In general, if $J(e_{k, h})$ are the jobs alive at the start of the $h$th sub-epoch, then the $(h+1)$th sub-epoch begins at time $e_{k, h+1}$ when $\nu |J(e_{k, h})|$ new jobs have arrived while the non-preemptive PF-simulated schedule (Step 4) of the $h$th sub-epoch is being executed. 

We note that from an algorithmic standpoint, epochs and sub-epochs are identical; this distinction is maintained primarily for the purpose of analysis.

\paragraph{Bounding preemptions.} We first discuss how we can upper bound the number of preemptions. If an epoch $k$ has no sub-epochs, then the analysis proceeds as before as the newly arriving jobs are not preempted. So, suppose it has sub-epochs, $1, 2, \ldots, h, \ldots, H$. The additional preemptions initiated at the beginning of each sub-epoch, that is at time $e_{k, h}$, can be charged to the new jobs arriving during $(e_{k, h-1}, e_{k, h}]$. Here, for convenience, $e_{k, 0} := e_k$. Note that because each arrival is responsible for at most $1 / \nu$ jobs in this charging scheme and is charged only once during its lifetime, the scheme is valid.

\paragraph{Bounding total completion time.}
We now focus on bounding the total completion time objective. To this end, consider a fixed epoch $k$. First suppose it contains no sub-epochs. Recall that the cost of each epoch is bounded by $4 n_k l_k$, where $n_k := |J(e_k)|$ and $4l_k$ represents an upper bound on the epoch's length. Since the additional arrivals (up to $\nu n_k$ jobs) cannot be processed until time $e_k$, the optimal solution must also wait, just as our algorithm does, thereby incurring a comparable cost. During the epoch, the additional cost incurred is at most $4 \nu n_k l_k$. Consequently, this increases the objective by a factor of at most $1 + \nu$.

We now consider the more interesting case where the fixed epoch $k$ contains sub-epochs. For easy notation, let $t_h := e_{k, h}$ and $t_0 := e_{k}$. Similarly, $J_0 := J(e_k)$, and $J_h := J(e_{k, h})$. Also, for simplicity, we call the time interval $[t_0, t_1)$ as the $0$th epoch. Let $t_{H+1}$ denote the last epoch's end time. 

Since we set $\nu = 2\beta$, less than $\nu / 2$ fraction of jobs reach their checkpoints, meaning that at most $\nu /2$ fraction of jobs can be completed; yet $\nu$ fraction of jobs newly arrive. Thus, it follows that $|J_{h+1}| \geq (1 + \nu / 2) |J_{h}|$ for all $h = 0, 1, \ldots, H-1$. We charge the cost of jobs that were alive at any point during $[e_0, e_{H-2})$, denoted as the set $V_1$, up until time $t_{H-1}$ to the arrival times of the jobs arriving in the penultimate epoch, $H-1$, which we denote as $V_2$. To see this, observe that $|V_2|$ is at least $\nu |J_{H-1}|$ and they all arrive no earlier than $t_{H-1}$. Furthermore, the cost of $V_1$ until $t_{H-1}$ is at most $t_{H-1} (1+ \nu) |J_{H-1}| \leq t_{H-1} (1 + \nu) |V_2| (\frac{1}{\nu} )$. So, we will have a competitive ratio of $\frac{1}{\nu} (1 + \nu)$ in this charging.

Before bounding the waiting time for jobs in the final two sub-epochs, we discuss a subtlety in the analysis. Suppose the majority of jobs in $V_2$ arrive shortly after $t_{H-1}$. In this case, these arrivals might not have a sufficiently large "arrival time window" to offset the cost incurred by jobs in $J_{H-1}$ up to $t_H$. (The cost of $J_H$ through $t_{H+1}$ can be easily handled, similar to the case with no sub-epochs or only the $0$th epoch). However, in sub-epoch $H-1$, we simulated the PF schedule based on the jobs in $J_{H-1}$ while ignoring $V_2$. If we had to preempt sub-epoch $H-1$ after simulating very few jobs up to what PF would have done, our non-preemptive simulation of PF in that interval may not be faithful to the actual PF schedule, meaning that the simulation time was effectively "wasted."

\noindent The remaining goal is to upper bound the waiting cost $W$ of $J_{H-1}$ accumulating during the interval $[t_{H-1}, t_H)$ (including $V_2$, the waiting cost will be a factor of $1+\nu$ larger). To upper bound $W$, we consider several cases; in some cases, we will charge $W$ to specific subsequent epochs.

\begin{enumerate}
    \item The last sub-epoch $H$ of epoch $k$ completes at least $(\nu / 4) |J_H|$ jobs. We charge $W$ to the total completion time of these jobs, which increases the competitive ratio by a factor of $\Theta(1 / \nu)$. This charging is done ``locally'' within the same epoch, ensuring no overcharging. 

    \item Otherwise, let $k' > k$ be the latest epoch such that at least $(\nu / 4) |J_H|$ jobs are completed during epochs $k+1, k+2, \ldots, k'$. 

    \begin{enumerate}
        \item If any of the epochs $k+1, k+2, \ldots, k'-1$ has a sub-epoch---say $k''$ is the earliest epoch having a sub-epoch---then there must exist at least $\nu (1 - \nu / 2) |J_H|$ jobs that arrive during epoch $k''$. Each of these jobs has an arrival time greater than the waiting time of each job accounting for $W$ during sub-epoch $H-1$ of epoch $k$. Thus, we can charge $W$ to the total arrival time of these jobs, which increases the competitive ratio by at most $O(1 / \nu)$ additively. Note that this cross-epoch charging occurs to them only once, ensuring no overcharging. 

        \item Otherwise, we charge $W$ to the total completion time of jobs completed during epochs $k+1, \ldots, k'$. Note that the waiting cost $W$ occurs during epoch $k$ (which has a sub-epoch) and it is charged to some of consecutive subsequent epochs that have no sub-epochs. Therefore, overcharging does not occur. 
    \end{enumerate}
\end{enumerate}

This completes the sketch of the analysis.

\subsection{When predictions include overestimation}
In this section, we show how to extend our results to cases where predictions include overestimation, i.e., $\hat{p}_j > p_j$ for some jobs $j$. We consider two strategies to handle this.

\subsubsection{Strictly enforcing underestimation}
The first approach is to ensure that no overestimation is effectively perceived by the algorithm. Specifically, if a job completes after receiving less than $\hat{p}_j$ units of processing, we treat the job as remaining active until it has received exactly $\hat{p}_j$ units. This is equivalent to processing jobs where each job $j$ has an effective size of $\max\{p_j, \hat{p}_j\}$ and a predicted size of $\hat{p}_j$. Under this strategy, the bound in Theorem~\ref{thm:u-preemption-bound} becomes:
\[
O \bigg( \frac{1}{\beta \delta} \sum_j \log \bigg( \frac{\max\{p_j, \hat{p}_j\}}{\hat{p}_j} + 1 \bigg) \bigg) = O \bigg( \frac{1}{\beta \delta} \sum_j  \log \bigg( \frac{p_j}{\hat{p}_j} + 1 \bigg) \bigg).
\]
Thus, the asymptotic bound for preemptions remains unchanged. For the total completion time objective, this modification increases each job's effective size by a factor of at most $u := \max_j \hp_j / p_j$. Since increasing every job size by a factor of $u$ increases the optimal objective by at most the same factor, the competitive ratio becomes $14.48(1+\varepsilon)u$.

\subsubsection{Weakly enforcing underestimation} 
Since overprediction has a significantly negative effect on the competitive ratio, we can intentionally underpredict job sizes at the cost of additional preemptions. If the number of overpredicted jobs is small and we have an upper bound $g$ on this quantity with high probability (w.h.p.), we can handle these cases more effectively. Intuitively, predicting a single global parameter—the total number of overpredicted jobs—is easier than identifying specifically which jobs are overpredicted.

Suppose we know that there are at most $g$ \emph{jobs with overpredicted sizes}. We run the SNAP algorithm as before until the number of alive jobs drops to $g / \varepsilon$ or less. More specifically, at time $e_k$, if $n_k \leq g / \varepsilon$, we reset the predicted size of every alive job to the smallest power of $1+\delta$ that is no smaller than its current cumulative processed size. This effectively means that from this point forward, we ignore the original predictions to ensure we only deal with \emph{underpredictions}. Let this transition occur at the beginning of epoch $K'+1$. Another small caveat in running SNAP until epoch $K'$ is if a job turns out to be overestimated. So, a job may complete before it receives $v_{j, k}$  units of processing in epoch $k$ but this can only reduce the current epoch's length, which is only good for the algorithm.

\paragraph{Competitive ratio analysis.}
The analysis then proceeds similarly. The primary difference is that until epoch $K'$, we focus only on the underestimated jobs. That is, $n^u_k$ denotes the number of \emph{underestimated} jobs alive at $e_k$. Lemma~\ref{lem:u-alg-virt-bound} is modified as follows:

\begin{lemma}[Modification of Lemma~\ref{lem:u-alg-virt-bound}]
\label{lem:alg-virt-bound-m}
$\alg \leq \sum_{k=1}^{K'} (n^u_k + g) 4 l_k + \sum_{k=K'+1}^{K} n^u_k  4 l_k 
    \leq 4 (1 + \varepsilon) \sum_{k=1}^{K} n^u_k   l_k.$
\end{lemma}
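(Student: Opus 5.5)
The plan is to repeat the time-slicing argument of Lemma~\ref{lem:u-alg-virt-bound} while keeping underestimated and overestimated jobs separate.

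\textbf{First inequality.} Write $\alg=\sum_{k=1}^K\int_{e_k}^{e_{k+1}}|J(t)|\,dt$. Since all jobs are released at time $0$, $|J(t)|$ is non-increasing, so on epoch $k$ we have $|J(t)|\le n_k$. Split $n_k$ into the $n^u_k$ underestimated jobs and the overestimated jobs alive at $e_k$. There are at most $g$ of the latter. For $k\ge K'+1$, the predictions have been reset, and I would treat every surviving job as underestimated from then on, so $n_k=n^u_k$. This choice is consistent with the reset rule, under which each remaining job's prediction is at most its current processed size.

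The epoch length is still at most $4l_k$. Step~4 still finishes by $e_k+4l_k$, and an overestimated job that completes early only shortens the epoch, as noted in the text. Combining these gives
\[
\alg\le\sum_{k\le K'}(n^u_k+g)\,4l_k+\sum_{k>K'}n^u_k\,4l_k .
\]

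\textbf{Second inequality.} It suffices to show $g\le \varepsilon n^u_k$ for every $k\le K'$. By the definition of $K'$, every such epoch has $n_k>g/\varepsilon$. This gives $n^u_k\ge n_k-g>g/\varepsilon-g$, hence only $g\le\frac{\varepsilon}{1-\varepsilon}n^u_k$.

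\textbf{Main obstacle.} The delicate step is exactly this loss from $\varepsilon$ to $\varepsilon/(1-\varepsilon)$. I would handle it in one of two ways.
\begin{itemize}
\item Set the transition threshold at $g(1+\varepsilon)/\varepsilon$ instead of $g/\varepsilon$. Then $n^u_k\ge g/\varepsilon$ for all $k\le K'$, which yields exactly $g\le\varepsilon n^u_k$.
\item Alternatively, keep the threshold and absorb the constant by reparametrizing $\varepsilon$.
\end{itemize}
With $g\le\varepsilon n^u_k$ in hand, the first sum is bounded by $4(1+\varepsilon)\sum_{k\le K'}n^u_kl_k$. The second sum is trivially at most $4(1+\varepsilon)\sum_{k>K'}n^u_kl_k$. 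Adding the two completes the claim.
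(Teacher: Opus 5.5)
Your proposal is correct and follows the paper's route: bound $|J(t)| \le n_k \le n^u_k + g$ on epochs $k \le K'$, use $n_k = n^u_k$ after the reset, and use that each epoch lasts at most $4l_k$. Your observation that $n_k > g/\varepsilon$ only gives $g \le \frac{\varepsilon}{1-\varepsilon} n^u_k$ is also right; the paper itself effectively uses $n_k \le n^u_k/(1-\varepsilon)$ in the proof of Lemma~\ref{lem:epoch-bound-m}, so the literal factor $1+\varepsilon$ does need your rescaling or threshold change.

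One small slip: the reset sets each alive job's prediction to the smallest power of $1+\delta$ that is \emph{at least} $q_j(e_{K'+1})$, not at most. Every surviving job is still underestimated, because $p_j$ is a power of $1+\delta$ strictly exceeding $q_j(e_{K'+1})$.
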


Here, note that since epoch $K'+1$, all jobs are effectively underestimated (in the analysis of the completion time objective only), thus, $n_k = n^u_k$ for all $k \geq K'+1$. 

The subsequent claims, lemmas, and corollaries need only very minor changes. Let $J^u_k$ be the subset of jobs in $J_k$ that are underestimated. We analogously restrict $X_k$ and $Y_k$ to $X^u_k$ and $Y^u_k$, respectively. 

\begin{claim}[Modification of Claim~\ref{claim:u-1}]
    \label{claim:u-1-m}
For each epoch $k$,     
	we have $n^u_k l_k \leq 2 \cdot \opt(l_k y_j^{(k)} \mid j \in J^u_k)$.	
\end{claim}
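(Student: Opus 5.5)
The plan is to replay the argument for Claim~\ref{claim:u-1} on the subset $J^u_k\subseteq J_k$. The one new ingredient is a monotonicity property of the optimum: removing jobs from an instance can only lower $\opt$. The argument has two steps.

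\textbf{Step 1 (unrelated machines).} Recall that $y^{(k)}=\mathrm{PF}(J_k)$, so $y^{(k)}$ is a feasible rate vector for the full set $J_k$. Consider the hypothetical instance on $J_k$ in which job $j$ has size $l_k y_j^{(k)}$. Running PF on this instance completes every job exactly at time $l_k$. By Theorem~4.1 of \cite{JLM25}, this gives
\[
n_k l_k \le 2\cdot \opt(l_k y_j^{(k)} \mid j\in J_k).
\]

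\textbf{Step 2 (restriction to $J^u_k$).} Here I would not simply drop the overestimated jobs from the right-hand side, since that would require the inequality to run in the wrong direction. Instead I would apply the same argument directly to $J^u_k$, using the optimality conditions of PF.

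The KKT conditions of the convex program PF$(J_k)$ give dual prices $\mu_i\ge0$ on the machines, with $\sum_i\mu_i = |J_k|$ after normalisation. They also give $\lambda_{ij}/\mu_i \le y^{(k)}_j$, with equality on every machine that $j$ actually uses.

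The proof of Theorem~4.1 in \cite{JLM25} is a dual-fitting argument. It uses exactly these prices to build a feasible dual solution for the completion-time LP of the instance $\{l_k y_j^{(k)}\}$, and that dual certifies $\opt \ge \tfrac12\cdot(\#\text{jobs})\cdot l_k$.

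I would restrict that dual to the jobs of $J^u_k$. The dual constraints are per job, so every constraint for a job in $J^u_k$ continues to hold. The machine term of the dual scales with the number of jobs, and I would rescale the prices by $|J^u_k|/|J_k|$ to account for this. The dual objective then becomes $\tfrac12 n^u_k l_k$, which yields the claim.

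\textbf{Main obstacle.} The difficulty lies entirely in this rescaling. One must check that, after scaling the machine prices, the job constraints still hold, in the form $\lambda_{ij}/\mu_i \le y_j$ paired with time-indexed dual variables. The inequality $\lambda_{ij}/\mu_i\le y^{(k)}_j$ is not uniform under scaling of $\mu$.

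If that step fails, I would fall back on Claim~\ref{claim:u-2} together with $\opt(\cdot\mid J^o_k) \le l_k|J^o_k| \le l_k g$. This route loses only an additive $g\,l_k$ term, which is absorbed into the $(1+\varepsilon)$ slack of Lemma~\ref{lem:alg-virt-bound-m} because $n_k > g/\varepsilon$ for every $k\le K'$.
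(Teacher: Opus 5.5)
\textbf{The claim as stated is false, so Step~2 cannot be completed.} The step you flag as the main obstacle is not a technical hurdle: the inequality fails whenever at least two overestimated jobs are alive.

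Take one machine with $\lambda_{1j}=1$, a special case of the unrelated model. Then $y^{(k)}_j=1/n_k$ for all $j\in J_k$, and every job of the restricted instance has size $l_k/n_k$. SPT gives $2\,\opt(l_k y^{(k)}_j\mid j\in J^u_k)=\frac{l_k}{n_k}\,n^u_k(n^u_k+1)$, so the claim is equivalent to $n_k\le n^u_k+1$. For example, take $n_k=1000$, $n^u_k=950$, $g=50$, $\varepsilon=0.1$. This is a legitimate pre-transition epoch (e.g.\ the first one), and it has $n^u_k l_k=950\,l_k>903.45\,l_k$.

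This is exactly what your rescaling runs into. Shrinking the prices by $n^u_k/n_k$ breaks the job constraints by the factor $n_k/n^u_k$, and the example shows there is no slack left to recover it. The paper gives no separate proof; it only says the changes are ``very minor''. Re-invoking Theorem~4.1 of \cite{JLM25} fails for the same reason: that theorem is about PF run on the job set itself, whereas $y^{(k)}$ is the PF allocation for $J_k$, and its restriction to $J^u_k$ is not the PF allocation for $J^u_k$.

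\textbf{What does hold.} The following bound is true, and the example shows it is tight:
\[
n^u_k l_k \le \frac{2n_k}{n^u_k+1}\,\opt(l_k y^{(k)}_j\mid j\in J^u_k).
\]
By \eqref{psp-kkt1} and Lemma~\ref{lemma:kkt-multiplier-bound} applied to $\CP(J_k)$, every feasible rate vector $z\in\calP$ satisfies $\sum_j z_j/y^{(k)}_j\le n_k$. A job $j$ needs $\int z_j(t)/y^{(k)}_j\,dt\ge l_k$ to finish. Hence the $r$-th completion in any schedule occurs at time at least $r\,l_k/n_k$. Before the transition we have $n_k-n^u_k\le g<\varepsilon n_k$, so this gives the factor $2/(1-\varepsilon)$. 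That factor is all Lemma~\ref{lem:epoch-bound-m} and Theorem~\ref{thm:u-small-over} need after rescaling $\varepsilon$.

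\textbf{Your fallback also proves only a weakened inequality, and you understate its loss.} The argument of Claim~\ref{claim:u-2}, applied to the partition of $J_k$ into overestimated and underestimated jobs with parameter $\eta'$, gives
\[
n_k l_k\le\frac{2}{\eta'}\,g\,l_k+\frac{2}{1-\eta'}\,\opt(l_k y^{(k)}_j\mid j\in J^u_k).
\]
So the error term carries a factor $2/\eta'$, not just $g\,l_k$, and the main term carries a factor $1/(1-\eta')$. Optimizing over $\eta'$ yields $2(1+O(\sqrt{\varepsilon}))$, not $2$. This loss also cannot be absorbed into the $(1+\varepsilon)$ slack of Lemma~\ref{lem:alg-virt-bound-m}, because that slack is already spent there on replacing $n^u_k+g$ by $(1+\varepsilon)n^u_k$. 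Either route therefore requires changing the constant in the claim itself.
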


\begin{claim} [Modification of Claim~\ref{claim:u-2}]
    \label{claim:u-2-m}
For each epoch $k$ and any $\eta \in (0, 1)$, we have,
    \begin{equation}%
         \opt(l_k y_j^{(k)} \mid j \in J^u_k) \leq \frac{1}{\eta} \opt(l_k y_j^{(k)} \mid j \in X^u_k) + \frac{1}{1-\eta} \opt(l_k y_j^{(k)} \mid j \in Y^u_k)  \ .
    \end{equation}
\end{claim}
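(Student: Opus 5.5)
The plan is to reuse the proof of Claim~\ref{claim:u-2} verbatim, since nothing in it depended on the particular partition of $J_k$. It relied only on two facts: $X_k$ and $Y_k$ are disjoint, and the set of feasible instantaneous rate vectors for unrelated machines is a convex, downward-closed polytope. Both facts still hold when we pass to the restrictions. Indeed, $X^u_k = X_k \cap J^u_k$ and $Y^u_k = Y_k \cap J^u_k$ form a partition of $J^u_k$, because $X_k$ and $Y_k$ partition $J_k$.

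Concretely, let $S^*_X$ and $S^*_Y$ be optimal schedules for the instances $\{l_k y^{(k)}_j\}_{j\in X^u_k}$ and $\{l_k y^{(k)}_j\}_{j\in Y^u_k}$. Let $z_{j,t,X}$ and $z_{j,t,Y}$ be their rates, and $C_{j,X}$ and $C_{j,Y}$ their completion times. Define a combined schedule on $J^u_k$ in which job $j$ runs at rate
\[
\eta z_{j,\eta t,X} + (1-\eta) z_{j,(1-\eta)t,Y},
\]
where a job absent from one sub-instance has rate $0$ there. Feasibility at every $t$ is a convex combination of two feasible fractional assignments $x$, which is feasible by convexity of the PF constraint polytope. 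For the single-machine/non-preemptive meaning of $\opt$, we work, as the paper implicitly does, with the fractional-rate schedules underlying Claim~\ref{claim:u-2}.

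Next comes the change of variables. We have $\int_0^{C_{j,X}/\eta} \eta z_{j,\eta t,X}\,dt = \int_0^{C_{j,X}} z_{j,\tau,X}\,d\tau = l_k y^{(k)}_j$. Hence each $j\in X^u_k$ finishes by $C_{j,X}/\eta$, and symmetrically each $j\in Y^u_k$ finishes by $C_{j,Y}/(1-\eta)$. Summing over $J^u_k$ gives a feasible schedule for $\{l_k y^{(k)}_j\}_{j\in J^u_k}$ of cost at most the right-hand side, which bounds $\opt(l_k y^{(k)}_j \mid j\in J^u_k)$.

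There is no real obstacle. The only point to check is that $\opt$ is interpreted in the same model as in Claim~\ref{claim:u-2}, so that the time-scaled convex combination is admissible. This holds because the argument is identical, with the index set shrunk.
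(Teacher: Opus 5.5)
Your proposal is correct and is essentially the paper's argument: the paper gives no separate proof of this modified claim and relies on the slow-down-and-superimpose construction from Claim~\ref{claim:u-2}, which goes through verbatim because $X^u_k$ and $Y^u_k$ partition $J^u_k$ and the set of feasible rate vectors is convex. Your aside about a ``single-machine/non-preemptive'' meaning of $\opt$ is unnecessary: here $\opt$ is the preemptive, migratory optimum over fractional assignments, which is exactly the model in which your convex-combination schedule is admissible.
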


\begin{lemma}[Modification of Lemma~\ref{lem:epoch-bound}]\label{lem:epoch-bound-m}
	For each epoch $k$, we have $n^u_k l_k \leq \left( \frac{2\eta}{(\eta - 2\beta(1+2\eps)) (1-\eta)} \right) \opt(l_k y_j^{(k)} \mid j \in Y^u_k) $.
\end{lemma}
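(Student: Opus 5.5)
The plan is to repeat the proof of \Cref{lem:epoch-bound}, but restricted to the underestimated jobs $J^u_k$. The one new ingredient is a comparison between $n_k$ and $n^u_k$, which is where the factor $(1+2\eps)$ comes from.

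\emph{Step 1.} By \Cref{claim:u-1-m} and then \Cref{claim:u-2-m},
\[
n^u_k l_k \le 2\Big(\tfrac{1}{\eta}\opt(l_k y^{(k)}_j \mid j\in X^u_k) + \tfrac{1}{1-\eta}\opt(l_k y^{(k)}_j \mid j\in Y^u_k)\Big).
\]

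\emph{Step 2.} As in \eqref{eq:bound-x}, PF at rates $y^{(k)}$ processes every job of $X^u_k\subseteq J_k$ by $l_k y^{(k)}_j$ within time $l_k$. Since $\mathcal P$ is downward closed, dropping the other jobs keeps this schedule feasible, so $\opt(l_k y^{(k)}_j \mid j\in X^u_k)\le l_k|X^u_k|$.

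\emph{Step 3.} Next bound $|X^u_k|$. By the choice of $l_k$, the jobs exhausted strictly before the end of the epoch are fewer than $\lceil\beta n_k\rceil$, hence at most $\beta n_k$. So $|X^u_k|\le|X_k|\le \beta n_k$. Overestimated jobs that complete early only shorten the epoch, so they do not enlarge $X^u_k$.

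\emph{Step 4: converting $n_k$ to $n^u_k$.} This is the step that needs care.
\begin{itemize}
\item For $k\le K'$ the reset has not happened, so $n_k> g/\eps$. At most $g$ jobs are overestimated, so $n_k-n^u_k\le g<\eps n_k$. Hence $n_k\le n^u_k/(1-\eps)\le(1+2\eps)n^u_k$, using $\eps\le 1/2$ (we may assume this, since $\eps$ is a small parameter).
\item For $k\ge K'+1$ every job is effectively underestimated, so $n_k=n^u_k$ and the same bound holds trivially.
\end{itemize}
In either case $|X^u_k|\le \beta(1+2\eps)n^u_k$.

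\emph{Step 5.} Substituting Steps 2--4 into Step 1 gives
\[
n^u_k l_k \le \tfrac{2\beta(1+2\eps)}{\eta}n^u_k l_k + \tfrac{2}{1-\eta}\opt(l_k y^{(k)}_j\mid j\in Y^u_k).
\]
Assume the parameters satisfy $\eta>2\beta(1+2\eps)$, which holds for the intended small $\beta$. Then the coefficient $1-\tfrac{2\beta(1+2\eps)}{\eta}$ is positive, and moving the first term to the left and dividing by it yields exactly the stated constant $\frac{2\eta}{(\eta-2\beta(1+2\eps))(1-\eta)}$.

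The main obstacle is Step 4: making the threshold $g/\eps$ translate cleanly into $n_k\le(1+2\eps)n^u_k$ in both regimes, before and after the reset. The rest is the original algebra.
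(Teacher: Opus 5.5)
Your proposal is correct and follows the same route as the paper. You combine Claims~\ref{claim:u-1-m} and~\ref{claim:u-2-m} and bound $\opt(l_k y^{(k)}_j \mid j \in X^u_k) \le l_k |X^u_k|$. You then use $n_k \le n^u_k + g \le n^u_k + \eps n_k$ before the reset, and $n_k = n^u_k$ after it, to get $|X^u_k| \le \beta n_k \le \beta(1+2\eps) n^u_k$, and rearrange. Your explicit side conditions $\eps \le 1/2$ and $\eta > 2\beta(1+2\eps)$ are exactly what the paper's ``rearranging terms'' uses without stating.
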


\begin{proof}    
    We first note that $n_k \leq n^u_k / (1 - \eps)$: After the transition, all jobs are effectively considered, and therefore, this holds true obviously. Before the transition, we have $n_k \geq g / \eps$. Since $n_k \le n^u_k + g \leq n^u_k +\eps n_k$, we have the claimed inequality. 

    Combining Claims~\ref{claim:u-1-m} and \ref{claim:u-2-m} with the fact that $|X_k| \leq \beta n_k \leq \beta n^u_k / (1 - \eps) \leq \beta(1+2\eps) n^u_k$, we obtain
    \begin{align*}
	n^u_k l_k &\leq 2 \left( \frac{1}{\eta} \opt(l_k y_j^{(k)} \mid j \in X^u_k) + \frac{1}{1-\eta} \opt(l_k y_j^{(k)} \mid j \in Y^u_k) \right) \\
	&\leq 2 \left( \frac{1}{\eta} l_k |X^u_k| + \frac{1}{1-\eta} \opt(l_k y_j^{(k)} \mid j \in Y^u_k) \right) \\
	&= 2 \left(  \frac{\beta(1+2\eps)}{\eta} l_k n^u_k + \frac{1}{1-\eta} \opt(l_k y_j^{(k)} \mid j \in Y^u_k) \right) \ .
    \end{align*}
    By rearranging terms, we obtain the desired inequality.
\end{proof}

\begin{lemma}[Modification of Lemma~\ref{lem:sum-up-pj}]
\label{lem:sum-up-pj-m}
 For every job $j$, it holds that $\sum_{k: j \in Y^u_k} l_k y_j^{(k)} \leq p_j$.
\end{lemma}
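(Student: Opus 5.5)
The argument follows the proof of Lemma~\ref{lem:sum-up-pj}, with extra care for the epochs after the transition at epoch $K'+1$, where the predictions are reset. Fix an underestimated job $j$, and consider any epoch $k$ with $j \in Y^u_k$. By definition of $Y^u_k \subseteq Y_k$, job $j$ does not reach a checkpoint strictly before the end of epoch $k$. In particular it receives its full targeted amount $v_{j,k} = \min\{u_{j,k}, l_k y^{(k)}_j\}$, and this equals $l_k y^{(k)}_j$. Indeed, if $u_{j,k} < l_k y^{(k)}_j$, then $j$ would reach its next checkpoint strictly before the end of the simulated epoch, contradicting $j \notin X^u_k$. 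This is the same reasoning as in Lemma~\ref{lem:sum-up-pj}. We also use that an underestimated job cannot complete prematurely within an epoch before receiving $v_{j,k}$, since $p_j \ge \hat p_j$ and checkpoints are powers of $1+\delta$ at or above $\hat p_j$.

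Next we sum over epochs. The intervals in which $j$ is processed during distinct epochs are disjoint. Hence $\sum_{k: j\in Y^u_k} l_k y^{(k)}_j$ is at most the total processing $j$ receives over its lifetime, and this is at most its effective size.

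The only subtlety, and the step I expect to need the most care, concerns epochs after the transition, where jobs (including originally overestimated ones) are treated as underestimated. For the originally underestimated jobs, the reset only moves the prediction to a power of $1+\delta$ no smaller than $q_j$. This keeps the effective size equal to $p_j$ (rounded to a power of $1+\delta$ as assumed throughout), so processing is still capped by $p_j$.

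For jobs that were overestimated but survive past the transition, we interpret $Y^u_k$ for $k \ge K'+1$ as including them, since they are now effectively underestimated. Their processing is likewise capped by their true size $p_j$, because the job completes upon receiving $p_j$. We must check that the reset never lets a job receive processing beyond $p_j$. This holds because the simulation stops processing a job at its completion.

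Combining the per-epoch equality with disjointness gives $\sum_{k: j \in Y^u_k} l_k y_j^{(k)} \le p_j$.
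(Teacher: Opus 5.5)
Your proposal is correct and follows the paper's argument (inherited from Lemma~\ref{lem:sum-up-pj}): a job in $Y^u_k$ is not exhausted strictly before the end of the simulated epoch, so $u_{j,k}\ge l_k y^{(k)}_j$ and it receives exactly $l_k y^{(k)}_j$ in epoch $k$, and summing these contributions from disjoint epochs gives at most $p_j$. Your extra treatment of the epochs after the transition also matches the paper, which counts every surviving job as underestimated from epoch $K'+1$ on (so $n_k = n^u_k$ there); since the reset changes only predictions and not sizes, the cap by $p_j$ still holds.
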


\begin{corollary}[Modification of Corollary~\ref{cor:u-final}]
    \label{cor:u-final-m}
    $\sum_{k=1}^K \opt(l_k y_j^{(k)} \mid j \in Y_k) \leq 1.81 \cdot \opt(p)$.
\end{corollary}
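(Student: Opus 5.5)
The plan is to mirror the proof of \Cref{cor:u-final}: first establish the analogue of \Cref{lem:sum-up-pj} for the unrestricted sets $Y_k$ in the presence of overestimated jobs, and then invoke \Cref{lem:decomposition} as a black box. The only new ingredient, compared with the purely underestimated case, is that before the transition epoch $K'+1$ an overestimated job may complete in the middle of an epoch without reaching its checkpoint. So I must make sure that every job counted in $Y_k$ really received its full virtual amount $l_k y_j^{(k)}$ of processing in epoch $k$.

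\textbf{Step 1 (per-epoch processing).} I would read the definition of $X_k$ so that a job leaving the system strictly before the end of epoch $k$, whether by reaching a checkpoint or by completing, is not in $Y_k$. Under this reading, $j \in Y_k$ means $j$ is alive and not exhausted during the whole epoch. Its target is $v_{j,k} = \min\{u_{j,k}, l_k y_j^{(k)}\}$, and the Step~4 schedule processes $j$ by exactly $v_{j,k}$ within the epoch. Since $j$ does not reach its next checkpoint strictly before the epoch ends, we must have $v_{j,k} = l_k y_j^{(k)}$ (with $u_{j,k} = l_k y_j^{(k)}$ possible only if the checkpoint is hit exactly at the end). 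Hence $j$ receives at least $l_k y_j^{(k)}$ units of true processing during epoch $k$.

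This holds in all three phases. Before the transition it holds for underestimated and overestimated jobs alike, because early completion only removes jobs from $Y_k$. After the transition, the predictions are reset, but this affects only checkpoints and not actual processing, so the same argument applies.

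\textbf{Step 2 (summing).} Epochs are disjoint in time, and the true processing any job receives over its lifetime is at most $p_j$. Summing Step~1 over all epochs therefore gives $\sum_{k : j \in Y_k} l_k y_j^{(k)} \le p_j$ for every job $j$, with the original true sizes $p_j$. This is the analogue of \Cref{lem:sum-up-pj}, now for $Y_k$ and not only $Y^u_k$.

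\textbf{Step 3 (decomposition).} Apply \Cref{lem:decomposition} with $S$ the full job set and, for each epoch $k$, pieces $p_{j,k} := l_k y_j^{(k)}$ if $j \in Y_k$ and $p_{j,k} := 0$ otherwise. Step~2 verifies the hypothesis $\sum_k p_{j,k} \le p_j$. Jobs of size $0$ complete at time $0$ in an optimal schedule, so $\opt(p_{j,k} \mid j \in S) = \opt(l_k y_j^{(k)} \mid j \in Y_k)$. The lemma then yields $\sum_{k=1}^K \opt(l_k y_j^{(k)} \mid j \in Y_k) \le 1.81 \cdot \opt(p)$.

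\textbf{Main obstacle.} The delicate point is Step~1 before the transition. If an overestimated job could remain in $Y_k$ while completing early, it would receive less than $l_k y_j^{(k)}$. Monotonicity of $\opt$ in job sizes then goes the wrong direction, so we could not simply replace the piece by the smaller amount actually received. I would resolve this by treating completion during an epoch as exhaustion, which places such jobs in $X_k$. The only cost is a slight enlargement of $|X_k|$, and that affects \Cref{lem:epoch-bound-m}, not the present statement.
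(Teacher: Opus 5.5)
Your proof is correct. It has the same skeleton as the paper's: a per-job budget inequality, then \Cref{lem:decomposition} with zero-size pieces. The difference is how you treat overestimated jobs.

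The paper's route goes through \Cref{lem:sum-up-pj-m}, which is stated only for $Y^u_k$. Restricted to underestimated jobs, the proof of \Cref{lem:sum-up-pj} applies verbatim, because such a job can only complete at a checkpoint. This gives the bound for $\sum_k \opt(l_k y_j^{(k)} \mid j \in Y^u_k)$, which is all the final chain needs after \Cref{lem:epoch-bound-m}.

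You instead keep all of $Y_k$ and reclassify early completions as exhaustions. The statement as literally written, with $Y_k$, needs this. Otherwise an overestimated job that finishes mid-epoch without reaching its checkpoint stays in $Y_k$, with $l_k y_j^{(k)}$ possibly far larger than $p_j$, and the hypothesis of \Cref{lem:decomposition} fails. Your version also gives a slightly stronger bound, since $Y^u_k \subseteq Y_k$ and removing jobs cannot increase $\opt$.

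The cost you anticipate does not materialize. You only add overestimated jobs to $X_k$, so $X^u_k$ is unchanged. The bound $|X^u_k| \le \beta n_k$ is what \Cref{lem:epoch-bound-m} actually uses, so that lemma is unaffected.

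One tightening, which applies equally to the paper's proof of \Cref{lem:sum-up-pj}. In the non-preemptive Step~4 schedule, the real time at which a job is exhausted or completes does not reveal whether $v_{j,k} < l_k y_j^{(k)}$. For example, a job with $v_{j,k} = u_{j,k} < l_k y_j^{(k)}$ may be scheduled last and hit its checkpoint exactly when the epoch ends. It is therefore cleaner to define membership via the virtual rates: put $j$ into $X_k$ iff $\min\{u_{j,k},\, p_j - q_j(e_k)\} < l_k y_j^{(k)}$. Then every $j \in Y_k$ receives exactly $l_k y_j^{(k)}$ units in epoch $k$, which is precisely what your Step~1 needs.
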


Finally, we upper bound the total completion time objective of this two-stage modification of SNAP. Specifically, we have, 
	\begin{align*}
	       \alg(p) &\leq 4 (1+\eps) \sum_{k=1}^K n_k l_k \\
           &\leq  4 ( 1+\eps) \left( \frac{2\eta}{(\eta - 2\beta(1+2\eps)) (1-\eta)} \right) \sum_{k=1}^K \opt(l_k y_j^{(k)} \mid j \in Y_k) \\
           &\leq 14.48  (1+\eps)  \left( \frac{\eta}{(\eta - 2\beta(1+2\eps)) (1-\eta)} \right) \opt(p)  \\
            & \leq 14.48 (1+\varepsilon)^2 \opt(p) \ ,
    \end{align*}
    where the first inequality is by \Cref{lem:alg-virt-bound-m},
    the second inequality is by \Cref{lem:epoch-bound-m},
    and the third inequality is by \Cref{cor:u-final-m} and by choosing $\eta = \frac{\varepsilon}{2(1+\varepsilon)}$ and $\beta = \frac{\varepsilon^2}{4(1+\varepsilon)(2+\varepsilon)(1 + 2\eps)}$ for any sufficiently small $\varepsilon > 0$.
By scaling $\varepsilon$ appropriately, we essentially maintain the \emph{same} competitive ratio as that stated in Theorem~\ref{thm:u-cr}.

\paragraph{Bounding number of preemptions.}
We now turn our attention to upper-bounding the number of preemptions. By following exactly the same steps as the proof of Theorem~\ref{thm:u-preemption-bound}, we can bound the number of preemptions until epoch $K'$ by:
\[
\sum_{k \leq K'} n_k \leq O \bigg( \frac{1}{\beta \delta} \sum_j  \log \bigg(\frac{p_j}{\hat{p}_j} + 1 \bigg) \bigg).
\]

Next, consider the jobs alive at the transition time $e_{K'+1}$. By the definition of the transition, we know $|J_{K'+1}| \leq g \cdot (1 + 1/\varepsilon)$. Since we reset the predicted sizes of all jobs in $J_{K'+1}$ such that they are all henceforth underestimated, we can apply Theorem~\ref{thm:u-preemption-bound}. Letting $p_{\max} := \max_j p_j$, the number of preemptions for these jobs is at most:
\[
O \bigg( \frac{1}{\beta \delta} \sum_{j \in J_{K'+1}} (1+\log p_j) \bigg) 
= O \bigg( \frac{g}{\beta \delta \varepsilon} \log p_{\max} \bigg)
= O \bigg( \frac{g}{\varepsilon^4} \log p_{\max} \bigg) \ .
\]

To summarize, we have shown the following theorem:

\begin{theorem} \label{thm:u-small-over}
For any sufficiently small $\varepsilon > 0$, assuming there are at most $g$ overpredicted jobs, there exists an algorithm that is $14.48(1+\varepsilon)$-competitive for minimizing total completion time while performing at most
$
O( \frac{1}{\varepsilon^3} ( \sum_j  \log ( \frac{p_j}{\hat{p}_j} + 1 ) + \frac{g}{\varepsilon} \log p_{\max} ) )
$
preemptions and migrations.
\end{theorem}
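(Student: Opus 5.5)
The plan is to assemble the two-phase analysis developed just above into the stated guarantee. The algorithm runs SNAP with the original predictions until the first epoch start $e_{K'+1}$ with $n_{K'+1}\le g/\varepsilon$. At that point every alive job's prediction is reset to the smallest power of $1+\delta$ that is at least its current processed amount, and SNAP continues.

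\textbf{Competitive ratio.} I would first establish \Cref{lem:alg-virt-bound-m}.
\begin{itemize}
\item In an epoch $k\le K'$, the alive set consists of at most $n^u_k$ underestimated jobs and at most $g$ overestimated jobs. Since $n_k> g/\varepsilon$ before the transition, $g\le \varepsilon n_k$. Together with $n_k\le n^u_k+g$ this gives $n_k\le n^u_k/(1-\varepsilon)$.
\item After the transition every job is effectively underestimated, so $n_k=n^u_k$ for $k\ge K'+1$.
\item An overestimated job that completes early only shortens an epoch, so Observation~\ref{obs:u-epoch-length} still holds.
\end{itemize}

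Next, the per-epoch argument is repeated with the restricted sets $J^u_k$, $X^u_k$, $Y^u_k$.
\begin{itemize}
\item Claim~\ref{claim:u-1-m} follows from Claim~\ref{claim:u-1} because $\opt$ is monotone under removing jobs. The ratio $n^u_k/n_k$ must be accounted for, and this is where the factor $1/(1-\varepsilon)$ is absorbed.
\item Claim~\ref{claim:u-2-m} is the same slow-down argument as Claim~\ref{claim:u-2}.
\item This yields \Cref{lem:epoch-bound-m}, using $|X^u_k|\le\beta n_k\le\beta(1+2\varepsilon)n^u_k$.
\end{itemize}

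For the decomposition step, \Cref{lem:sum-up-pj-m} holds for genuinely underestimated jobs because they are never completed early. \Cref{lem:decomposition} then gives \Cref{cor:u-final-m}. Chaining these with $\eta=\frac{\varepsilon}{2(1+\varepsilon)}$ and $\beta$ shrunk by the factor $(1+2\varepsilon)$ gives $14.48(1+\varepsilon)^2$, and rescaling $\varepsilon$ gives the claimed $14.48(1+\varepsilon)$.

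\textbf{Preemptions.} Before the transition, the argument of Theorem~\ref{thm:u-preemption-bound} applies. Each epoch charges $n_k$ preemptions to at least $\beta n_k$ exhaustions. An underestimated job has $O(\frac1\delta\log\frac{p_j}{\hat p_j}+1)$ checkpoints. An overestimated job has at most one checkpoint before completing; its first checkpoint is about $\hat p_j>p_j$, so it is reached at most once, which costs $O(1)$ per job.

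After the transition, at most $g(1+1/\varepsilon)$ jobs remain, each effectively predicted at least $1$. Applying Theorem~\ref{thm:u-preemption-bound} to them gives $O(\frac{g}{\beta\delta\varepsilon}\log p_{\max})$.

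With $\beta=\Theta(\varepsilon^2)$ and $\delta=\Theta(\varepsilon)$, the first phase costs $O(\varepsilon^{-3}\sum_j\log(p_j/\hat p_j+1))$ and the second costs $O(\varepsilon^{-4}g\log p_{\max})$, which matches the stated bound.

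\textbf{Main obstacle.} The delicate step is Claim~\ref{claim:u-1-m}. Claim~\ref{claim:u-1} invokes PF on the full set $J_k$, while we want $n^u_k l_k$ bounded by $\opt$ on $J^u_k$ alone. My plan is to write $n_kl_k\le 2\opt(\cdot\mid J_k)$ and split $J_k$ into the underestimated jobs and the at most $g\le\varepsilon n_k$ others, using the slow-down argument of Claim~\ref{claim:u-2}. The overestimated part is bounded by $l_k g\le\varepsilon l_k n_k$, exactly as the $X_k$ part was, and is then absorbed by rearranging. This costs only a $1+O(\varepsilon)$ factor.
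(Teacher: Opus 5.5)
Your two-phase algorithm and the accounting $n_k\le n^u_k/(1-\varepsilon)$ behind Lemma~\ref{lem:alg-virt-bound-m} match the paper. So does the preemption count: one completion event per overestimated job before the reset, and $O(\frac{g}{\beta\delta\varepsilon}\log p_{\max})$ after it. On overestimated jobs before the reset you are more careful than the paper.

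The gap is in the step you flag yourself, Claim~\ref{claim:u-1-m}. Monotonicity of $\opt$ points the wrong way. It gives $\opt(\cdot\mid J^u_k)\le\opt(\cdot\mid J_k)$, which cannot turn $n_kl_k\le 2\opt(\cdot\mid J_k)$ into a bound in terms of $\opt(\cdot\mid J^u_k)$. Your slow-down repair is valid, but it does not cost $1+O(\varepsilon)$. It yields $(1-\eta')(1-2\varepsilon/\eta')\,n_kl_k\le 2\opt(\cdot\mid J^u_k)$. Since $\max_{\eta'}(1-\eta')(1-2\varepsilon/\eta')=(1-\sqrt{2\varepsilon})^2$, the loss is $1+\Theta(\sqrt{\varepsilon})$. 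Merging $J^o_k$ into $X_k$ gives the same $\sqrt{\varepsilon}$ loss, and with the paper's $\eta=\frac{\varepsilon}{2(1+\varepsilon)}<2\varepsilon$ it breaks outright. This is the same phenomenon that forces $\beta=\Theta(\varepsilon^2)$ in SNAP. To reach $14.48(1+\varepsilon)$ on your route you would have to lower the transition threshold to $g/\varepsilon^2$. The post-transition term then becomes $O(\varepsilon^{-5}g\log p_{\max})$ instead of the stated $O(\varepsilon^{-4}g\log p_{\max})$, so as written you prove a weaker theorem.

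In the paper, overestimated jobs are paid for only through Lemma~\ref{lem:alg-virt-bound-m}, never through a slow-down. Claim~\ref{claim:u-1-m} is simply asserted as a minor change of Claim~\ref{claim:u-1}. The idea that makes this rigorous is to open up the PF bound rather than use it as a black box.

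Let $\alpha_i,\mu_j\ge 0$ be KKT multipliers of $\mathrm{PF}(J_k)$ for the machine and job constraints. Then $\lambda_{ij}/y^{(k)}_j\le\alpha_i+\mu_j$, with equality when $x_{ij}>0$, and complementary slackness gives $\sum_i\alpha_i+\sum_j\mu_j=n_k$. Hence every feasible rate vector $z$ satisfies $\sum_{j\in J_k}z_j/y^{(k)}_j\le n_k$. In the instance $\{l_ky^{(k)}_j\}_{j\in J^u_k}$ every job has normalized size $l_k$, so the $i$-th completion in any schedule occurs no earlier than $il_k/n_k$. This gives
\[
\opt(l_ky^{(k)}_j\mid j\in J^u_k)\ \ge\ \frac{l_k\,n^u_k(n^u_k+1)}{2n_k},
\qquad\text{hence}\qquad
n^u_kl_k\le\frac{2}{1-\varepsilon}\,\opt(l_ky^{(k)}_j\mid j\in J^u_k),
\]
a genuine $1+O(\varepsilon)$ loss.

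The factor of exactly $2$ printed in Claim~\ref{claim:u-1-m} is not literally true. On a single machine with two overestimated alive jobs it would require $n_k\le n^u_k+1$. This corrected form is what both the paper's chain and yours actually need.
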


\section{A general framework for PSP}\label{app:framework}

In this section, we present the full description of our framework for PSP, and the proof of \Cref{thm:psp-main}.

Our framework works in epochs $1,2,\ldots,K$. %
Epoch $k$ starts at time $e_k$ and ends at time $e_{k+1}$; Epoch $1$ starts at time $e_1=0$. %
Let $J_k$ denote the set of available jobs 
at the start of epoch $k$ and let $|J_k| = n_k$.
At time $e_k$, we plan a complete schedule
for epoch $k$ as described in the following:
\begin{enumerate}
\item \textbf{Compute PF Rates.} 
We compute the PF rates $y^{(k)}$ for the jobs $J_k$. Formally, 
$y^{(k)}$ is the solution to %
\begin{equation}
  \renewcommand{\arraystretch}{1.4}
  \begin{array}{rr>{\displaystyle}rcl>{\quad}l}
   (\CP(J))& \operatorname{max} &\multicolumn{3}{l}{\displaystyle\sum_{j \in J} w_j \cdot \log(y_j)} \\
    &\text{s.t.} &\sum_{j \in J} b_{dj} \cdot y_j &\le& 1 &\forall d \in [D] \\
    &&y_j &\ge& 0 &\forall j \in J
  \end{array}
  \notag
\end{equation}
\item \textbf{Compute Next Checkpoints.}
For each job $j \in J_k$, we define the remaining processing requirement to
reach its next checkpoint as $u_{j,k} = (1+\delta)^{h+1} - q_j(e_k)$, where $h \in \mathbb{N}$ s.t.\ $(1+\delta)^h \le \max\{ q_j(e_k), \hat p_j \} < (1+\delta)^{h+1}$.
\item \textbf{Determine the Simulation Duration.}
Let $l_k$ denote the length of the shortest time interval such that, if jobs are processed at rates $y^{(k)}$ during the time interval, at least $\lceil \beta n_k \rceil$ jobs from $J_k$ reach their next checkpoint. 
The targeted processing requirement for each job $j \in J_k$  is then $v_{j,k} = \min\{u_{j,k}, l_k y^{(k)}_j\}$.
\item \textbf{Convert to a $O(1)$-Preemptive Schedule.}
We now compute an $\alpha$-approximate $O(1)$-preemptive schedule (cf.\ \Cref{def:approximate-preemptive}) for jobs $J_k$ with processing requirements $v_{j,k}$
for each $j \in J_k$. If we start this schedule at time $e_k$, it will complete by time at most $e_{k+1} \leq e_k + \alpha l_k$. Then, Epoch $k$ ends and we start Epoch $k+1$ if not all jobs have been completed yet.
\end{enumerate}

\begin{theorem}\label{thm:psp-preemptions}
	The total number of preemptions performed by our framework is at most
	$O(\frac{1}{\beta \delta} \sum_j (1+\log_2 p_j / \hp_j))$.
\end{theorem}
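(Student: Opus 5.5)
The proof follows that of \Cref{thm:u-preemption-bound}: we charge all preemptions made in an epoch to the exhaustions that occur in it. The one new ingredient is the preemption guarantee of \Cref{def:approximate-preemptive}.

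\textbf{Preemptions per epoch.} In epoch $k$ the framework runs an $\alpha$-approximate $O(1)$-preemptive schedule for the job set $J_k$. By \Cref{def:approximate-preemptive}, each job $j \in J_k$ is preempted at most $c = O(1)$ times inside this schedule. At the epoch boundary $e_{k+1}$, when the next schedule is started, each job is preempted at most once more. Jobs not in $J_k$ are either completed or not yet available, so they are not touched. The number of preemptions in epoch $k$ is therefore at most $(c+1)n_k$, and the total is $O(\sum_k n_k)$.

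\textbf{Charging $n_k$ to exhaustions.} By Step~3, and exactly as in \Cref{obs:u-exhaustion}, at least $\lceil \beta n_k\rceil$ jobs of $J_k$ reach their next checkpoint during epoch $k$. The argument is as follows. For the $\lceil \beta n_k\rceil$ jobs with smallest $u_{j,k}/y^{(k)}_j$ we have $l_k y^{(k)}_j \ge u_{j,k}$, so $v_{j,k}=u_{j,k}$. The converted schedule processes each job by $v_{j,k}$ units, so each of these jobs receives exactly $u_{j,k}$ units and hits its checkpoint (or completes, which we also count as an exhaustion).

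Each exhaustion is counted in only one epoch. Step~2 recomputes the next checkpoint from $\max\{q_j(e_k),\hat p_j\}$, so after an exhaustion the following checkpoint is strictly higher, and every exhaustion corresponds to a distinct checkpoint $(1+\delta)^{\rho'}$ with $\rho' \ge \rho_j$. Hence $\beta\sum_k n_k \le \#\text{exhaustions}$.

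\textbf{Counting exhaustions per job.} Under underprediction ($\hat p_j \le p_j$), and with $p_j$ a power of $1+\delta$, job $j$ hits at most $\lceil\log_{1+\delta}(p_j/\hat p_j)\rceil+1$ checkpoints, namely those between $(1+\delta)^{\rho_j}$ and $p_j$. Using $\log_{1+\delta}x = O(\frac1\delta\log_2 x)$ for $\delta\le 1$, the total is $O(\frac{1}{\beta\delta}\sum_j(1+\log_2 p_j/\hat p_j))$.

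\textbf{Main obstacle.} The delicate step is making sure that the per-job preemption guarantee of \Cref{def:approximate-preemptive} and the exhaustion guarantee survive together. The converted schedule must deliver exactly the targets $v_{j,k}$, not just approximately, so that the $\lceil\beta n_k\rceil$ selected jobs really reach their checkpoints. It must also introduce no preemptions for jobs outside $J_k$. If a PSP conversion only approximately meets the targets, I would handle this by defining exhaustion relative to the amount actually processed, or by rounding $v_{j,k}$ up within the makespan slack $\alpha l_k$.
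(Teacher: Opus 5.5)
Your proposal is correct and follows the paper's proof. The structure is the same: $O(1)$ preemptions per job per epoch from \Cref{def:approximate-preemptive}, at least $\lceil \beta n_k\rceil$ exhaustions per epoch by the choice of $l_k$, and at most $\lceil \log_{1+\delta}(p_j/\hp_j)\rceil+1$ checkpoints per job, giving $\sum_k n_k \le \frac{1}{\beta}\cdot\#\text{exhaustions}$; this is the inequality the paper's slightly garbled ``at most $\beta\cdot\sum_j(\dots)$ many epochs'' line is meant to express. Your ``main obstacle'' is already settled by \Cref{def:approximate-preemptive}, which requires the converted schedule to meet the requirements $v_{j,k}$, and your restriction $\delta\le 1$ (needed for $\log_{1+\delta}x = O(\frac{1}{\delta}\log_2 x)$ and to absorb the $+1$ terms) is a precision the paper leaves implicit.
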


\begin{proof}	
In each epoch $k$, the $\alpha$-approximate $O(1)$-preemptive schedule uses by definition at most $O(1)$ preemptions per job, so $O(n_k)$ preemptions in total.
By our choice of $l_k$, at least $\beta n_k$ jobs each reach a checkpoint during epoch $k$.
Since there can be at most $\sum_j (1 + \lceil \log_{1+\delta} p_j / \hp_j \rceil)$ checkpoints in total, 
there can be at most $\beta \cdot \sum_j (1 + \lceil \log_{1+\delta} p_j / \hp_j \rceil)$ many epochs.
Thus, the total number of preemptions used by our algorithm is at most
$O(\frac{1}{\beta} \sum_j \log_{1+\delta} p_j / \hp_j) = O(\frac{1}{\beta \delta}  \sum_j (1+\log_2 p_j / \hp_j))$.
\end{proof}

\begin{lemma}\label{lemma:psp-virtual}
For any $\beta \in (0,1)$, the virtual algorithm achieves a total weighted completion time at most $\frac{27(1+\delta)}{(1 - \beta)^3}$ times the optimum for PSP.
\end{lemma}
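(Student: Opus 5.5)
The plan is to reduce \Cref{lemma:psp-virtual} to \Cref{thm:rpf}. I will read the virtual algorithm's schedule as a run of PF on a $\beta$-robust PSP instance, with a particular adversary choosing the nullified jobs. The $(1+\delta)$ factor then comes from the rounding of job sizes to powers of $1+\delta$.

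\textbf{Step 1: identify the virtual schedule with robust PF.} In the virtual algorithm, epoch $k$ occupies $[e_k, e_k + l_k)$. During this interval the rates are constant and equal to $y^{(k)} = \pf(J_k)$, which is exactly the PF allocation for the alive set $J_k$. The one difference from plain PF is that a job $j$ reaching its checkpoint before $e_{k+1}$ stops receiving processing; it has received its target $v_{j,k} = u_{j,k} < l_k y^{(k)}_j$. I model this by letting the adversary set $N(t) := \{ j \in J_k \mid j \text{ reached its checkpoint in } [e_k, t) \}$. Nullifying these rates is allowed, since $\mathcal P$ is downward closed.

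\textbf{Step 2: check the budget and the PF allocation.} By the choice of $l_k$ as the $\lceil \beta n_k\rceil$-th smallest value of $u_{j,k}/y^{(k)}_j$, strictly fewer than $\lceil \beta n_k\rceil$ jobs have reached their checkpoint at any time $t < e_{k+1}$. Ties need care here, and I would break them so that at most $\beta n_k$ jobs are nullified at any time.

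The main subtlety is that jobs may \emph{complete} inside an epoch. Then $J(t) \subsetneq J_k$, and true PF would recompute its rates on $J(t)$, whereas the virtual algorithm keeps $y^{(k)}$.
\begin{itemize}
\item Completions occur only at checkpoints, because $p_j$ is a power of $1+\delta$ and $p_j \ge \hp_j$. So every job that completes mid-epoch is already among the exhausted jobs.
\item The nullified set must satisfy $N(t) \subseteq J(t)$ with $|N(t)| \le \beta |J(t)|$. The completed jobs have left $J(t)$, which shrinks the denominator, so the clean budget bound above does not directly give this.
\end{itemize}
To handle this I would not use \Cref{thm:rpf} as a black box. Instead I would check that its proof only needs the following: at each time, the allocation is a PF allocation for some superset $J_k \supseteq J(t)$, with at most $\beta n_k$ alive jobs nullified. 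If needed, I would re-run the dual-fitting argument of \Cref{sec:robustpf} with this slightly relaxed adversary. I expect this reconciliation to be the main obstacle.

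\textbf{Step 3: conclude.} With the identification in place, \Cref{thm:rpf} bounds the virtual algorithm's total (weighted) completion time on the rounded instance by $27/(1-\beta)^3$ times the optimum of that rounded instance. Rounding each $p_j$ up to a power of $1+\delta$ increases the optimum by at most a factor $1+\delta$: scale any schedule's time axis by $1+\delta$. This yields the claimed ratio $\frac{27(1+\delta)}{(1-\beta)^3}$.
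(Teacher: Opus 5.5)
Your plan stalls at exactly the step you flag, so as written there is a gap. Your adversary nullifies a job \emph{after} it is exhausted. A job that completes at $e_k + l_{j,k} < e_{k+1}$ then leaves $J(t)$, and PF would recompute its rates on the smaller set. So the virtual schedule is not a run of PF on the actual instance, and \Cref{thm:rpf} does not apply. (The budget is not the issue: completed jobs are themselves exhausted, so with $c$ completions and $E<\beta n_k$ exhaustions, the alive nullified jobs number $E-c\le \beta n_k - c\le \beta(n_k-c)=\beta|J(t)|$.)

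The missing idea in the paper is to apply \Cref{thm:rpf} to a surrogate instance rather than to the actual run. In epoch $k$, nullify each exhausted job $j\in X_k$ during the \emph{first} $l_k-l_{j,k}$ time units, and let it run at rate $y^{(k)}_j$ afterwards. Every job of $J_k$ then receives exactly $v_{j,k}$ and stays alive until $e_k+l_k$. Hence the alive set is constantly $J_k$, PF's rates really are $y^{(k)}$, and $|N(t)|\le|X_k|\le\beta n_k=\beta|J(t)|$. Concatenating the epochs gives a $\beta$-robust instance with the original sizes $p_j$, on which PF pays exactly $\sum_k n_k l_k$. This dominates $\alg^{\mathrm{virt}}$, because each job's surrogate completion time is the end of the epoch in which it actually finishes. \Cref{thm:rpf} is then used as a black box, and the $1+\delta$ comes from rounding as in your Step~3.

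Your fallback is to re-run the dual fitting with allocations that are PF for a superset $J_k\supseteq J(t)$. Done naively, this does not give the stated constant. \Cref{lemma:kkt-multiplier-bound} now gives $\sum_d\eta_d(t')=n_k$ rather than $|J(t')|$. So the step $\sum_d\eta_d(t')\le|U(t')|$ in \Cref{lemma:pf-dual-obj2} holds only up to the factor $n_k/|U(t')|<1/(1-\beta)$, and re-optimizing $\lambda,\kappa$ yields $27/(1-\beta)^4$.

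To keep $27/(1-\beta)^3$, take the quantile $\zeta(t)$ over all of $J_k$, treating jobs completed within the epoch as nullified with rate $0$. Then $(1-\lambda)n_k\zeta(t')\le\sum_{j\in U(t')}\hy_j(t')/p_j$. The count in \Cref{lemma:pf-dual-obj1} is still at least $(\lambda-\beta)n_k\ge(\lambda-\beta)|U(t)|$, since fewer than $\beta n_k$ jobs of $J_k$ are exhausted. This is the same ``keep every job of $J_k$ in play until the epoch ends'' idea as the paper's, implemented inside the dual fitting rather than in the instance.
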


\begin{proof}
Consider any epoch $k$ in the schedule $S$ of the virtual algorithm for this epoch.
For simplicity, we assume without loss of generality 
that epoch $k$ starts at time $0$ and ends at time $l_k$.
Let $X_k$ denote the set of jobs that reach their next checkpoint during epoch $k$.
For each $X_k$, let $l_{j,k}$ denote the time at which job $j$ reaches its next checkpoint, that is, a processing requirement of $v_{j,k}$. 

We next construct an instance $I'_k$ for $\beta$-robust PSP.
PF's initial rates $y$ for $I'_k$ are equal to $y^{(k)}$, as both are based on the set of jobs $J_k$ and the rates are unique.
For any time $t \in [0, l_k]$, 
we choose the nullified set of jobs to be $N(t) = \{j \in X_k \mid t \leq l_k - l_{j,k} \}$.
By the definition of $N(t)$, for each job $j \in J_k \setminus X_k$, we have that $j \notin N(t)$ for all $t \in [0, l_k]$.
For each job $j \in X_k$, we have that $j \in N(t)$ for all $t \in [0, l_{j,k}]$.
This implies that in the schedule $S'$ of PF for $I'_k$,
every job completes exactly at time $l_k$.
Hence $|N(t)| \leq \beta |J_k| = \beta |J(t)|$ for all $t \in [0, l_k]$, so
$I'_k$ is a $\beta$-robust PSP instance.
In conclusion,
the total weighted completion time of PF for $I'_k$, denoted by $\pf(I'_k)$, is at most $n_k l_k$.

Similarly to \Cref{lem:u-alg-virt-bound},
we have
\[
    \alg^{virt} \leq \sum_{k \geq 1} n_k l_k = \sum_{k \geq 1} \pf(I'_k) \ .
\]
Now we can concatenate all instances $I'_k$ to form a new instance $I'$.
It is easy to check that this is again a $\beta$-robust PSP instance where 
each job $j$ has a total processing requirement equal to $p_j$.
Moreover, the total weighted completion time of PF for $I'$ is equal to $\sum_{k \geq 0} \pf(I'_k)$.
Therefore,
\[
    \alg^{virt} \leq \sum_{k \geq 1} n_k l_k = \sum_{k \geq 1} \pf(I'_k) = \pf(I') \ .
\]
Finally, by \Cref{thm:rpf}, we have $\pf(I') \leq 27/(1-\beta)^3 \cdot \opt$.
\end{proof}

Since the total completion time of the framework is at most $\alpha$ times the total completion time of the virtual algorithm, \Cref{thm:psp-preemptions} and \Cref{lemma:psp-virtual} imply \Cref{thm:psp-main}.

\section{Application to malleable scheduling}\label{app:malleable}

We apply our framework to malleable job scheduling
on identical and unrelated  machines. 
We are given a set of machines $M$ and for every job $j \in J$ a speed-up function $f_j \colon \mathbb{R}_{\geq 0} \to \mathbb{R}_{\geq 0}$.
In a job-to-machine assignment, let $x_{ij}(t)$ be the fraction of machine $i$ given to job $j$ at time $t$.
The processing rate of job $j$ at time $t$ is given by $f_j(\sum_{i} s_{ij} x_{ij}(t))$.
We assume that each $f_j$ is non-decreasing and concave.
This is a PSP with the rate polytope 
\(
    \mathcal P = \{f(x) \mid \sum_{i \in M} x_{ij} \leq 1 \, \forall j \in J, \sum_{j \in J} x_{ij} \leq 1 \forall i \in M \}
\);
see \cite{ImKM18} for details.

We present algorithms
for computing $\alpha$-approximate $O(1)$-preemptive schedules (cf.~\Cref{def:approximate-preemptive}), %
which together with \Cref{thm:psp-main} 
imply competitive algorithms with few preemptions. 

We start with the case of $m$ identical parallel machines.
Here, we additionally assume that each function $f_j$ is piecewise linear with breakpoints at integers points. 

\begin{restatable}{theorem}{thmMalleableIdentical}
    \label{thm:m-i}
For any $\varepsilon > 0$, there is a $54(1+\varepsilon)$-competitive algorithm that incurs at most $O(\frac{1}{\varepsilon^2} \sum_j (1+ \log_2 p_j / \hp_j))$ preemptions
for minimizing the total completion time for malleable job scheduling on identical parallel machines if each function $f_j$ is piecewise linear with breakpoints at integer points.
\end{restatable}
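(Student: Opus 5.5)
The plan is to apply \Cref{thm:psp-main}. It suffices to give a procedure that turns any fixed rate vector $y\in\mathcal P$, used for an epoch of length $l$, into a $2$-approximate $O(1)$-preemptive schedule in the sense of \Cref{def:approximate-preemptive}. Then \Cref{thm:psp-main} with $\alpha=2$ gives the ratio $27\cdot 2\,(1+\varepsilon)=54(1+\varepsilon)$. The preemption count comes from \Cref{thm:psp-preemptions} with the parameter choice of \Cref{thm:psp-main}. I would try to recover the stated $\frac{1}{\varepsilon^2}$ by checking that for this polytope the constraint on $\beta$ only costs one factor of $\varepsilon$, since $\delta$ and $\beta$ can both be taken $\Theta(\varepsilon)$. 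If that fails, I would settle for the generic $\frac{1}{\varepsilon^3}$ bound.

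Decompose the rates first. On identical machines a rate vector is $y_j=f_j(x_j)$ with $x_j\ge 0$ and $\sum_j x_j\le m$. Write $x_j=a_j+\theta_j$ with $a_j=\lfloor x_j\rfloor$ and $\theta_j\in[0,1)$. Because $f_j$ is linear on $[a_j,a_j+1]$, we have $l\,y_j=(1-\theta_j)l\,f_j(a_j)+\theta_j l\,f_j(a_j+1)$. So job $j$ receives exactly its target work $l y_j$ if it runs on $a_j$ machines for time $(1-\theta_j)l$ and on $a_j+1$ machines for time $\theta_j l$. Equivalently, it runs on $a_j$ dedicated machines for the whole window of length $l$, plus one extra machine for a total time of $\theta_j l\le l$. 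The targets $v_{j,k}\le l y^{(k)}_j$ are only smaller, so truncating each job's pieces proportionally handles them.

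Next, the construction. Since $\sum_j a_j\le m$, we first reserve $a_j$ machines for job $j$ throughout $[0,l)$; this is non-preemptive. The extra pieces are single-machine tasks of lengths $\theta_j l\le l$ with total length at most $\sum_j\theta_j\, l\le (m-\sum_j a_j)\,l + (\text{fractional slack})$. The awkward case is when the leftover machine count $m'=m-\sum_j a_j$ is small compared with the extra-piece volume. This happens only if $\sum_j\theta_j>m'$, which is impossible because $\sum_j x_j\le m$ forces $\sum_j\theta_j\le m'$.

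To stay safely non-preemptive, I would not wrap pieces around machines in McNaughton style. Instead I would place the extra pieces in a second phase $[l,2l)$: list-schedule the extra pieces, while jobs are allowed to continue on their $a_j$ machines, or more simply reorganize each job into its two rigid pieces (width $a_j$ for time $(1-\theta_j)l$, width $a_j+1$ for time $\theta_j l$). A greedy placement then fits everything within makespan $2l$, because every piece has length at most $l$ and the total area is at most $ml$. Each job changes its machine set only $O(1)$ times, giving $O(1)$ preemptions per job.

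The main obstacle is this packing step: certifying makespan $\le 2l$ for the rigid/extra pieces with only $O(1)$ preemptions per job, without letting two pieces of the same job overlap in time on the same machine. I would try the simple argument first: phase one runs the integral parts, phase two runs the $\le l$-long unit-width extra pieces greedily on $m$ machines, using area $\le ml$ and length $\le l$. If the greedy bound slips slightly above $2l$, I would fall back to McNaughton wrap-around on the $m'$ free machines during phase one, which yields $\alpha=1$ with at most one split per job.
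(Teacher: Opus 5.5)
Your McNaughton fallback is a complete proof, but the route you put first does not deliver $\alpha=2$ and should be dropped. In the two-phase version, phase one already uses $[0,l)$, and list-scheduling the unit-width extra pieces afterwards can itself take almost $2l$. For example, take $m=10$, one job with $x=1$ and eleven jobs with $x=9/11$, so $\sum_j x_j=m$. Ten extra pieces of length $\frac{9}{11}l$ run in $[l,\frac{20}{11}l)$, and the eleventh ends at $\frac{29}{11}l>2l$. In general the makespan can approach $3l$, not ``slightly above $2l$''. The variant with rigid pieces of widths $a_j$ and $a_j+1$ is not justified by ``area $\le ml$, length $\le l$'' either. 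The bound $W/m+p_{\max}$ behind that claim is for independent unit-width tasks. Here the pieces are wider, and the two pieces of one job must never overlap, since running them in parallel gives only $f_j(2a_j+1)\le f_j(a_j)+f_j(a_j+1)$.

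The McNaughton construction, by contrast, works exactly as you describe. Since $\sum_j\theta_j\le m'=m-\sum_j a_j$ and $\theta_j l\le l$, wrap-around packs the extra pieces on the $m'$ free machines inside $[0,l)$, and the two parts of a split piece are disjoint in time. Each extra piece runs concurrently with the job's $a_j$ dedicated machines, so linearity on $[a_j,a_j+1]$ gives exactly $(1-\theta_j)l\,f_j(a_j)+\theta_j l\,f_j(a_j+1)=l\,y_j$ by time $l$, with at most two allocation changes per job. Thus $\alpha=1$, and \Cref{thm:psp-main} gives $27(1+\eps)$, which implies the claimed $54(1+\eps)$.

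The paper instead builds a $2$-approximate schedule with no preemption inside an epoch. Jobs with $x_j\ge 1$ keep $\lfloor x_j\rfloor$ machines throughout, which uses only concavity via $f_j(x_j)\le 2f_j(\lfloor x_j\rfloor)$. Jobs with $x_j<1$ are list-scheduled one machine each on the leftover machines, which uses linearity only on $[0,1]$ together with Graham's bound. The paper's route keeps every job on a fixed machine set per epoch and needs less of the piecewise-linear hypothesis. Yours uses linearity on every unit interval to realize the fractional rates exactly, and trades $O(1)$ intra-epoch allocation changes (allowed by \Cref{def:approximate-preemptive}) for the factor $2$.

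On the preemption count, your instinct is right, and it is not specific to this polytope. By \Cref{lemma:psp-virtual} the ratio is $27\alpha(1+\delta)/(1-\beta)^3$, so $\beta,\delta=\Theta(\eps)$ suffice, and \Cref{thm:psp-preemptions} then gives $O(\frac{1}{\eps^{2}}\sum_j(1+\log_2 p_j/\hp_j))$. The paper's proof merely cites \Cref{thm:psp-main}, whose stated bound is $\frac{1}{\eps^{3}}$, so your observation is what actually yields the statement as written. Assert it outright rather than offering the $\frac{1}{\eps^{3}}$ fallback, which would not prove the theorem.
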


\begin{proof}
We provide an algorithm for computing $2$-approximate $0$-preemptive schedules
for the problem. The theorem then follows from \Cref{thm:psp-main}.

Consider fixed rates $y$ that correspond to a fractional schedule.
where each job $j$ is assigned to $x_j$ machines and a length $L$.
Let $J_1 = \{j \mid x_j \geq 1 \}$ and $J_2 = \{j \mid x_j < 1 \}$.
We create a non-preemptive integral schedule as follows:
For each job $j \in J_1$, we assign job $j$ to $\lfloor x_j \rfloor$ machines $M_j$
and those machines only work on job $j$ throughout the whole schedule. 
Since $f_j$ is concave and piecewise linear with breakpoints at integers, and since $x_j \geq 1$,
we have $f_j(x_j) \leq 2 f_j(\lfloor x_j \rfloor)$.
Thus, the completion time of $j$ in the integral schedule is at most $2L$.
Note that jobs in $J_1$ use at most $m_1 = \lfloor \sum_{j \in J_1}  x_j  \rfloor$ many machines.
Next, consider all jobs $j \in J_2$.
We greedily schedule those jobs on the remaining $m - m_1 = \lceil \sum_{j \in J_2}  x_j  \rceil$ machines via list scheduling, where each job gets one machine until it finishes.
By the well-known list scheduling properties, this ensures that all jobs in $J_2$ complete within at most $2L$ time.
Since no job is preempted, this completes the proof of the theorem.
\end{proof}

We next present our results for unrelated machines. For this problem we additionally assume that $f_j(s) \cdot s$ is non-increasing as a function of $s$. This is a common \emph{non-decreasing work} assumption, see e.g.,~\cite{FotakisMP23}.

\thmMalleableUnrelated*

\begin{proof}
We provide an algorithm for computing $e/(e-1)$-approximate $0$-preemptive schedules
for the problem. The theorem then follows from \Cref{thm:psp-main}.

Consider fixed rates $y$ that correspond to a fractional schedule,
where each job $j$ is fractionally assigned to machine $i$ with $x_{ij}$.
Assume w.l.o.g.\ by scaling that the length of the fractional schedule is $1$.
Define 
\begin{itemize}
\item $\sigma_j(x) := \sum_{i} \lambda_{ij} x_{ij}$ for every job $j$,
\item $\gamma_j := \min \{q \in \mathbb R \mid y_j \leq f_j(q) \}$ for every job $j$, and
\item $r_{ij} := \max\{\lambda_{ij}, \gamma_j \}$ for every machine $i$ and job $j$.
\end{itemize}
Consider the new allocation
\[
    \hx_{ij} = \frac{\lambda_{ij} \cdot x_{ij}}{\sigma_j(x)} 
\]
for all machines $i$ and jobs $j$.
We will show that the allocation $\hx$ satisfies the following two constraints:
\begin{alignat*}{3}
    \sum_{i} \hx_{ij} & = 1 &\quad& \forall j  \\
    \sum_{j} \frac{y_j}{f_j(r_{ij})} \cdot \frac{r_{ij}}{\lambda_{ij}} \cdot \hx_{ij} & \leq 1 &\quad& \forall i 
\end{alignat*}
Fotakis, Matuschke, and Papadigenopoulos~\cite{FotakisMP23} showed that if these two constraints hold, then there exists a non-preemptive schedule with a makespan of at most $e/(e-1)$, which implies our theorem.

The first constraint holds immediately from the definition of $\hx$.
To verify the second constraint for any machine $i$, consider a job $j$ with $x_{ij} > 0$.
Note that $\sigma_j(x) \geq x_{ij} s_{ij}$ and $\sigma_j(x) \geq \gamma_j$, because $f_j(\sigma_j(x)) = y_j$. Thus, we have $\sigma_j(x) \geq x_{ij} r_{ij}$.
Then, it holds that
\begin{align}
	\frac{y_j}{f_j(r_{ij})} \cdot \frac{r_{ij}}{s_{ij}} \cdot \hx_{ij}
	= \frac{y_j}{f_j(r_{ij})} \cdot \frac{r_{ij}}{s_{ij}} \cdot \frac{s_{ij} \cdot x_{ij}}{\sigma_j(x)} 
	&=\frac{y_j}{f_j(r_{ij})} \cdot \frac{r_{ij} \cdot x_{ij}}{\sigma_j(x)} \notag \\
	&\leq \frac{y_j}{f_j(\sigma_j(x) / x_{ij})} \label{eq:speedup1}  \\
	&\leq x_{ij}\frac{y_j}{f_j(\sigma_j(x))} \label{eq:speedup2} = x_{ij}  \ ,
\end{align}
where Inequality~\eqref{eq:speedup1} follows from $\sigma_j(x) \geq x_{ij} r_{ij}$ and Fact~5 in \cite{FotakisMP23},
and Inequality~\eqref{eq:speedup2} follows from the concavity of $f_j$.
Summing over all jobs $j$ gives the second constraint, because $\sum_{j} x_{ij} \leq 1$.
\end{proof} 

\section{Proportional Fairness for $\beta$-robust PSP}\label{sec:robustpf}

In this section, we analyze PF for $\beta$-robust PSP and show \Cref{thm:rpf}, which we first restate.
Our analysis holds even for the setting where jobs arrive online over time at their release dates $r_j$.

\thmRobustPf*

We first restate the PF algorithm. At every time $t$, it computes the processing allocation $y(t)$ that
is optimal for the following convex program $\CP(J(t))$ for the active jobs $J(t)$ at time $t$.
\begin{equation}
  \renewcommand{\arraystretch}{1.5}
  \begin{array}{rr>{\displaystyle}rcl>{\quad}l}
   (\CP(J))& \operatorname{max} &\multicolumn{3}{l}{\displaystyle\sum_{j \in J} w_j \cdot \log(y_j)} \\
    &\text{s.t.} &\sum_{j \in J} b_{dj} \cdot y_j &\le& 1 &\forall d \in [D] \\
    &&y_j &\ge& 0 &\forall j \in J
  \end{array}
  \notag
\end{equation}

In the $\beta$-robust PSP setting, the adversary can then select a set of jobs $N(t) \subseteq J(t)$ such that $|N(t)| \leq \beta \cdot |J(t)|$, for some fixed $\beta \in (0,1)$. The final processing allocation of the algorithm is then 
\[        \hy_j(t) := \begin{cases}
            0 &\text{if } j \in N(t) \\
            y_j(t) &\text{else.} 
            \end{cases} 
\]

Note that if $y(t) \in \calP$ then also $\hy(t) \in \calP$ since we assume that $\calP$ is downward closed.

Our analysis is based on the analysis of PF for PSP~\cite{ImKM18,JLM25}.
Our main contribution here is to carefully integrate that changed rates of nullified
jobs into the traditional analysis.
A critical issue is that the new rate vector $\hy$ is not necessarily an
optimal solution to $\CP(J)$ and thus does not satisfy 
crucial optimality conditions on which the proof in \cite{ImKM18,JLM25} relies on.

\subsection{KKT conditions}

To characterize the optimal solution of $(\CP(J))$, we consider its KKT conditions for the Lagrange multipliers $\eta_d$ for each $d \in [D]$~\cite{BV2014}.
\begin{align}
 \frac{1}{y_j} - \sum_{d=1}^D b_{dj} \eta_{d} &= 0 \quad \forall j \in J \label{psp-kkt1}\\
 \eta_{d} \cdot \biggl(1- \sum_{j \in J} b_{dj} y_{j} \biggr) &= 0 \quad \forall d \in [D]  \label{psp-kkt2} \\
 \eta_{d} & \geq 0 \quad \forall d \in [D] \label{psp-kkt3}
\end{align}
A direct consequence of the KKT conditions is the following lemma, which is a classic connection between the number of active jobs and their dual multipliers~\cite{ImKM18,eisenberg1959consensus}.
We refer to \cite{ImKM18,JLM25} for a proof.
\begin{lemma}[\cite{ImKM18}]\label{lemma:kkt-multiplier-bound}
 Let $J$ be a set of jobs, and let $\eta$ be optimal 
 Lagrange multipliers for $(\CP(J))$. Then $\sum_{d=1}^D \eta_{d} = |J|$.
\end{lemma}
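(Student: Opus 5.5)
The plan is to derive the identity directly from the KKT system \eqref{psp-kkt1}--\eqref{psp-kkt3}, using a weighted sum of the stationarity conditions. We work in the unweighted case $w_j = 1$, as the stationarity condition \eqref{psp-kkt1} is written. With weights, the same computation gives $\sum_d \eta_d = \sum_{j\in J} w_j$.

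The first step is to justify that optimal multipliers exist and that the optimal $y$ satisfies $y_j > 0$ for all $j \in J$.
\begin{itemize}
\item Positivity holds because the objective $\sum_j \log y_j$ equals $-\infty$ whenever some $y_j = 0$. This assumes each job can receive a positive rate in $\mathcal P$, which is needed for the program to be meaningful at all.
\item Since $\mathcal P$ is downward closed, a vector with small positive entries in every coordinate lies strictly inside the constraints $By \le \bone$. Hence Slater's condition holds.
\item The program is therefore a concave maximization with strong duality, and KKT multipliers $\eta$ satisfying \eqref{psp-kkt1}--\eqref{psp-kkt3} exist.
\end{itemize}

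The main computation comes next. Multiply \eqref{psp-kkt1} for job $j$ by $y_j > 0$, which gives $1 = y_j \sum_{d} b_{dj}\eta_d$. Summing over $j \in J$ and exchanging the order of summation yields
\[
|J| \;=\; \sum_{j\in J}\sum_{d=1}^D b_{dj}\eta_d y_j \;=\; \sum_{d=1}^D \eta_d \sum_{j\in J} b_{dj} y_j .
\]
By complementary slackness \eqref{psp-kkt2}, each term satisfies $\eta_d \sum_{j} b_{dj} y_j = \eta_d$. If $\eta_d > 0$, the constraint is tight; if $\eta_d = 0$, both sides vanish. Substituting, the right-hand side equals $\sum_d \eta_d$, which proves the lemma.

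The only delicate point is the first step: ensuring KKT conditions hold at the optimum, via Slater's condition and strict positivity of each $y_j$. The remaining algebra is a two-line exchange of summations.
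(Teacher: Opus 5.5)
Your argument is correct and is the standard one the paper relies on: the paper gives no proof itself, calls the lemma a direct consequence of the KKT conditions, and defers to \cite{ImKM18,JLM25}, where one multiplies \eqref{psp-kkt1} by $y_j>0$, sums over $j \in J$, and uses complementary slackness \eqref{psp-kkt2} to replace each $\eta_d \sum_j b_{dj} y_j$ by $\eta_d$. One small correction: because $B \ge 0$, every job can always receive a positive rate, so that assumption is not needed; what you implicitly need is boundedness of $(\CP(J))$ (every job has some $b_{dj}>0$), and the lemma already presupposes this when it assumes optimal multipliers exist.
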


\subsection{Linear programming relaxation}
We compare the performance of PF to an optimal schedule that is allowed to run at a slower speed.
More precisely, we compare PF to an optimal solution that runs at speed~$1/\kappa$ for some parameter $\kappa \geq 1$. The following linear program is a relaxation of this optimal solution, where the variables~$y_{jt}$ indicate the total amount of processing that job~$j$ receives in the interval~$[t,t+1]$. 
\begin{equation}
 \renewcommand{\arraystretch}{1.5}
 \begin{array}{rr>{\displaystyle}rc>{\displaystyle}l>{\quad}l}
  (\LP(\kappa))& \operatorname{min} &\multicolumn{3}{l}{\displaystyle\sum_{j \in J}  \sum_{t \geq r_j} \frac{y_{jt}}{p_j}  \Bigl(t + \frac{1}{2} \Bigr)}  \\
  &\text{s.t.} &\sum_{t \geq r_j} y_{jt} &\ge& p_j & \forall j \in J  \\
  && \sum_{j \in J} b_{dj} \cdot y_{jt} &\le& \frac{1}{\kappa} &\forall d \in [D],\ \forall t \geq 0 \\
  && y_{jt}  &\ge& 0 &\forall  j \in J,\ \forall t \geq r_j
 \end{array}
 \notag
\end{equation}
Its dual can be written as follows.
\begin{equation}
 \renewcommand{\arraystretch}{1.9}
 \begin{array}{rr>{\displaystyle}rc>{\displaystyle}l>{\;}l}
  (\DP(\kappa))& \mathrm{max} &\multicolumn{3}{l}{\displaystyle\sum_{j \in J} a_j - \sum_{d = 1}^D \sum_{t \geq 0} b_{dt}} \\
  &\text{s.t.} &\frac{a_j}{p_j} - \frac{1}{p_j} \Bigl(t + \frac{1}{2} \Bigr) &\le& \kappa \sum_{d=1}^D b_{dj}  b_{dt} &\forall j \in J,\ \forall t \geq r_j \\
  &&a_j, b_{dt} &\ge& 0 &\forall j \in J,\ \forall t \geq 0,\ \forall d \in [D]
 \end{array}
 \notag
\end{equation}

\subsection{Dual fitting}

Fix an instance and consider the schedule produced by PF for that instance. Let $C_j$ denote the completion time of job $j$ in that schedule, and let $\alg:=\sum_{j=1}^n C_j$ denote the
objective function value of PF. For every time $t$, let $U(t)$ denote the set of unfinished (and potentially unreleased) jobs at time $t$. %
Let $\lambda$ be a constant with $\beta < \lambda < 1$ that we fix later.
In the following, we assume w.l.o.g.\ by scaling that all weights are integers.

For every time~$t$, let $\zeta(t)$ be the  $\lambda$-quantile of the values~$\hy_j(t)/p_j$, $j \in U(t)$. More formally, if $Z_t$ denotes the sorted (ascending) list of length~$|U(t)|$ composed of~$\hy_{j}(t)/p_j$ for every~$j \in U(t)$, then $\zeta(t)$ is the $\lceil\lambda |U(t)| \rceil$th value in~$Z_t$.
Further, for $t \ge 0$ let $\eta_d(t)$ be the optimal Lagrange multiplier corresponding to the constraint~$d \in [D]$ of $(\CP(J(t)))$. We consider the following assignment of dual variables for $(\DP(\kappa))$:
\begin{itemize}
    \item $a_j := \sum_{t=0}^{C_j} a_{jt}$ for every job $j \in J$, where $a_{jt} :=  \bone\bigl[\hy_j(t)/p_j \leq \zeta(t)\bigr] \cdot \bone \bigl[ j \in U(t) \setminus N(t) \bigr]$ for $t \ge 0$ and $j \in U(t)$,
    \item $b_{dt} := \frac{1}{\kappa} \sum_{t' \geq t} \zeta(t') \cdot \eta_{d}(t')$ for every $d \in [D]$ and time $t \geq 0$.
\end{itemize}

We first show in the following lemma that the objective value of $(\DP(\kappa))$ for this assignment upper bounds a constant fraction of the algorithm's objective value.

\begin{lemma}\label{lemma:pf-dual-objective}
    It holds that $\bigl(\lambda - \beta - \frac{1}{(1-\lambda)\kappa}\bigr) \alg \leq \sum_{j \in J} a_j - \sum_{d=1}^D \sum_{t \geq 0} b_{dt}$.
\end{lemma}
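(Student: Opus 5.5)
We bound the two parts of the dual objective separately: a lower bound on $\sum_j a_j$ and an upper bound on $\sum_{d}\sum_t b_{dt}$, both in terms of $\alg=\sum_t |U(t)|$. Here the sum over time is read as a sum over unit time steps, with all weights integral and hence unit per job, as in the scaling assumption.

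\textbf{Bounding $\sum_j a_j$.} Swapping the order of summation gives $\sum_j a_j=\sum_t \sum_{j\in U(t)} a_{jt}$. At each time $t$, the quantile definition means at least $\lceil\lambda|U(t)|\rceil$ jobs $j\in U(t)$ satisfy $\hy_j(t)/p_j\le \zeta(t)$. At most $|N(t)|\le \beta|J(t)|\le\beta|U(t)|$ of them are nullified. So $\sum_{j} a_{jt}\ge (\lambda-\beta)|U(t)|$. One point needs care: unreleased jobs lie in $U(t)$ but not in $J(t)$. Since there are no release dates here (and the online case is only claimed, not needed), we take $U(t)=J(t)$, or else note that unreleased jobs only increase $U(t)$ while $N(t)\subseteq J(t)$. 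Summing over $t$ gives $\sum_j a_j\ge(\lambda-\beta)\alg$.

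\textbf{Bounding $\sum_{d,t} b_{dt}$.} Swapping sums gives
\[
\sum_{d}\sum_t b_{dt}=\frac1\kappa\sum_{t'}(t'+1)\,\zeta(t')\sum_d\eta_d(t')=\frac1\kappa\sum_{t'}(t'+1)\,\zeta(t')\,|J(t')|,
\]
where the last step uses Lemma~\ref{lemma:kkt-multiplier-bound}. We then need $\sum_{t'}(t'+1)\zeta(t')|U(t')|\le \frac{1}{1-\lambda}\alg$. At time $t'$, at least $(1-\lambda)|U(t')|$ jobs have $\hy_j(t')/p_j\ge\zeta(t')$, so
\[
(1-\lambda)|U(t')|\,\zeta(t')\le\sum_{j\in U(t')}\hy_j(t')/p_j .
\]
Therefore
\[
\sum_{t'}(t'+1)\zeta(t')|U(t')|\le\frac{1}{1-\lambda}\sum_j\sum_{t'< C_j}(t'+1)\frac{\hy_j(t')}{p_j}\le \frac{1}{1-\lambda}\sum_j C_j .
\]
The final inequality holds because $\sum_{t'<C_j}\hy_j(t')\le p_j$: the processing actually received is exactly $p_j$ by completion, and $t'+1\le C_j$. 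This uses the actual rates $\hy$, which is exactly why $\zeta$ was defined via $\hy$ rather than $y$.

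\textbf{Main obstacle.} The main difficulty is the mismatch between $\hy$, the realized rates, and the KKT multipliers $\eta(t)$, which belong to the unperturbed optimum $y(t)$. In the second bound this is harmless, because Lemma~\ref{lemma:kkt-multiplier-bound} only uses $|J(t)|$. In the first bound the nullified jobs may be precisely those with small ratios, and the indicator $\bone[j\notin N(t)]$ charges that loss as the $-\beta$ term. I would double-check the off-by-one between discrete steps and $t+\tfrac12$, and confirm that $|U(t)|=|J(t)|$ in the relevant setting.

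Combining the two bounds yields $(\lambda-\beta-\frac{1}{(1-\lambda)\kappa})\alg\le\sum_j a_j-\sum_{d,t}b_{dt}$.
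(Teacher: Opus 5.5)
Your proof is correct and follows the paper's argument: the bound $\sum_j a_j \ge (\lambda-\beta)\alg$ is the same, and your bound on $\sum_{d,t} b_{dt}$ uses the same three ingredients. These are the KKT identity $\sum_d \eta_d(t') = |J(t')|$, the quantile inequality $(1-\lambda)|U(t')|\,\zeta(t') \le \sum_{j\in U(t')}\hy_j(t')/p_j$, and $\sum_{t'}\hy_j(t') \le p_j$. The only difference is bookkeeping: the paper proves $\sum_d b_{dt} \le \frac{1}{(1-\lambda)\kappa}|U(t)|$ for each fixed $t$ via $U(t')\subseteq U(t)$ and then sums over $t$, whereas you interchange the sums first and use $t'+1\le C_j$, which is an equivalent reordering.
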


The lemma follows from the following two statements.

\begin{lemma}\label{lemma:pf-dual-obj1}
    It holds that $\sum_{j \in J} a_j \geq (\lambda - \beta) \cdot \alg$.
\end{lemma}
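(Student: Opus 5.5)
We prove the bound pointwise in time and then integrate. Since $\alg = \sum_j C_j = \sum_{t \geq 0} |U(t)|$ (each unfinished job contributes one per unit of time, with the paper's discrete-time convention), and $\sum_{j} a_j = \sum_{t \geq 0} \sum_{j \in U(t)} a_{jt}$ by definition of $a_j$ (summing $a_{jt}$ over $t \le C_j$ equals summing over $t$ with $j \in U(t)$), it suffices to show that for every time $t$,
\[
\sum_{j \in U(t)} a_{jt} \;\geq\; (\lambda - \beta)\, |U(t)| \ .
\]

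Fix $t$. By definition of $\zeta(t)$ as the $\lceil \lambda |U(t)| \rceil$-th smallest value among $\{\hy_j(t)/p_j\}_{j \in U(t)}$, the set $L(t) := \{ j \in U(t) \mid \hy_j(t)/p_j \leq \zeta(t)\}$ has size at least $\lceil \lambda |U(t)| \rceil \geq \lambda |U(t)|$. A job $j \in U(t)$ has $a_{jt} = 1$ exactly when $j \in L(t) \setminus N(t)$. Hence
\[
\sum_{j \in U(t)} a_{jt} = |L(t) \setminus N(t)| \geq |L(t)| - |N(t)| \geq \lambda |U(t)| - \beta |J(t)| \geq (\lambda - \beta)|U(t)| \ ,
\]
using that the adversary nullifies at most $\beta |J(t)|$ jobs and $J(t) \subseteq U(t)$. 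Summing over $t$ yields the claim.

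The only subtle point is the treatment of unreleased jobs in $U(t)$ versus the nullification budget, which is stated relative to $J(t)$; since $|J(t)| \leq |U(t)|$ this goes in the right direction. (Jobs in $N(t)$ have $\hy_j(t) = 0$ and so automatically lie in $L(t)$, which is why they must be removed explicitly via the indicator, and this removal is exactly what costs the $\beta$ term.) If weights are present, the same argument applies after the paper's reduction of integer weights to unit-weight copies.
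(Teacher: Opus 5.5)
Your proof is correct and is essentially the paper's argument: at each time $t$, at least $\lambda|U(t)|$ jobs lie at or below the quantile $\zeta(t)$ and at most $\beta|J(t)|\le\beta|U(t)|$ are nullified, so $\sum_{j\in U(t)}a_{jt}\ge(\lambda-\beta)|U(t)|$, and summing over $t$ with $\alg=\sum_t|U(t)|$ gives the lemma. Two small remarks: the identity $\sum_j a_j=\sum_t\sum_{j\in U(t)}a_{jt}$ is only needed as $\ge$, and that inequality does hold. Your closing comment on weights is unnecessary, since $\alg$ here is the unweighted objective.
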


\begin{proof}
  Consider a time~$t$. Observe that~$\sum_{j \in U(t)} a_{jt}$ contains the total number of jobs~$j$ that satisfy~$\hy_j(t)/p_j \leq \zeta(t)$ and $j \in U(t) \setminus N(t)$. 
  By the definition of $\zeta(t)$, we conclude that $\hy_j(t)/p_j \leq \zeta(t)$ satisfy at least~$\lambda \cdot |U(t)|$ jobs.
  By our assumption on $N(t)$, we conclude that $j \in U(t) \setminus N(t)$ satisfy at least~$(1 - \beta) |U(t)|$ jobs.
  Thus, at time $t$, we have a total contribution of at least $(1-(1-\lambda)-\beta) \cdot |U(t)| = (\lambda - \beta) \cdot |U(t)|$ to $\sum_{j \in J} a_j$.
  The statement then follows because $\alg = \sum_{t \geq 0} |U(t)|$.
\end{proof}

The following lemma and its proof appear also in \cite{JLM25}. 
We include it here for completeness.
\begin{lemma}[\cite{JLM25}]\label{lemma:pf-dual-obj2}
    At any time $t$, it holds that $\sum_{d=1}^D b_{dt} \leq \frac{1}{(1-\lambda) \kappa} |U(t)|$.
\end{lemma}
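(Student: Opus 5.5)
We need to bound $\sum_d b_{dt} = \frac{1}{\kappa}\sum_{t'\ge t}\zeta(t')\sum_d \eta_d(t')$. The plan has three steps: rewrite with Lemma~\ref{lemma:kkt-multiplier-bound}, bound each $\zeta(t')$ by a quantile count, and then double count over jobs.

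\textbf{Step 1 (rewriting).} Since $\eta(t')$ are optimal multipliers for $\CP(J(t'))$, Lemma~\ref{lemma:kkt-multiplier-bound} gives $\sum_d \eta_d(t') = |J(t')| \le |U(t')|$. Hence
\[
\sum_{d=1}^D b_{dt} \le \frac{1}{\kappa}\sum_{t' \ge t} \zeta(t')\,|U(t')| .
\]

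\textbf{Step 2 (quantile bound).} Fix $t'$. By definition of the $\lambda$-quantile, at least $|U(t')| - \lceil \lambda|U(t')|\rceil + 1 \ge (1-\lambda)|U(t')|$ jobs $j \in U(t')$ satisfy $\hy_j(t')/p_j \ge \zeta(t')$. Summing over these jobs yields
\[
\zeta(t')\,|U(t')| \le \frac{1}{1-\lambda}\sum_{j \in U(t')} \frac{\hy_j(t')}{p_j}.
\]

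\textbf{Step 3 (double counting).} Summing over $t' \ge t$ and exchanging the order of summation gives
\[
\sum_{t'\ge t}\sum_{j\in U(t')} \frac{\hy_j(t')}{p_j} \le \sum_{j \in U(t)} \frac{1}{p_j}\sum_{t'\ge t} \hy_j(t') \le |U(t)| .
\]
Two facts justify this: only jobs still unfinished at time $t$ can be in $U(t')$ for $t' \ge t$, and each job receives total processing at most $p_j$. Combining the three steps gives $\sum_d b_{dt} \le \frac{1}{(1-\lambda)\kappa}|U(t)|$.

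\textbf{Main obstacle.} The delicate points are in Steps 1 and 3, where nullification and unreleased jobs enter. In Step 1, the multipliers come from the unmodified program on $J(t')$, so Lemma~\ref{lemma:kkt-multiplier-bound} applies directly, and unreleased jobs only enlarge $U(t')$. In Step 3, the nullified rates $\hy \le y$ are the ones actually processed, so the processing bound $\sum_{t'} \hy_j(t') \le p_j$ holds. Nullification therefore does not hurt this lemma; it only affects Lemma~\ref{lemma:pf-dual-obj1}.
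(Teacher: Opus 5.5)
Your proof is correct and follows the paper's argument step for step: $\sum_d \eta_d(t') = |J(t')| \le |U(t')|$, then the quantile bound $(1-\lambda)\,\zeta(t')\,|U(t')| \le \sum_{j \in U(t')} \hy_j(t')/p_j$, then exchanging the summation using $U(t') \subseteq U(t)$ and $\sum_{t' \ge t} \hy_j(t') \le p_j$. Your closing aside slightly understates the role of nullification, since the indicator $\bone[j \notin N(t')]$ in $a_{jt}$ is also needed in the dual-feasibility lemma to ensure $\hy_j = y_j$ there, but this does not affect the present lemma.
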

\begin{proof}
  Fix a time $t$. Observe that for every $t' \geq t$, the definition of $\zeta(t')$ implies that $|U(t')| \cdot \bone \bigl[ \hy_j(t')/p_j \geq \zeta(t') \bigr]\ge (1 - \lambda) \cdot |U(t')|$. Thus,
  \begin{equation}
    \zeta(t') \cdot (1 - \lambda) \cdot |U(t')| \leq  |U(t')| \cdot \zeta(t') \cdot \bone \Big[ \frac{\hy_j(t')}{p_j} \geq \zeta(t') \Big] \leq |U(t')| \cdot \frac{\hy_j(t')}{p_j} \ . \label{eq:pf-dual-obj2-eq1}
  \end{equation}

  The definition of $b_{it}$ and \Cref{lemma:kkt-multiplier-bound} imply
  \begin{align*}
    \sum_{d=1}^D b_{dt}  = \sum_{d=1}^D \frac{1}{\kappa}\sum_{t' \geq t}   \zeta(t')  \cdot \eta_d(t')
    =  \frac{1}{\kappa} \sum_{t' \geq t} \zeta(t') \sum_{d=1}^D \eta_d(t') &= \frac 1 \kappa \sum_{t' \ge t} \zeta(t') \cdot |J(t')| \\
    &\leq \frac{1}{\kappa} \sum_{t' \geq t} \zeta(t') \cdot |U(t')| \ ,
  \end{align*}
  where the final inequality holds because $J(t') \subseteq U(t')$ for all times $t'$.
 Using~\eqref{eq:pf-dual-obj2-eq1}, we conclude for the right-hand side of the above inequality that
  \begin{align*}
    \frac{1}{\kappa} \sum_{t' \geq t} \zeta(t') \cdot |U(t')|
    &\leq  \frac{1}{(1 - \lambda)\kappa} \sum_{t' \geq t} \sum_{j \in U(t')} \frac{\hy_j(t')}{p_j} \\
    &\leq \frac{1}{(1 - \lambda) \kappa} \sum_{j \in U(t)} \sum_{t' \geq t}  \frac{\hy_j(t')}{p_j}
    \leq \frac{1}{(1 - \lambda) \kappa} \sum_{j \in U(t)} 1 \ .
  \end{align*}
  The second inequality follows from $U(t') \subseteq U(t)$ for $t' \geq t$. The third inequality holds because $\sum_{t' \geq t} \hy_j(t') \leq p_j$ for every job $j$. This concludes the proof of the lemma.
\end{proof}

Next, we show dual feasibility.

\begin{lemma}\label{lemma:pf-dual-feasibility}
 The dual solution $(a_j)_j$ and $(b_{it})_{i,t}$ is feasible for $(\DP(\kappa))$.
\end{lemma}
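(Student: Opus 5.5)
The plan is to verify the single family of dual constraints of $(\DP(\kappa))$ directly, one pair $(j,t)$ at a time, by comparing the two sides term by term over the times $t' \geq t$. Nonnegativity of $a_j$ and $b_{dt}$ is immediate: $a_{jt}\in\{0,1\}$, $\zeta(t')\ge 0$, and $\eta_d(t')\ge 0$ by \eqref{psp-kkt3}. So fix a job $j$ and a time $t \geq r_j$. We must show
\[
\frac{a_j}{p_j} - \frac{1}{p_j}\Bigl(t+\frac12\Bigr) \le \kappa \sum_{d=1}^D b_{dj}\, b_{dt} .
\]
If $t \geq C_j$, then $a_j \le C_j + 1$ (at most one unit per time step), and the constraint holds up to the usual $+1/2$ boundary slack; we handle this edge term with the standard discretization convention of \cite{JLM25}. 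So assume $t < C_j$.

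\textbf{Left-hand side.} Since each $a_{jt'} \le 1$, we have
\[
a_j = \sum_{t'<t} a_{jt'} + \sum_{t'= t}^{C_j} a_{jt'} \le t + \sum_{t'=t}^{C_j} a_{jt'} .
\]
Hence the left-hand side is at most $\frac{1}{p_j}\sum_{t'=t}^{C_j} a_{jt'}$, where the $-\tfrac{1}{2p_j}$ absorbs the boundary step $t'=C_j$.

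\textbf{Right-hand side.} Expanding the definition of $b_{dt}$, the factor $\kappa$ cancels the $1/\kappa$, giving
\[
\kappa\sum_d b_{dj} b_{dt} = \sum_{t' \ge t} \zeta(t') \sum_{d} b_{dj}\,\eta_d(t') .
\]
For $t \le t' < C_j$ we have $j \in J(t')$, because $t' \ge t \ge r_j$. The KKT condition \eqref{psp-kkt1} for $(\CP(J(t')))$ therefore gives $\sum_d b_{dj}\eta_d(t') = 1/y_j(t')$, and the right-hand side is at least $\sum_{t'=t}^{C_j-1} \zeta(t')/y_j(t')$.

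\textbf{Termwise comparison.} It suffices to show, for every $t'$ in this range, that $a_{jt'}/p_j \le \zeta(t')/y_j(t')$. If $a_{jt'}=0$ this is trivial, since the right side is nonnegative. If $a_{jt'}=1$, then by definition $j \notin N(t')$ and $\hy_j(t')/p_j \le \zeta(t')$. Because $j$ is not nullified, $\hy_j(t') = y_j(t')$, so $1/p_j \le \zeta(t')/y_j(t')$, as needed. Summing over $t'$ finishes the proof.

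\textbf{Main obstacle.} The delicate step is exactly the one where nullification interacts with KKT. The multipliers $\eta(t')$ certify optimality of the \emph{un-nullified} rates $y(t')$, while $\zeta(t')$ is a quantile of the \emph{nullified} rates $\hy(t')$. For a nullified job we have $\hy_j = 0 \le \zeta$, yet $1/y_j$ can be small, so without care the termwise inequality would fail. This is precisely why $a_{jt}$ carries the indicator $\bone[j\notin N(t)]$: on the times where $a_{jt'}=1$, the two rate notions coincide and KKT applies cleanly. The cost of this indicator was already paid as the $-\beta$ in \Cref{lemma:pf-dual-obj1}. The remaining care is only the discrete-time boundary at $t'=C_j$, which the $\tfrac12$ in the primal objective absorbs as in \cite{JLM25}.
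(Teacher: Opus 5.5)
Your argument is correct and is essentially the paper's proof. Your termwise inequality $a_{jt'}/p_j \le \zeta(t')/y_j(t')$ rests on the fact that $a_{jt'}=1$ forces $j\notin N(t')$ and hence $\hy_j(t')=y_j(t')$; this is exactly the paper's step of writing $\frac{1}{p_j}=\frac{1}{y_j(t')}\cdot\frac{\hy_j(t')}{p_j}$ on non-nullified times, substituting the KKT condition \eqref{psp-kkt1}, and bounding $\frac{\hy_j(t')}{p_j}\,\bone[\hy_j(t')/p_j\le\zeta(t')]\le\zeta(t')$.

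The one thing to repair is the boundary bookkeeping. A $-\frac{1}{2p_j}$ cannot absorb a unit term at $t'=C_j$, and you leave the case $t\ge C_j$ to a convention. Instead, note that $\alg=\sum_{t}|U(t)|$ forces $j\notin U(C_j)$, so $a_{jC_j}=0$ and $a_j\le C_j$. This makes the case $t\ge C_j$ immediate, since the left-hand side is at most $-\frac{1}{2p_j}<0$. It also lets your comparison run over $t\le t'\le C_j-1$, where $j\in J(t')$ and the KKT identity applies.
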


\begin{proof}
  First, note that the dual solution is non-negative.
  To verify the dual constraint, fix a job~$j$ and a time $t \geq r_j$. The left-hand side of the dual constraint is then given by
    \begin{align*}
        \frac{ a_j }{p_j} - \frac{1}{p_j} \Bigl(t + \frac 1 2\Bigr)
        &\leq \sum_{t' = 0}^{C_j} \frac{1}{p_j} \cdot \bone\Big[\frac{\hy_j(t')}{p_j} \le \zeta(t')\Big] \cdot \bone\Big[ j \in U(t') \setminus N(t') \Big] - \frac{1}{p_j} t \\
        &\le \sum_{t' = t}^{C_j} \frac{1}{p_j} \cdot \bone\Big[ \frac{\hy_j(t')}{p_j} \leq \zeta(t') \Big] \cdot \bone\Big[ j \in U(t') \setminus N(t') \Big] \\
        &=  \sum_{t' = t}^{C_j} \frac{w_j}{y_j(t')} \cdot \frac{\hy_j(t')}{p_j} \cdot \bone\Big[ \frac{\hy_j(t')}{p_j} \leq \zeta(t') \Big] \cdot \bone\Big[ j \in U(t') \setminus N(t') \Big] \ .
      \end{align*}
      The last equality is possible because for every $r_j \leq t' \leq C_j$, PF assigns job $j \in U(t)$ a positive rate $y_j(t') > 0$ and if $j \in U(t') \setminus N(t')$, then $y_j(t') = \hy_j(t')$.
      By the KKT condition~\eqref{psp-kkt1}, we can substitute $\frac{1}{y_j(t')} = \sum_{d=1}^D b_{dj} \cdot \eta_d(t')$ above. Since for every $d \in [D]$ we have that
      \begin{align*}
        \sum_{t' = t}^{C_j} b_{dj} \cdot \eta_d(t') \cdot \frac{\hy_j(t')}{p_j} \cdot \bone\Big[ \frac{\hy_j(t')}{p_j}
       \leq \zeta(t') \Big]
       &\leq \sum_{t' = t}^{C_j}  b_{dj} \cdot \eta_d(t') \cdot \zeta(t') \\
       & \leq  b_{dj} \sum_{t' \geq t} \eta_d(t') \cdot \zeta(t')
        = \kappa  b_{dj} b_{dt} \ ,
    \end{align*}
    we obtain the right-hand side of the dual constraint by summing over all $d \in [D]$.
\end{proof}

We can finally prove \Cref{thm:rpf}.

\begin{proof}[Proof of \Cref{thm:rpf}]
  Weak duality, \Cref{lemma:pf-dual-objective}, and~\Cref{lemma:pf-dual-feasibility} imply
  \begin{align*}
    \kappa \cdot \opt
    \geq  \sum_{j \in J} a_{j} - \sum_{d=1}^D \sum_{t \geq 0} b_{dt} \geq \Bigl(\lambda - \beta -  \frac{1}{(1-\lambda)\kappa}\Bigr) \cdot \alg 
  \end{align*}
  for all parameters that satisfy $\lambda - \beta - \frac{1}{(1-\lambda)\kappa} > 0$.

  We now set $\lambda := \frac{\beta+2}{3}$ and $\kappa := \frac{9}{(1-\beta)^2}$, which satisfy the above condition.
  This gives 
  \[
  \lambda - \beta - \frac{1}{(1-\lambda)\kappa} 
  =  \frac{2(1-\beta)}{3} - \frac{1-\beta}{3} = \frac{1-\beta}{3} \ ,
  \]
and thus, we conclude that $\alg \leq \frac{27}{(1-\beta)^3} \cdot \opt$.
\end{proof}

\section{Other related work}
    \label{app:related}

Since various objectives are considered in the literature, we mostly focus on the total (weighted) completion time objective. Unless otherwise specified, we assume that preemption is permitted in the results discussed below. For brevity, ``completion time'' will hereafter refer to the objective of minimizing the sum of completion times.

\textbf{Single Machine:}  %
In the clairvoyant scenario, it is folklore that Shortest Remaining Time First (SRPT) is optimal. Note that SRPT becomes Shortest Job First (SJF), which processes jobs in non-decreasing order of their processing time. For non-clairvoyant scheduling, Round-robin (RR) is known to be $2$-competitive for  completion time, which is the best competitive ratio achievable in this setting \cite{MPT94}. Shortest Elapsed Time First (SETF) processes all jobs that have received the least amount of processing equally; note that SETF coincides with RR if all jobs arrive simultaneously. Recently, it was shown that SETF is $2$-competitive for (weighted) completion time~\cite{JagerSWW23}.%

While being $O(1)$-competitive, both SETF and RR incur excessively many preemptions. Multilevel (Adaptive) Feedback (MLF) seeks to minimize the number of preemptions while retaining $O(1)$-competitiveness for completion time. MLF can be parameterized to achieve a $(2+\epsilon)$-competitive ratio for any $\epsilon > 0$ \cite{MPT94}.

\textbf{Identical Machines:} The performance guarantees of SRPT and RR for a single machine extend to parallel identical machines \cite{MPT94, BBEM12}; see also \cite{DengGBL00, MoseleyV22} for closely related work. Under the resource augmentation framework, SETF is shown to be $O(1)$-competitive for identical machines \cite{KalyanasundaramP00}. 

\textbf{Completion Time vs. Flow Time:} The optimal schedule for total completion time is identical to  that of the total flow time objective, where a job's flow time is defined as $F_j = C_j - r_j$. However, from an approximation and competitive analysis standpoint, the flow time objective is significantly more challenging. Consequently, online algorithms are often provided with extra speed and compared against the offline optimum. It is well-established that an algorithm that is $O(1)$-competitive with $O(1)$-speed for flow time is also $O(1)$-competitive for completion time. For example, while \cite{KalyanasundaramP00} primarily studies flow time minimization, their results readily translate to the completion time objective.

\textbf{Unrelated Machines:} In the offline setting, minimizing total completion time on unrelated machines is solvable in polynomial time if all jobs arrive at $t=0$. If jobs have release times, the problem becomes NP-hard and, in fact, APX-hard \cite{hoogeveen1998non}. Constant-factor approximations are well-established \cite{schulz2002scheduling, skutella2001convex, sethuraman1999optimal, BSS21, im2023improved, harris2024dependent}, and the current state-of-the-art approximation ratio is $1.36 + \varepsilon$ due to  \cite{li2025approximating}. In the clairvoyant online setting, utilizing immediate-dispatch and non-migratory algorithms while assuming each machine runs SPT or SRPT, assigning an arriving job to the machine with the least marginal increase in the objective is $O(1)$-competitive \cite{AnandGK12,GuptaMUX20}. For non-clairvoyant algorithms, SelfishMigrate and Proportional Fairness (PF) were analyzed with competitive ratios of $32$ and $4.62$, respectively \cite{ImKMP14, JLM25}. Various clairvoyant algorithms were also explored in \cite{HSSW97, ChakrabartiPSSSW96, LindermayrMR23}.

\textbf{Polytope Scheduling:} Polytope scheduling is a general framework that captures various scheduling objectives and encompasses all the aforementioned problems. For this general scheduling problem, \cite{ImKM18} showed that Proportional Fairness (PF) is $128$-competitive for total weighted completion time, a result that was later improved to $27$ by \cite{JLM25}. Furthermore, \cite{JLM25} demonstrated that PF achieves surprisingly strong competitive ratios for various special cases, including the unrelated machines setting.%

\textbf{Online Scheduling with Predictions:} As alluded to previously, scheduling has become a primary application for ML-augmented algorithms. Various objectives were considered. For example, total completion time \cite{purohit2018improving, ImKQP23,LindermayrM25,DinitzILMV22}, total flow time \cite{azar2021flow,mitzenmacher2019scheduling}, makespan minimization \cite{lattanzi2020online,li2021online,BalkanskiOSW25}, and minimizing energy consumption \cite{bamas2020learning,AntoniadisGS22}.

\section{Further experimental details and results}
    \label{app:exp}

\subsection{Algorithm details}
\label{app:alg-detail}

We provide further details of the algorithms. 

\begin{enumerate}
\item \textbf{SNAP} (coupled with PMLF): Recall that the SNAP algorithm aims to process each job $j$ by $v_{j,k}$ units in epoch $k$. While a machine $i$ could technically idle after meeting the specific processing targets $v_{j, k}$ for its assigned jobs, such idling results in an unnecessary loss of efficiency. 

Therefore, we introduce an engineering adaptation: while jobs are assigned to machines following Step 4, we execute the \textbf{PMLF} (Preemptive Multi-Level Feedback) policy on each individual machine to process its assigned jobs. An epoch concludes once a $\beta$ fraction of jobs are exhausted.

This approach serves as a practical adaptation that maintains the underlying spirit and analysis of SNAP. Each epoch is designed to exhaust a constant fraction of jobs, mimicking the schedule of \textbf{PF} (Proportional Fairness). This behavior is indirectly achieved through PMLF; we know Step 4 processes jobs without exhausting more than $\beta n_k$ jobs within $4l_k$ time steps. Given $v_{j, k} = l_k y^{(k)}_j$ for each job $j \in F_{k}$ where $|F_{k}| = (1 - \beta) n_k$, exhausting more than $\beta n_k$ jobs ensures we are effectively mimicking PF without being constrained by rigid checkpoints. This provides the intuition for using PMLF.

    \item \textbf{Blind Algorithm}: This policy is non-migratory, ensuring that no job is ever processed on multiple machines throughout its duration. The algorithm is equivalent to that of \cite{AnandGK12} assuming that $p_j = \hat p_j$ for all jobs $j$. 
    The algorithm operates under the assumption of $100\%$ trust in the provided predictions. Let $t$ denote the current time and $q_j(t)$ represent the volume of job $j$ processed prior to $t$. The predicted remaining size of job $j$ is thus given by $\hat{p}_j - q_j(t)$. 

On each machine $i$, we execute the Shortest Job First (SJF) policy (or equivalently SRPT). We define the \textit{residual estimate} for machine $i$ as the total completion time  that would be incurred by running SJF on that machine based on these predicted remaining sizes.

Upon the arrival of a new job with a predicted size, we compute the potential increase in the residual estimate for every machine. The job is then assigned to the machine  that yields the minimum marginal increase to the total residual estimate.

    \item \textbf{Doubling} (Coupled With Immediate-Dispatch): Initially, each job's size is 
assumed to be equal to its predicted size $\hat{p}_j$. If a job's actual 
processing time exceeds this estimate, we double the predicted size 
(i.e., $\hat{p}_j \leftarrow 2 \hat{p}_j$). In this event, we treat 
the original job as completed and consider a new copy of the job to 
have arrived. This copy is dispatched to the machine that results in 
the minimum increase to the residual estimate.

\item \textbf{Hybrid SNAP}: Initially, all jobs are initially in  Group 1. Jobs in Group 1 are dispatched to machines using the Doubling strategy. To loosely emulate the Blind strategy, we utilize large milestones: $\{ c(1+\delta)^i\hat{p}_j \mid i \geq 1\}$ for a large parameter $c \geq 1$. Any job that reaches its first milestone permanently migrates to Group 2. Group 2 jobs are assigned to machines using the SNAP algorithm. Finally, we execute PMLF on individual machines to process their assigned jobs.

\end{enumerate}

\subsection{Further experimental results}

\begin{figure*}[t] %
    \centering
    \begin{minipage}{0.49\textwidth}
        \centering
       \includegraphics[width=\textwidth]{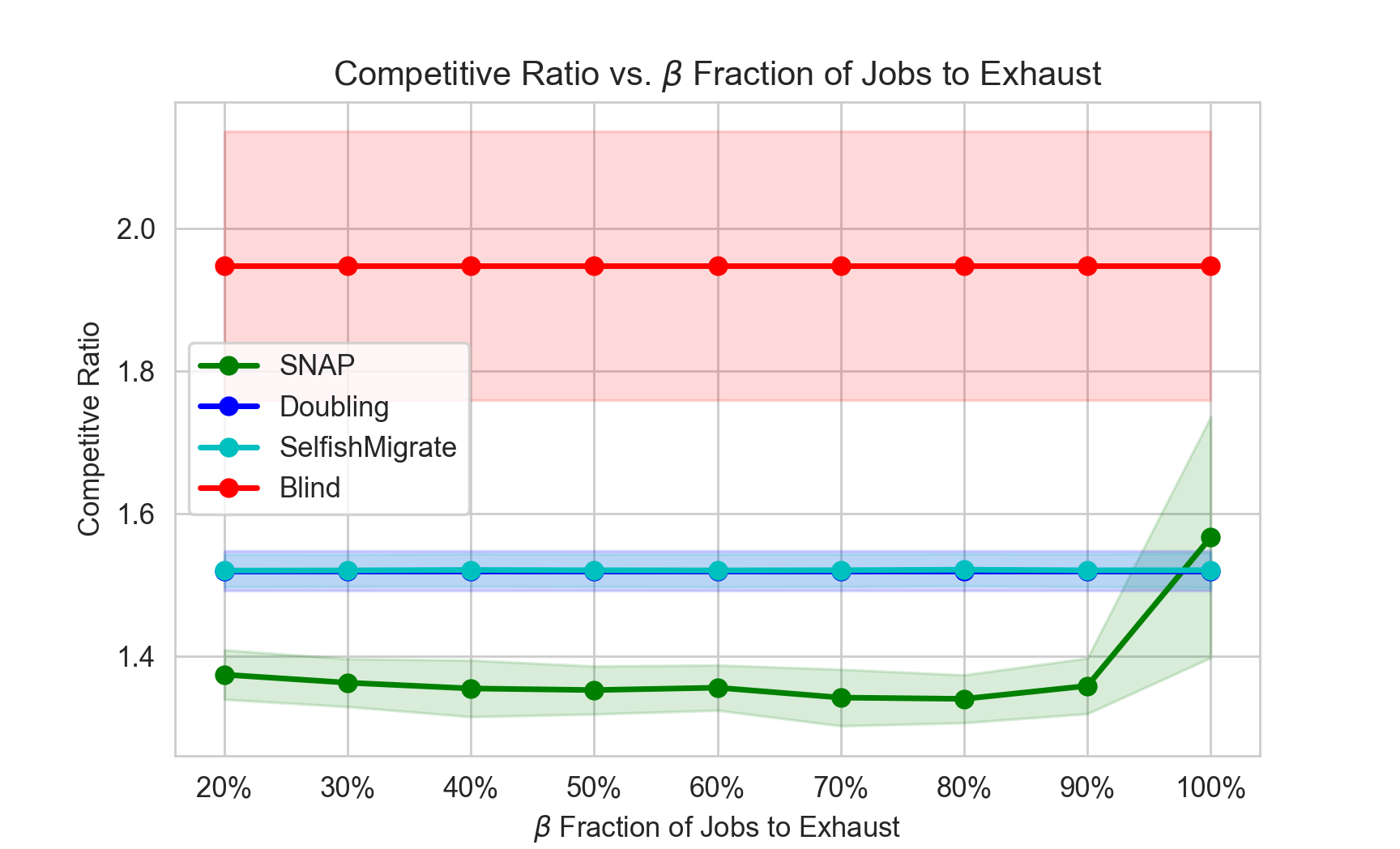}
        \caption{Competitive ratio of SNAP vs. exhaustion parameter $\beta$. Performance remains stable until reaching a $100\%$ exhaustion rate.}
        \label{fig:comp_ratio}
    \end{minipage}
    \hfill
    \begin{minipage}{0.49\textwidth}
        \centering
        \includegraphics[width=\textwidth]{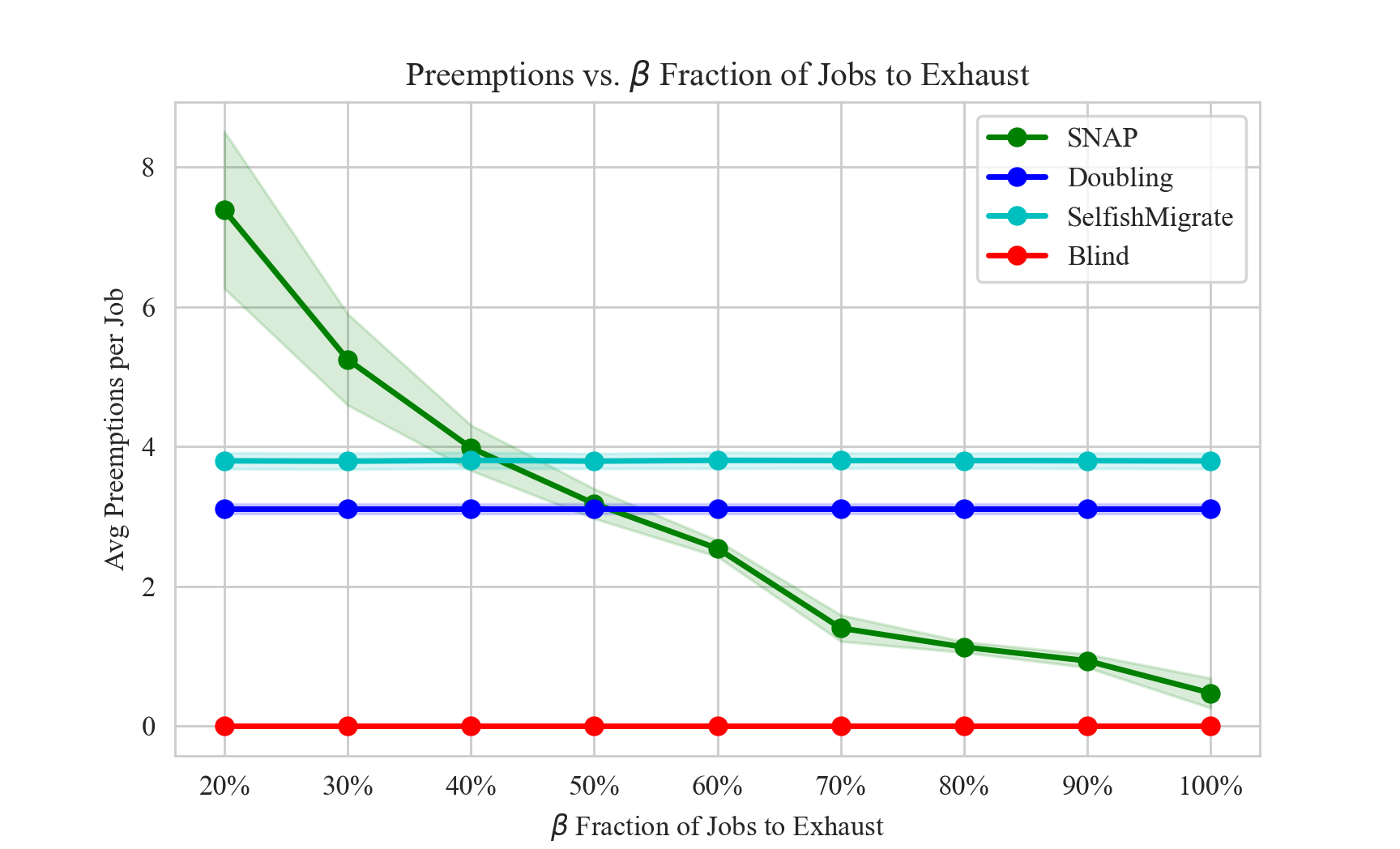}
        \caption{Preemption frequency of SNAP for different values of the exhaustion parameter $\beta$. }
        \label{fig:preemptions}
    \end{minipage}
    \hfill    
\end{figure*}

\begin{figure*}[t] %
    \centering
    \begin{minipage}{0.49\textwidth}
        \centering
       \includegraphics[width=\textwidth]{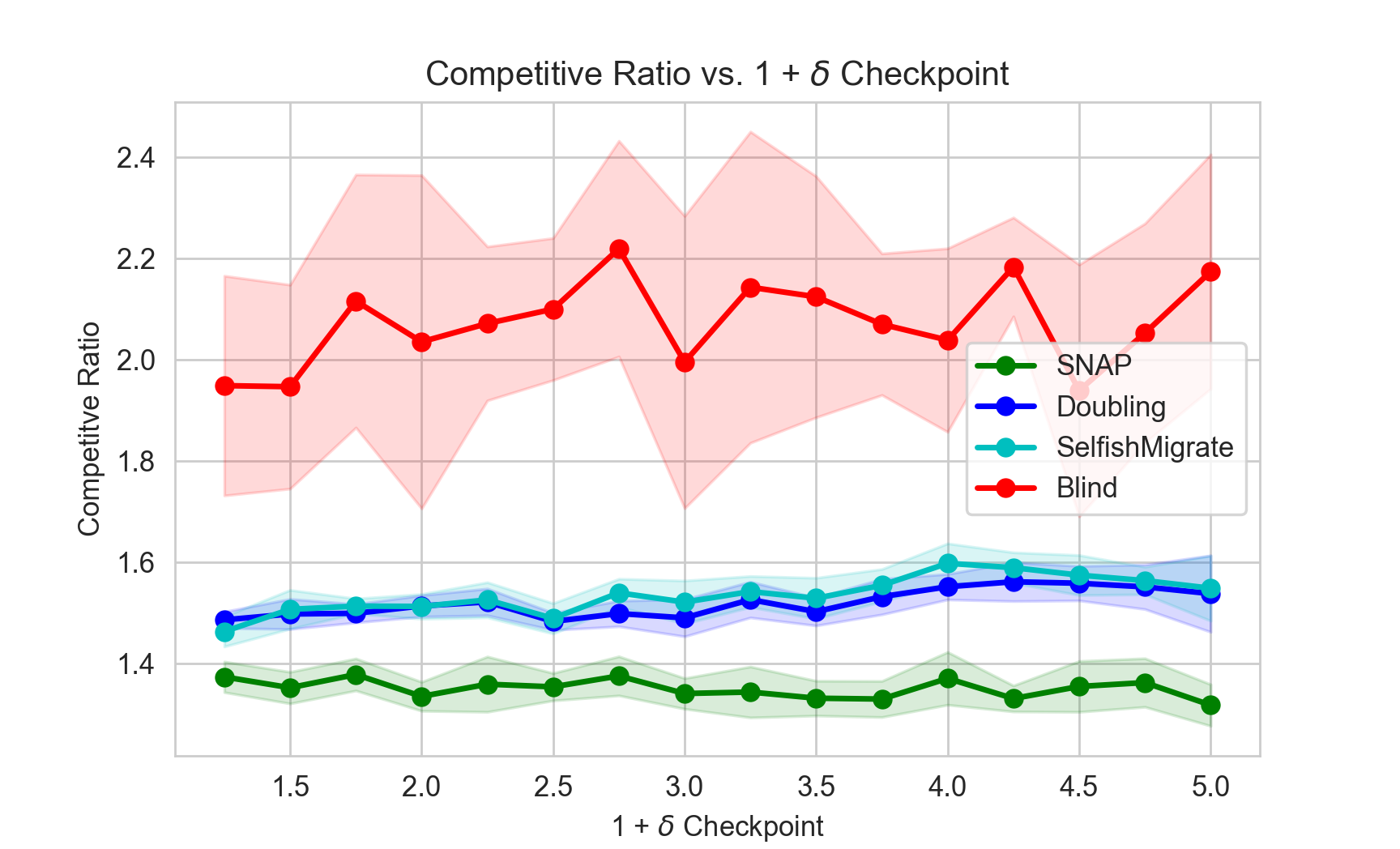}
        \caption{Competitive ratio of the evaluated algorithms for different values of $\delta$.}
        \label{fig:comp_ratio}
    \end{minipage}
    \hfill
    \begin{minipage}{0.49\textwidth}
        \centering
        \includegraphics[width=\textwidth]{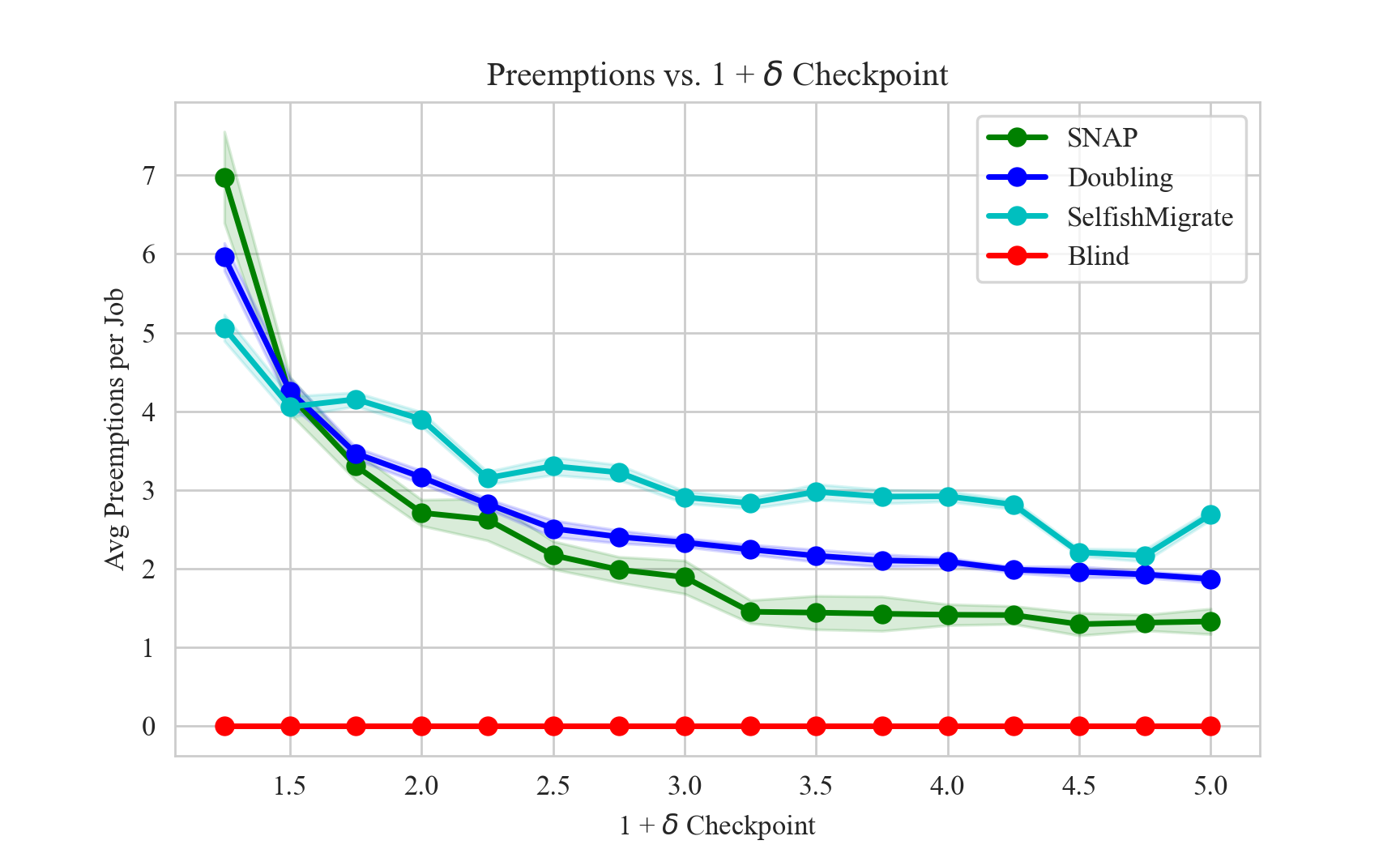}
        \caption{The preemption frequency of the evaluated algorithms for different values of $\delta$.}
        \label{fig:preemptions}
    \end{minipage}
    \hfill    
\end{figure*}

\begin{table}[tbp]
\centering
\caption{Competitive ratio across special job and machine percentages for different algorithms ($\delta = 1$, $\beta = 0.7$, $R = 256$). The Hybrid SNAP (c) algorithm is categorized by a parameter c for larger initial milestones.}  
\small
\setlength{\tabcolsep}{6pt}
\renewcommand{\arraystretch}{1.1}
\begin{tabular}{l c c c c c c}
\toprule
\textbf{Algorithm} & \textbf{0\%} & \textbf{10\%} & \textbf{20\%} & \textbf{30\%} & \textbf{40\%} & \textbf{50\%} \\
\midrule
\hline
Blind & \textbf{1.3076} & 1.7923 & 1.8279 & 1.6468 & 1.6101 & 1.575 \\
Doubling & 1.6983 & 1.5098 & 1.5047 & 1.5375 & 1.5438 & 1.606 \\
\midrule
Hybrid SNAP (1) & 1.5166 & 1.4216 & \textbf{1.3274} & 1.389 & 1.383 & 1.6093 \\
Hybrid SNAP (2) & 1.5353 & 1.6287 & 1.4522 & 1.3347 & 1.4919 & \textbf{1.3347} \\
Hybrid SNAP (4) & 1.4559 & 1.4178 & 1.4824 & 1.4894 & 1.3912 & 1.3448 \\
Hybrid SNAP (6) & 1.5221 & 1.3954 & 1.4255 & \textbf{1.2963} & 1.3854 & 1.3554 \\
Hybrid SNAP (8) & 1.3879 & 1.3994 & 1.3813 & 1.3924 & \textbf{1.3005} & 1.3927 \\
SNAP & 1.4943 & \textbf{1.3562} & 1.359 & 1.4599 & 1.5124 & 1.5315 \\
\bottomrule
\end{tabular}
\end{table}

We conduct further experiments to evaluate the performance of our algorithms. The first experiment investigates the competitive ratio and preemption frequency of SNAP as a function of the exhaustion parameter $\beta$, with $\delta = 1$ and $R = 512$. Interestingly, SNAP exhibits a remarkably stable competitive ratio for most values of $\beta$; the performance degrades sharply only when $\beta$ reaches $100\%$. In contrast, the preemption frequency is highly sensitive to $\beta$, decreasing significantly as $\beta$ increases. Note that since other algorithms do not utilize the $\beta$ parameter, their performance remains constant across these values.

The second experiment compares the algorithms across various values of $\delta$. In our implementation, $\delta$ defines the checkpoints for SNAP and serves as a parameter for the PMLF subprocedure. Intuitively, a larger $\delta$ implies a higher threshold for processing before a preemption occurs. Similarly, our implementation of SelfishMigrate utilizes $\delta$ within its SNAP subprocedure. In the Doubling approach, we simulate a job completion and the release of a new job whenever the processing reaches a power of $(1+\delta)$. Thus, while the algorithms (with the exception of Blind) utilize $\delta$ differently, the parameter serves a similar purpose across all of them. For this experiment, we set $\beta = 0.6$ and $R = 512$. As $\delta$ increases, the competitive ratios of Doubling and SelfishMigrate increase mildly, while that of SNAP decreases slightly. However, the impact of $\delta$ on preemption frequency is more pronounced: the frequency drops consistently as $\delta$ increases across all applicable algorithms.

Our final experiment evaluates Hybrid SNAP by comparing the competitive ratios of the algorithms across different special job and machine frequencies, with parameters set at $\beta = 0.7$ and $R = 256$. We observe that in the absence of special jobs, the Blind algorithm performs effectively. This is because the machines are essentially identical in this scenario, allowing even a blind strategy to achieve efficient load balancing across resources. However, once heterogeneity is introduced (across all tested frequencies from 10\% to 50\%), SNAP and Hybrid SNAP consistently yield the superior competitive ratios. In particular, Hybrid SNAP with $c = 8$ significantly outperforms both Blind and Doubling. This experiment demonstrates the robust performance of our proposed algorithms across varying degrees of system heterogeneity.

\section{Reproducibility and code implementation}
Initial implementations of the experimental code and plotting scripts were generated using Gemini 3 Pro. All generated code was subsequently refactored, manually verified, and rigorously tested to ensure the technical accuracy of the algorithm's implementation and the reproducibility of the results. The full source code is provided in the supplemental material.

\end{document}